\newcommand{\bR}{\ensuremath{\mathbb{R}}}
\newcommand{\bZ}{\ensuremath{\mathbb{Z}}}
\newcommand{\bE}{\ensuremath{\mathbb{E}}}
\newcommand{\trp}{\ensuremath{\mathsf{T}}}
\newcommand{\diag}{\ensuremath{\mathrm{diag}}}
\newcommand{\ind}{\ensuremath{\mathbf{1}}}
\newcommand{\tr}{\ensuremath{\mathrm {tr}}}
\newcommand{\haar}{\ensuremath{\mathsf{Haar}}}
\newcommand{\kl}{\ensuremath{D_{\mathrm{kl}}}}
\newcommand{\cX}{\ensuremath{\mathcal{X}}}
\newcommand{\cE}{\ensuremath{\mathcal{E}}}
\newcommand{\cV}{\ensuremath{\mathcal{V}}}
\newcommand{\col}{\ensuremath{\mathsf{col}}}
\newcommand{\Ob}{\ensuremath{\mathbb O}}
\newtheorem{theorem}{Theorem}[section]
\newtheorem{assumption}{Assumption}[section]
\newtheorem{definition}{Definition}[section]
\newtheorem{corollary}[theorem]{Corollary}
\newtheorem{claim}[theorem]{Claim}
\newtheorem{lemma}[theorem]{Lemma}
\title{Subspace Recovery from Heterogeneous Data with \\ Non-isotropic Noise}
\date{}
\author{John Duchi \\ Stanford University \and Vitaly Feldman \\ Apple \and Lunjia Hu\thanks{Part of this work was performed while LH was interning at Apple.
LH is also supported by Omer Reingold’s NSF Award IIS-1908774, Omer Reingold’s Simons Foundation Investigators Award 689988, and Moses Charikar's Simons Foundation Investigators Award.} \\ Stanford University \and Kunal Talwar \\ Apple}
\begin{document}
\maketitle
\begin{abstract}
  
Recovering linear subspaces from data is a fundamental and important task in statistics and machine learning. Motivated by heterogeneity in Federated Learning settings, we study a basic formulation of this problem: the principal component analysis (PCA),  with a focus on dealing with irregular noise. Our data come from $n$ users with user $i$ contributing data samples from a $d$-dimensional distribution with mean $\mu_i$. Our goal is to recover the linear subspace shared by $\mu_1,\ldots,\mu_n$ using the data points from all users, where every data point from user $i$ is formed by adding an independent mean-zero noise vector to $\mu_i$. If we only have one data point from every user, subspace recovery is information-theoretically impossible when the covariance matrices of the noise vectors can be non-spherical, necessitating additional restrictive assumptions in previous work. We avoid these assumptions by leveraging at least two data points from each user, which allows us to design an efficiently-computable estimator under non-spherical and user-dependent noise. We prove an upper bound for the estimation error of our estimator in general scenarios where the number of data points and amount of noise can vary across users, and prove an information-theoretic error lower bound that not only matches the upper bound up to a constant factor, but also holds even for spherical Gaussian noise. This implies that our estimator does not introduce additional estimation error (up to a constant factor) due to irregularity in the noise. We show additional results for a linear regression problem in a similar setup.

\end{abstract}
\section{Introduction}
We study the problem of learning low-dimensional structure amongst data distributions, given multiple samples from each distribution. This problem arises naturally in settings such as federated learning, where we want to learn from data coming from a set of individuals, each of which has samples from their own distribution.  These distributions however are related to each other, and in this work, we consider the setting when these distributions have means lying in a low-dimensional subspace. The goal is to learn this subspace, even when the distributions may have different (and potentially non-spherical) variances. This heterogeneity can manifest itself in practice as differing number of samples per user, or the variance differing across individuals, possibly depending on their mean. Recovery of the subspace containing the means can in turn help better estimate individual means. In other words, this can allow for learning good estimator for all individual means, by leveraging information from all the individuals.

The irregularity of the noise makes this task challenging even when we have sufficiently many individual distributions.
For example, suppose we have $n$ individuals and for every $i = 1,\ldots,n$, an unknown $\mu_i\in\bR^d$. For simplicity, suppose that $\mu_1,\ldots,\mu_n$ are distributed independently as $N(0, \sigma^2uu^\trp)$ for $\sigma\in\bR_{\ge 0}$ and an unknown unit vector $u\in\bR^d$. In this setting, our goal is to recover the one-dimensional subspace, equivalently the vector $u$. For every $i$, we have a data point $x_i = \mu_i + z_i$ where $z_i\in\bR^d$ is a mean-zero noise vector. If $z_i$ is drawn independently from a spherical Gaussian $N(0,\alpha^2I)$, we can recover the unknown subspace with arbitrary accuracy as $n$ grows to infinity because  $\frac 1n \sum x_ix_i^\trp$ concentrates to $\bE[x_ix_i^\trp] = \sigma^2uu^\trp + \alpha^2I$, whose top eigenvector is $\pm u$. However, if the noise $z_i$ is drawn from a non-spherical distribution, the top eigenvector of $\frac 1n \sum x_ix_i^\trp$ can deviate from $\pm u$ significantly, and to make things worse, if the noise $z_i$ is drawn independently from a non-spherical Gaussian $N(0, \sigma^2(I - uu^\trp) + \alpha^2I)$, then our data points $x_i = \mu_i + z_i$ distribute independently as $N(0, (\sigma^2 + \alpha^2)I)$, giving no information about the vector $u$.\footnote{This information-theoretic impossibility naturally extends to recovering $k$-dimensional subspaces for $k > 1$ by replacing the unit vector $u\in \bR^d$ with a matrix $U\in \bR^{d\times k}$ with orthonormal columns.}

The information-theoretic impossibility in this example however disappears as soon as one has at least two samples from each distribution. Indeed, given two data points $x_{i1} = \mu_i + z_{i1}$ and $x_{i2} = \mu_i + z_{i2}$ from user $i$, as long as the noise $z_{i1},z_{i2}$ are independent and have zero mean, we always have $\bE[x_{i1}x_{i2}^\trp] = \sigma^2uu^\trp$ regardless of the specific distributions of $z_{i1}$ and $z_{i2}$. This allows us to recover the subspace in this example, as long as we have sufficiently many users each contributing at least two examples. 

As this is commonly the case in our motivating examples, we make this assumption of multiple data points per user, and show that this intuition extends well beyond this particular example. We design efficiently computable estimators for this subspace recovery problem given samples from multiple heteroscedastic distributions (see \Cref{sec:contribution} for details). We prove upper bounds on the error of our estimator measured in the \emph{maximum principal angle} (see \Cref{sec:preliminaries} for definition). We also prove an information-theoretic error lower bound, showing that our estimator achieves the optimal error up to a constant factor in general scenarios where the number of data points and the amount of noise can vary across users. Somewhat surprisingly, our lower bound holds even when the noise distributes as spherical Gaussians. Thus non-spherical noise in setting does not lead to increased error. 

We then show that our techniques extend beyond the mean estimation problem to a linear regression setting where for each $\mu_i$, we get (at least two) samples $(x_{ij},x_{ij}^\trp \mu_i + z_{ij})$ where $z_{ij}$ is zero-mean noise from some noise distribution that depends on $i$ and $x_{ij}$. This turns out to be a model that was recently studied in the meta-learning literature under more restrictive assumptions (e.g.\ $z_{ij}$ is independent of $x_{ij}$) \citep{kong2020meta,tripuraneni2021provable,collins2021exploiting,thekumparampil2021sample}. We show a simple estimator achieving an error upper bound matching the ones in prior work without making these restrictive assumptions.

\subsection{Our contributions}
\label{sec:contribution}
\noindent{\bf PCA with heterogeneous and non-isotropic noise: Upper Bounds.}
In the PCA setting, the data points from each user $i$ are drawn from a user-specific distribution with mean $\mu_i\in\bR^d$, and we assume that $\mu_1,\ldots,\mu_n$ lie in a shared $k$-dimensional subspace that we want to recover. 
Specifically, we have $m_i$ data points $x_{ij}\in\bR^d$ from user $i$ for $j = 1,\ldots,m_i$, and each data point is determined by $x_{ij} = \mu_i + z_{ij}$ where $z_{ij}\in\bR^d$ is a noise vector drawn independently from a mean zero distribution. 
We allow the distribution of $z_{ij}$ to be non-spherical and non-identical across different pairs $(i,j)$.
We use $\eta_i\in\bR_{\ge 0}$ to quantify the amount of noise in user $i$'s data points by assuming that $z_{ij}$ is an \emph{$\eta_i$-sub-Gaussian} random variable.

As mentioned earlier, if we only have a single data point from each user, it is information-theoretically impossible to recover the subspace. Thus, we focus on the case where $m_i\ge 2$ for every $i = 1,\ldots,n$. In this setting, for appropriate weights $w_1,\ldots,w_n\in\bR_{\ge 0}$, we compute a matrix $A$:
\begin{equation}
\label{eq:estimator-PCA}
A = \sum_{i=1}^n \frac{w_i}{m_i(m_i - 1)}\sum_{j_1\ne j_2}x_{ij_1}x_{ij_2}^\trp,
\end{equation}
where the inner summation is over all pairs $j_1,j_2\in\{1,\ldots,m_i\}$ satisfying $j_1\ne j_2$. Our estimator is then defined by the subspace spanned by the top-$k$ eigenvectors of $A$. Although the inner summation is over $m_i(m_i -1)$ terms, the time complexity for computing it need not grow quadratically with $m_i$ because of the following equation:
\[
\sum_{j_1\ne j_2}x_{ij_1}x_{ij_2}^\trp = \left(\sum_{j=1}^{m_i}x_{ij}\right)\left(\sum_{j=1}^{m_i}x_{ij}\right)^\trp - \sum_{j=1}^{m_i}x_{ij}x_{ij}^\trp.
\]

The flexibility in the weights $w_1,\ldots,w_n$ allows us to deal with variations in $m_i$ and $\eta_i$ for different users $i$. In the special case where $\eta_1 = \cdots = \eta_n = \eta$ and $m_1 = \cdots = m_n = m$, we choose $w_1 = \cdots = w_n = 1/n$ and we show that our estimator achieves the following error upper bound with success probability at least $1 - \delta$:
\[
\sin \theta = O\left(\left(\frac{\eta\sigma_1}{\sigma_k^2\sqrt m} + \frac{\eta^2}{\sigma_k^2m}\right)\sqrt{\frac{d + \log(1/\delta)}{n}}\right).
\]
Here, $\theta$ is the maximum principal angle between our estimator and the true subspace shared by $\mu_1,\ldots,\mu_n$, and $\sigma_\ell^2$ is the $\ell$-th largest eigenvalue of $\frac 1n\sum_{i=1}^n \mu_i\mu_i^\trp$. Our error upper bound for general $m_i,\eta_i,w_i$ is given in \Cref{thm:PCA-fixed}.

We instantiate our error upper bound to the case where $\mu_1,\ldots,\mu_n$ are drawn iid from a Gaussian distribution $N(0, \sigma^2 UU^\trp)$, where the columns of $U\in\bR^{d\times k}$ form an orthonormal basis of the subspace containing $\mu_1,\ldots,\mu_n$. By choosing the weights $w_1,\ldots,w_n$ according to $m_1,\ldots,m_n$ and $\eta_1,\ldots,\eta_n$, our estimator achieves the error upper bound
\begin{equation}
\label{eq:intro-PCA-upper-2}
\sin\theta \le O\left(\sqrt{\frac{d + \log(1/\delta)}{\sum_{i=1}^n \gamma_i'}}\right)
\end{equation}
under a mild assumption (Assumption~\ref{assumption}),
where $\gamma_i'$ is defined in \Cref{def:r} and often equals $\left(\frac{\eta_i^2}{\sigma^2m_i} +  \frac{\eta_i^4}{\sigma^4m_i^2}\right)^{-1}$. 

\medskip\noindent{\bf PCA: Lower Bounds.}
We show that the error upper bound \eqref{eq:intro-PCA-upper-2} is optimal up to a constant factor by proving a matching information-theoretic lower bound (\Cref{thm:PCA-lower}). 
Our lower bound holds for general $m_i$ and $\eta_i$ that can vary among users $i$, and it holds even when the noise vectors $z_{ij}$ are drawn from spherical Gaussians, showing that our estimator essentially pays no additional cost in error or sample complexity due to non-isotropic noise.

We prove the lower bound using Fano's method on a local packing over the Grassmannian manifold. 
We carefully select a non-trivial hard distribution so that the strength of our lower bound is not affected by a group of fewer than $k$ users each having a huge amount of data points with little noise.

\medskip\noindent{\bf Linear Models.}
While the PCA setting is the main focus of our paper, 
we extend our research to a related linear models setting that has recently been well studied in the meta-learning and federated learning literature \citep{kong2020meta,tripuraneni2021provable,collins2021exploiting,thekumparampil2021sample}. Here,
the user-specific distribution of each user $i$ is parameterized by $\beta_i\in\bR^d$, and we again assume that $\beta_1,\ldots,\beta_n$ lie in a linear subspace that we want to recover. 
From each user $i$ we observe $m_i$ data points $(x_{ij},y_{ij})\in\bR^d\times \bR$ for $j = 1,\ldots,m_i$ drawn from the user-specific distribution satisfying $y_{ij} = x_{ij}^\trp \beta_i + z_{ij}$ for an $O(1)$-sub-Gaussian measurement vector $x_{ij}\in\bR^d$ with zero mean and identity covariance and an $\eta_i$-sub-Gaussian mean-zero noise term $z_{ij}\in\bR$. 
While it may seem that non-isotropic noise is less of a challenge in this setting since each noise term $z_{ij}$ is a scalar, our goal is to handle a challenging scenario where the variances of the noise terms $z_{ij}$ can depend on the \emph{realized} measurements $x_{ij}$, which is a more general and widely applicable setting compared to those in prior work. 
Similarly to the PCA setting, our relaxed assumptions on the noise make it information-theoretically impossible to do subspace recovery if we only have one data point from each user (see \Cref{sec:linear-model}), and thus we assume each user contributes at least two data points.
We use the subspace spanned by the top-$k$ eigenvectors of the following matrix $A$ as our estimator:
\begin{equation}
\label{eq:estimator-linear}
A = \sum_{i=1}^n \frac{w_i}{m_i(m_i - 1)}\sum_{j_1\ne j_2}(x_{ij_1}y_{ij_1})(x_{ij_2}y_{ij_2})^\trp.
\end{equation}
In the special case where $\eta_1 = \cdots = \eta_n = \eta, m_1 = \cdots = m_n = m$, and $\|\beta_i\|_2 \le r$ for all $i$,
our estimator achieves the following error upper bound:
\begin{equation}
\label{eq:intro-linear-model}
\sin \theta \le O\left(\log^3(nd/\delta)\sqrt{\frac{d(r^4 + r^2\eta^2 + \eta^4/m)}{mn\sigma_k^4}} \right),
\end{equation}
where $\sigma_k^2$ is the $k$-th largest eigenvalue of $\frac 1n\sum_{i=1}^n \beta_i\beta_i^\trp$ (\Cref{cor:linear-model}). Our error upper bound extends smoothly to more general cases where $\eta_i$ and $m_i$ vary among users (\Cref{thm:linear-model}). 
Moreover, our upper bound matches the ones in prior work \citep[e.g.][Theorem 3]{tripuraneni2021provable} despite requiring less restrictive assumptions.

\subsection{Related Work}
Principal component analysis under non-isotropic noise has been studied by  \citet{vaswani2017finite,zhang2018heteroskedastic} and \citet{narayanamurthy2020fast}. When translated to our setting, these papers focus on having only one data point from each user and thus they require additional assumptions---either the level of non-isotropy is low, or the noise is 
coordinate-wise independent and the subspace is incoherent. The estimation error guarantees in these papers depend crucially on how well these additional assumptions are satisfied.
\cite{zhu2019high} and \cite{MR4255114} study PCA with noise and missing data, and \cite{MR4206685} and \cite{MR4345128} study eigenvalue and eigenvector estimation under heteroscedastic noise. These four papers all assume that the noise is coordinate-wise independent and the subspace/eigenspace is incoherent.

The linear models setting we consider has recently been studied as a basic setting of meta-learning and federated learning by \citet{kong2020meta,tripuraneni2021provable,collins2021exploiting}, and \citet{thekumparampil2021sample}. These papers all make the assumption that the noise terms $z_{ij}$ are independent of the measurements $x_{ij}$, an assumption that we relax in this paper. 
\citet{collins2021exploiting} and \citet{thekumparampil2021sample} make improvements in sample complexity and error guarantees compared to earlier work by \citet{kong2020meta} and \citet{tripuraneni2021provable}, but \citet{collins2021exploiting} focus on the noiseless setting ($z_{ij} = 0$) and \citet{thekumparampil2021sample} require at least $\Omega(k^2)$ examples per user. \citet{tripuraneni2021provable} and \citet{thekumparampil2021sample} assume that the measurements $x_{ij}$ are drawn from the standard (multivariate) Gaussian distribution, where as \citet{kong2020meta,collins2021exploiting} and our work make the relaxed assumption that $x_{ij}$ are sub-Gaussian with identity covariance, which, in particular, allows the fourth-order moments of $x_{ij}$ to be non-isotropic.
There is a large body of prior work on meta-learning beyond the linear setting
\citep[see e.g.][]{MR3517104,tripuraneni2020theory,du2020few}.

When collecting data from users, it is often important to ensure that private information about users is not revealed through the release of the learned estimator. Many recent works proposed and analyzed estimators that achieve \emph{user-level differential privacy} in settings including mean estimation \citep{levy2021learning,esfandiari2021tight}, meta-learning \citep{jain2021differentially} and PAC learning \citep{ghazi2021user}. Recently, ~\citet{CummingsFMT21} study one-dimensional mean estimation in a setting similar to ours, under a differential privacy constraint.

The matrix $A$ we define in \eqref{eq:estimator-PCA} is a weighted sum of $A_i:=\frac{1}{m_i(m_i - 1)}\sum_{j_1\ne j_2}x_{ij_1}x_{ij_2}^\trp$ over users $i = 1,\ldots,n$, and each $A_i$ has the form of a $U$-statistic \citep{MR15746,MR26294}. 
$U$-statistics have been applied to many statistical tasks including tensor completion \citep{MR4029842} and various testing problems \citep{MR2816719,MR4206673,schrab2022efficient}.
In our definition of $A_i$, we do \emph{not} make the assumption that the distributions of $x_{i1},\ldots,x_{im_i}$ are identical although the assumption is commonly used in applications of $U$-statistics.
The matrix $A$ in \eqref{eq:estimator-linear} is also a weighted sum of $U$-statistics where we again do not make the assumption of identical distribution.

\subsection{Paper Organization}
In \Cref{sec:preli}, we formally define the maximum principal angle and other notions we use throughout the paper. Our results in the PCA setting and the linear models setting are presented in \Cref{sec:PCA,sec:linear-model}, respectively. We defer most technical proofs to the appendices.
\section{Preliminaries}
\label{sec:preli}
We use $\|A\|$ to denote the spectral norm of a matrix $A$, and use $\|u\|_2$ to denote the $\ell_2$ norm of a vector $u$. For positive integers $k \le d$, we use $\Ob_{d,k}$ to denote the set of matrices $A\in\bR^{d\times k}$ satisfying $A^\trp A = I_k$, where $I_k$ is the $k\times k$ identity matrix. We use $\Ob_d$ to denote $\Ob_{d,d}$, which is the set of $d\times d$ orthogonal matrices.
We use $\col(A)$ to denote the linear subspace spanned by the columns of a matrix $A$.
We use the base-$e$ logarithm throughout the paper.
\label{sec:preliminaries}

\paragraph{Maximum Principal Angle.}
Let $U, \hat U \in \Ob_d$ be two orthogonal matrices. Suppose the columns of $U$ and $\hat U$ are partitioned as $U = [U_1\ U_2], \hat U = [\hat U_1\ \hat U_2]$ where $U_1,\hat U_1\in \Ob_{d,k}$ for an integer $k$ satisfying $0 < k < d$. 
Let $\Gamma$ (resp.\ $\hat \Gamma$) be the $k$-dimensional linear subspace spanned by the columns of $U_1$ (resp.\ $\hat U_1$). 
Originating from \citep{MR1503705},
the \emph{maximum principal angle} $\theta\in [0,\pi/2]$ between $\Gamma$ and $\hat \Gamma$, denoted by $\angle(\Gamma, \hat\Gamma)$ or $\angle(U_1,\hat U_1)$, is defined by $\sin\theta = \|U_1U_1^\trp - \hat U_1\hat U_1^\trp\| = \|U_1^\trp \hat U_2\| = \|U_2^\trp \hat U_1\|$. It is not hard to see that the maximum principal angle depend only on the subspaces $\Gamma,\hat\Gamma$ and not on the choices of $U$ and $\hat U$, and $\sin\angle(\Gamma,\hat \Gamma)$ is a natural metric between $k$-dimensional subspaces (see \Cref{sec:principal-angles} for more details where we discuss the definition of principal angles for any two subspaces with possibly different dimensions).

With the definition of the maximum principal angle, we can now state a variant of the Davis–Kahan $\sin\theta$ theorem \citep{davis1970rotation} that will be useful in our analysis (see \Cref{sec:proof-DK} for proof):
\begin{restatable}[Variant of Davis–Kahan $\sin\theta$ theorem]{theorem}{thmDK}
\label{thm:DK}
Let $A, \hat A\in\bR^{d\times d}$ be symmetric matrices. 
Let $\lambda_i$ denote the $i$-th largest eigenvalue of $A$.
For a positive integer $k$ smaller than $d$,
let $\theta$ denote the maximum principal angle between the subspaces spanned by the top-$k$ eigenvectors of $A$ and $\hat A$.
Assuming $\lambda_k > \lambda_{k+1}$,
\[
\sin\theta \le \frac{2\|A - \hat A\|}{\lambda_k - \lambda_{k+1}} .
\]
\end{restatable}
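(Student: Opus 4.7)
The plan is to reduce the statement to the ``classical'' Davis--Kahan inequality, whose eigenvalue gap involves one eigenvalue of $A$ and one of $\hat A$, and then to convert that into a gap depending only on the spectrum of $A$ by paying a factor of two. The second reduction is a now-standard trick (a variant of the Yu--Wang--Samworth observation) that uses only Weyl's inequality together with the trivial bound $\sin\theta\le 1$.

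\textbf{Step 1 (Sylvester identity).} Let $U_1,\hat U_1\in\Ob_{d,k}$ have as columns orthonormal top-$k$ eigenvectors of $A$ and $\hat A$, and complete them to $[U_1\ U_2],[\hat U_1\ \hat U_2]\in\Ob_d$ using the bottom $d-k$ eigenvectors. Let $\Lambda_1$ (resp.\ $\hat\Lambda_2$) be the diagonal matrix of the top $k$ eigenvalues of $A$ (resp.\ the bottom $d-k$ eigenvalues of $\hat A$). Setting $E=A-\hat A$ and $X=U_1^{\trp}\hat U_2$, the definition of the maximum principal angle recalled in the preliminaries gives $\sin\theta=\|X\|$. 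Multiplying the identity $A=\hat A+E$ on the left by $U_1^{\trp}$ and on the right by $\hat U_2$, and using $U_1^{\trp}A=\Lambda_1 U_1^{\trp}$ together with $\hat A\hat U_2=\hat U_2\hat\Lambda_2$, yields
\[
\Lambda_1 X - X\hat\Lambda_2 \;=\; U_1^{\trp} E\,\hat U_2.
\]

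\textbf{Step 2 (Top singular pair).} Let $(u,v)$ be unit singular vectors of $X$ attaining $\|X\|$, so $Xv=\|X\|u$ and $X^{\trp}u=\|X\|v$. Contracting the Sylvester identity with $u^{\trp}$ on the left and $v$ on the right collapses the matrix product:
\[
\|X\|\bigl(u^{\trp}\Lambda_1 u - v^{\trp}\hat\Lambda_2 v\bigr) \;=\; u^{\trp}U_1^{\trp}E\hat U_2 v \;\le\; \|E\|.
\]
Since every diagonal entry of $\Lambda_1$ is at least $\lambda_k$ and every diagonal entry of $\hat\Lambda_2$ is at most $\hat\lambda_{k+1}$, the parenthesised expression is at least $\lambda_k-\hat\lambda_{k+1}$. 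Hence, whenever that quantity is positive,
\[
\|X\|\,(\lambda_k-\hat\lambda_{k+1}) \;\le\; \|E\|.
\]

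\textbf{Step 3 (Remove dependence on $\hat A$).} Weyl's inequality gives $\hat\lambda_{k+1}\le\lambda_{k+1}+\|E\|$, so $\lambda_k-\hat\lambda_{k+1}\ge(\lambda_k-\lambda_{k+1})-\|E\|$. I split on $\|E\|$: if $\|E\|\le(\lambda_k-\lambda_{k+1})/2$, Step 2 gives
\[
\sin\theta = \|X\| \;\le\; \frac{\|E\|}{(\lambda_k-\lambda_{k+1})/2} \;=\; \frac{2\|A-\hat A\|}{\lambda_k-\lambda_{k+1}};
\]
otherwise $2\|E\|/(\lambda_k-\lambda_{k+1})>1\ge\sin\theta$, so the target inequality is automatic.

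The only non-routine step is the singular-pair extraction of Step 2, where one must be careful that $X$ is rectangular and that the right and left actions of $\Lambda_1,\hat\Lambda_2$ collapse correctly. The rest is bookkeeping with the spectral decomposition, Weyl's inequality, and the trivial bound $\sin\theta\le 1$.
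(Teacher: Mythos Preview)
Your proof is correct and follows essentially the same route as the paper: both reduce to the classical Davis--Kahan bound $\sin\theta \le \|A-\hat A\|/(\lambda_k - \hat\lambda_{k+1})$ and then use Weyl's inequality together with the trivial bound $\sin\theta\le 1$ to replace $\hat\lambda_{k+1}$ by $\lambda_{k+1}$ at the cost of a factor~$2$. The only difference is that the paper simply cites the classical Davis--Kahan theorem for the first step, whereas you derive it from scratch via the Sylvester identity and a top-singular-pair argument; this makes your version more self-contained but does not change the underlying strategy.
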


\paragraph{Sub-Gaussian and sub-exponential distributions.}
We say a random variable $x\in\bR$ with expectation $\bE[x]\in\bR$ has \emph{sub-Gaussian} constant $b\in\bR_{\ge 0}$ if
$\bE[|x - \bE[x]|^p]^{1/p} \le b\sqrt p$ for every $p\ge 1$. We say $x$ has \emph{sub-exponential} constant $b\in\bR_{\ge 0}$ if $\bE[|x - \bE[x]|^p]^{1/p} \le bp$ for every $p \ge 1$. We say a random vector $y\in\bR^d$ has sub-Gaussian (resp.\ sub-exponential) constant $b\in\bR_{\ge 0}$ if for every unit vector $u\in\bR^d$ (i.e., $\|u\|_2 = 1$), the random variable $u^\trp y\in\bR$ has sub-Gaussian (resp.\ sub-exponential) constant $b$. We say $y$ is \emph{$b$-sub-Gaussian} (resp.\ \emph{$b$-sub-exponential}) if it has sub-Gaussian (resp.\ sub-exponential) constant $b$.
\section{Principal Component Analysis}
\label{sec:PCA}
In the principal component analysis (PCA) setting, our goal is to recover the $k$-dimensional subspace $\Gamma$ spanned by the user-specific means $\mu_1,\ldots,\mu_n\in\bR^d$ of the $n$ users. 
From each user $i$, we have $m_i \ge 2$ data points
\begin{equation}
\label{eq:data-pca}
x_{ij} = \mu_i + z_{ij}\quad \textnormal{for}\ j = 1,\ldots,m_i.
\end{equation}
We assume the noise $z_{ij}\in\bR^d$ is drawn independently from a mean zero distribution with sub-Gaussian constant $\eta_i$. We do \emph{not} assume that the variance of $z_{ij}$ is the same along every direction, \emph{nor} do we assume that the distribution of $z_{ij}$ is the same for different $(i,j)$. We first show an error upper bound for our estimator when the user-specific means $\mu_1,\ldots,\mu_n$ are deterministic vectors (\Cref{sec:PCA-fixed}) and then apply this result to the case where $\mu_1,\ldots,\mu_n$ are drawn from a sub-Gaussian distribution (\Cref{sec:PCA-sub-Gaussian}). In \Cref{sec:PCA-lower} we prove an information-theoretic error lower bound matching our upper bound.

\subsection{Fixed User-Specific Means}
\label{sec:PCA-fixed}
We first focus on the case where $\mu_1,\ldots,\mu_n$ are deterministic vectors. In this case, all the randomness in the data comes from the noise $z_{ij}$. Our estimator is the subspace $\hat \Gamma$ spanned by the top-$k$ eigenvectors of $A$ defined in \eqref{eq:estimator-PCA}. For $\ell = 1,\ldots,d$, we define $\sigma_\ell^2$ to be the $\ell$-th largest eigenvalue of $\sum_{i=1}^n w_i\mu_i\mu_i^\trp$. Since $\mu_1,\ldots,\mu_n$ share a $k$-dimensional subspace, $\sigma_{\ell} = 0$ for $\ell > k$. We prove the following general theorem on the error guarantee of our estimator:
\begin{restatable}{theorem}{thmPCAfixed}
\label{thm:PCA-fixed}
Define $\xi^2 = \|\sum_{i=1}^n w_i^2\mu_i\mu_i^\trp \eta_i^2/m_i\|$ and let $\theta$ denote the maximum principal angle between our estimator $\hat \Gamma$ and the true subspace $\Gamma$ spanned by $\mu_1,\ldots,\mu_n$. For any $\delta\in (0,1/2)$, with probability at least $1-\delta$,
\begin{equation}
\label{eq:PCA-fixed}
\sin \theta = O\left(\sigma_k^{-2}\sqrt{(d + \log(1/\delta))\left(\xi^2 + \sum_{i=1}^n \frac{w_i^2\eta_i^4}{m_i^2}\right)} + \sigma_k^{-2}(d + \log(1/\delta))\max_i \frac{w_i\eta_i^2}{m_i}\right).
\end{equation}
\end{restatable}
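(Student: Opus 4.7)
The plan is to apply \Cref{thm:DK} to $A$ and $M := \bE[A]$. Since $z_{ij_1}$ and $z_{ij_2}$ are independent and mean zero for $j_1 \ne j_2$, one has $M = \sum_{i=1}^n w_i \mu_i \mu_i^\trp$, whose top-$k$ eigenspace is exactly $\Gamma$ and whose relevant eigenvalue gap is $\sigma_k^2 - 0 = \sigma_k^2$. Thus \Cref{thm:DK} gives $\sin\theta \le 2\|A - M\|/\sigma_k^2$, so it suffices to bound $\|A - M\|$ by the bracketed expression in~\eqref{eq:PCA-fixed} up to constants. Writing $\bar z_i := m_i^{-1}\sum_j z_{ij}$ and using $\sum_{j_1 \ne j_2} z_{ij_1}z_{ij_2}^\trp = m_i^2 \bar z_i \bar z_i^\trp - \sum_j z_{ij}z_{ij}^\trp$, substituting $x_{ij} = \mu_i + z_{ij}$ into~\eqref{eq:estimator-PCA} splits
\[
A - M = (B + B^\trp) + R, \quad B := \sum_{i=1}^n w_i \mu_i \bar z_i^\trp, \quad R := \sum_{i=1}^n \frac{w_i}{m_i(m_i - 1)}\sum_{j_1 \ne j_2} z_{ij_1}z_{ij_2}^\trp,
\]
with $\bE[R] = 0$. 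I would bound the two pieces separately.

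For the linear piece, the column space of $B$ lies in $\Gamma$, so $\|B\| = \|U_\Gamma^\trp B\|$ for any $U_\Gamma \in \Ob_{d,k}$ whose columns span $\Gamma$. For fixed unit vectors $a \in \bR^k$ and $u \in \bR^d$, the scalar $a^\trp U_\Gamma^\trp B u = \sum_{i,j}\tfrac{w_i}{m_i}(a^\trp U_\Gamma^\trp \mu_i)(z_{ij}^\trp u)$ is a sub-Gaussian sum in the independent $z_{ij}$'s with variance proxy $\sum_i \tfrac{w_i^2 \eta_i^2}{m_i}(a^\trp U_\Gamma^\trp \mu_i)^2 \le \xi^2$, uniformly in $a, u$. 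A standard $\epsilon$-net argument over $S^{k-1} \times S^{d-1}$ (log-cardinality $O(d)$ since $k \le d$) then yields $\|B\| = O(\xi\sqrt{d + \log(1/\delta)})$ with probability at least $1 - \delta/2$.

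For the quadratic piece, $R$ is symmetric, so $\|R\| \le 2\sup_{u \in \mathcal N}|u^\trp R u|$ over a $1/4$-net $\mathcal N$ of $S^{d-1}$ of log-cardinality $O(d)$. For fixed unit $u$, the rescaled scalars $\tilde\zeta_{ij} := (u^\trp z_{ij})/\eta_i$ are independent, mean zero, and sub-Gaussian with constant $\le 1$, and $u^\trp R u = \tilde\zeta^\trp T \tilde\zeta$ where $T$ is block-diagonal with block $i$ equal to $\tfrac{w_i\eta_i^2}{m_i(m_i-1)}(J_{m_i} - I_{m_i})$. Direct computations give $\|T\|_F^2 \le 2\sum_i w_i^2\eta_i^4/m_i^2$, $\|T\| = \max_i w_i\eta_i^2/m_i$, and $\bE[\tilde\zeta^\trp T\tilde\zeta] = 0$ since $T$ has zero diagonal. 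Applying the Hanson--Wright inequality to $\tilde\zeta^\trp T\tilde\zeta$ and union-bounding over $\mathcal N$ yields
\[
\|R\| = O\!\left(\sqrt{(d+\log(1/\delta))\sum_i w_i^2\eta_i^4/m_i^2} + (d+\log(1/\delta))\max_i \frac{w_i\eta_i^2}{m_i}\right)
\]
with probability at least $1 - \delta/2$. Combining with the linear bound and \Cref{thm:DK} gives~\eqref{eq:PCA-fixed}. The main technical subtlety is the quadratic piece: a direct matrix Bernstein bound on $R$ would incur an extra $\sqrt{\log d}$ factor through its $\log(d/\delta)$ tail, so exploiting the quadratic-form structure via Hanson--Wright (with the diagonal rescaling handling heterogeneous $\eta_i$) combined with an $\epsilon$-net on $S^{d-1}$ is what produces the claimed $\sqrt{d+\log(1/\delta)}$ scaling in the Frobenius term.
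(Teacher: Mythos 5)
Your proposal is correct, and its skeleton is the same as the paper's: the Davis--Kahan reduction $\sin\theta \le 2\|A - \sum_i w_i\mu_i\mu_i^\trp\|/\sigma_k^2$, the identical decomposition into the cross terms $\sum_i w_i\mu_i\bar z_i^\trp + (\cdot)^\trp$ and the pure-noise term $R$, and a net-plus-scalar-concentration bound for each piece (your restriction of the left net to $S^{k-1}$ via $U_\Gamma$ is a harmless refinement of the paper's net over $O'_d\times O'_d$; both give $O(d)$ log-cardinality and the same $\xi\sqrt{d+\log(1/\delta)}$ bound). The one genuine difference is the quadratic term: the paper splits $R$ into $\sum_i \frac{w_i}{m_i(m_i-1)}(\sum_j z_{ij})(\sum_j z_{ij})^\trp$ minus $\sum_i \frac{w_i}{m_i(m_i-1)}\sum_j z_{ij}z_{ij}^\trp$ and bounds each around its mean via \Cref{lm:matrix-concentration} (a net over $S^{d-1}$ with the scalar sub-exponential Bernstein bound per direction), whereas you keep $R$ intact and apply Hanson--Wright per net direction to the block-diagonal, zero-diagonal quadratic form $\tilde\zeta^\trp T\tilde\zeta$, with the correct computations $\|T\|_F^2 \le 2\sum_i w_i^2\eta_i^4/m_i^2$ (using $m_i\ge 2$) and $\|T\| = \max_i w_i\eta_i^2/m_i$. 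Both routes yield exactly the bound in \eqref{eq:PCA-fixed}; yours avoids the Gram-minus-diagonal split and any recentering (since $T$ has zero diagonal), while the paper's reuses a general-purpose lemma that also serves elsewhere (e.g., \Cref{claim:sigma-k} and \Cref{lm:matrix-norm}). One small caveat: your closing remark suggests Hanson--Wright is what rescues the $\sqrt{d+\log(1/\delta)}$ scaling, but the paper attains the same scaling without it --- the key point in both arguments is the union bound over a size-$2^{O(d)}$ net with a one-dimensional Bernstein-type tail, not the specific choice of Hanson--Wright versus a sub-exponential sum bound.
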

We can simplify the bound in \Cref{thm:PCA-fixed} by considering special cases:
\begin{corollary}
\label{cor:PCA-fixed}
Assume $\max\{\eta_1/\sqrt{m_1}, \ldots, \eta_n/\sqrt{m_n}\} = t$ and we choose $w_1 = \cdots = w_n = 1/n$. For any $\delta\in (0,1/2)$, with probability at least $1-\delta$,
\begin{equation}
\label{eq:PCA-fixed-1}
\sin \theta = O \left(\frac{t\sigma_1 + t^2}{\sigma_k^2}\sqrt{\frac{d + \log(1/\delta)}{n}}\right).
\end{equation}
In particular, when $\eta_1 = \cdots = \eta_n = \eta$, and $m_1 = \cdots = m_n = m$, error bound \eqref{eq:PCA-fixed-1} becomes
\[
\sin \theta = O\left(\left(\frac{\eta\sigma_1}{\sigma_k^2\sqrt m} + \frac{\eta^2}{\sigma_k^2m}\right)\sqrt{\frac{d + \log(1/\delta)}{n}}\right).
\]
\end{corollary}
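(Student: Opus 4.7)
The plan is to derive Corollary~\ref{cor:PCA-fixed} as a direct specialization of Theorem~\ref{thm:PCA-fixed}. Setting $w_i = 1/n$ and using the hypothesis $\eta_i^2/m_i \le t^2$ for every $i$, I would bound each of the three quantities appearing inside the $O(\cdot)$ of \eqref{eq:PCA-fixed} in turn. First, for $\xi^2$, I would factor out $\eta_i^2/m_i \le t^2$ from each summand and use the spectral-norm bound
\[
\xi^2 = \left\|\sum_{i=1}^n \frac{\eta_i^2}{n^2 m_i}\mu_i\mu_i^\trp\right\| \le \frac{t^2}{n^2}\left\|\sum_{i=1}^n \mu_i\mu_i^\trp\right\| = \frac{t^2 \sigma_1^2}{n},
\]
where the last equality uses the definition of $\sigma_1^2$ as the top eigenvalue of $\tfrac1n\sum_i \mu_i\mu_i^\trp$ (under the choice $w_i = 1/n$). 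Second, I would bound $\sum_i w_i^2\eta_i^4/m_i^2 \le \frac{t^4}{n}$ term-by-term, and third, $\max_i w_i\eta_i^2/m_i \le t^2/n$.

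Plugging these into \eqref{eq:PCA-fixed} and pulling the $1/\sqrt n$ out of the square root gives
\[
\sin\theta = O\!\left(\frac{t\sigma_1 + t^2}{\sigma_k^2}\sqrt{\frac{d+\log(1/\delta)}{n}} + \frac{t^2(d+\log(1/\delta))}{\sigma_k^2 n}\right).
\]
The main (and essentially only) subtlety is absorbing the second, lower-order term into the first. The standard trick here is to observe that Corollary~\ref{cor:PCA-fixed} is vacuous whenever its claimed right-hand side exceeds a fixed constant (since $\sin\theta \le 1$ trivially), so I may assume $\tfrac{t^2}{\sigma_k^2}\sqrt{(d+\log(1/\delta))/n} \lesssim 1$. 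Multiplying both sides by $\sqrt{(d+\log(1/\delta))/n}$ then shows that the residual $\tfrac{t^2(d+\log(1/\delta))}{\sigma_k^2 n}$ is at most a constant times $\tfrac{t^2}{\sigma_k^2}\sqrt{(d+\log(1/\delta))/n}$, which is already subsumed by the leading expression.

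The final specialization, with $\eta_1=\cdots=\eta_n=\eta$ and $m_1=\cdots=m_n=m$, is then obtained simply by substituting $t = \eta/\sqrt m$, yielding $t\sigma_1 = \eta\sigma_1/\sqrt m$ and $t^2 = \eta^2/m$, which is exactly the stated bound. I do not expect any real technical obstacle beyond handling the lower-order term just described; all steps are algebraic manipulations of the general estimate in Theorem~\ref{thm:PCA-fixed}.
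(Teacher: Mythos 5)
Your overall route is the same as the paper's: specialize \Cref{thm:PCA-fixed} with $w_i=1/n$, bound $\xi^2\le t^2\sigma_1^2/n$, $\sum_i w_i^2\eta_i^4/m_i^2\le t^4/n$, $\max_i w_i\eta_i^2/m_i\le t^2/n$, and then absorb the resulting lower-order term using $\sin\theta\le 1$; the final substitution $t=\eta/\sqrt m$ is also identical. The problem is the absorption step. From your non-vacuity assumption $\frac{t^2}{\sigma_k^2}\sqrt{(d+\log(1/\delta))/n}\le C$, multiplying by $\sqrt{(d+\log(1/\delta))/n}$ gives only
\[
\frac{t^2(d+\log(1/\delta))}{\sigma_k^2 n}\;\le\; C\sqrt{\frac{d+\log(1/\delta)}{n}},
\]
which is \emph{not} ``at most a constant times $\frac{t^2}{\sigma_k^2}\sqrt{(d+\log(1/\delta))/n}$'' unless $t^2/\sigma_k^2$ is bounded below by a constant, and it is not subsumed by the claimed right-hand side unless $\frac{t\sigma_1+t^2}{\sigma_k^2}$ is bounded below by a constant; neither is assumed. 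Concretely, with $t^2/\sigma_k^2=\epsilon$ and $(d+\log(1/\delta))/n=1/\epsilon$ your non-vacuity condition holds (the quantity equals $\sqrt\epsilon$), yet the residual equals $1$ while $\frac{t^2}{\sigma_k^2}\sqrt{(d+\log(1/\delta))/n}=\sqrt\epsilon$ is arbitrarily small, so the inequality you assert fails.

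The missing ingredient is that the residual must be absorbed into the $t\sigma_1$ term, using $\sigma_k\le\sigma_1$. The paper does this by writing $\min\{1,y\}\le\sqrt y$ with $y=\frac{t^2(d+\log(1/\delta))}{\sigma_k^2 n}$ (the $\min$ is legitimate since $\sin\theta\le 1$), which gives a contribution of at most $\frac{t}{\sigma_k}\sqrt{\frac{d+\log(1/\delta)}{n}}\le\frac{t\sigma_1}{\sigma_k^2}\sqrt{\frac{d+\log(1/\delta)}{n}}$. Your non-vacuity trick can be repaired along the same lines: in the non-vacuous case $\frac{t\sigma_1}{\sigma_k^2}\sqrt{(d+\log(1/\delta))/n}\le C$, and since $\sigma_k^2/\sigma_1\le\sigma_k$ this yields $t\sqrt{(d+\log(1/\delta))/n}\le C\sigma_k$, hence
\[
\frac{t^2(d+\log(1/\delta))}{\sigma_k^2 n}\;\le\; C\,\frac{t}{\sigma_k}\sqrt{\frac{d+\log(1/\delta)}{n}}\;\le\; C\,\frac{t\sigma_1}{\sigma_k^2}\sqrt{\frac{d+\log(1/\delta)}{n}}.
\]
Either way, the comparison with the $t\sigma_1$ term via $\sigma_1\ge\sigma_k$ is what makes the step go through; as written, your argument omits it and the stated chain of inequalities is false in the regime $t\ll\sigma_k$, $n\ll d$.
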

We defer the complete proof of \Cref{thm:PCA-fixed} and \Cref{cor:PCA-fixed} to \Cref{sec:proof-PCA-fixed,sec:proof-cor-PCA-fixed}. Our proof is based on the Davis-Kahan $\sin\theta$ theorem (\Cref{thm:DK}). Since $\sigma_{k+1}^2 = 0$, \Cref{thm:DK} implies
\begin{equation}
\label{eq:PCA-DK}
\sin\theta \le \frac{2\|A - \sum_{i=1}^n w_i \mu_i\mu_i^\trp\|}{\sigma_k^2}.
\end{equation}
This reduces our goal to proving an upper bound on the spectral norm of $A - \sum_{i=1}^n w_i\mu_i\mu_i^\trp$. 
Since for distinct $j_1$ and $j_2$ in $\{1,\ldots,m_i\}$ we have $\bE[x_{ij_1}x_{ij_2}^\trp] = \mu_i\mu_i^\trp$, our construction of $A$ in \eqref{eq:estimator-PCA} guarantees $\bE[A] = \sum_{i=1}^nw_i\mu_i\mu_i^\trp$. Therefore, our goal becomes controlling the deviation of $A$ from its expectation, and we achieve this goal using techniques for matrix concentration inequalities.

\subsection{Sub-Gaussian User-Specific Means}
\label{sec:PCA-sub-Gaussian}
We apply our error upper bound in \Cref{thm:PCA-fixed} to the case where $\mu_1,\ldots,\mu_n\in\bR^d$ are drawn iid from $N(0,\sigma^2 UU^\trp)$ for an unknown $U\in \Ob_{d,k}$. 
We still assume that each data point $x_{ij}\in\bR^d$ is generated by adding a noise vector $z_{ij}\in\bR^d$ to the user-specific mean $\mu_i$ as in \eqref{eq:data-pca}. We do \emph{not} assume that the noise vectors $(z_{ij})_{1\le i \le n, 1 \le j \le m_i}$ are independent of the user-specific means $(\mu_i)_{1\le i \le n}$, but we assume that when conditioned on $(\mu_i)_{1 \le i \le n}$, every noise vector $z_{ij}$ independently follows a distribution with mean zero and sub-Gaussian constant $\eta_i$.
We use the same estimator $\hat \Gamma$ as before: $\hat\Gamma$ is the subspace spanned by the top-$k$ eigenvectors of $A$ defined in \eqref{eq:estimator-PCA}. We determine the optimal weights $w_1,\ldots,w_n$ in \eqref{eq:estimator-PCA} as long as $m_1,\ldots,m_n$ and $\eta_1,\ldots,\eta_n$ satisfy a mild assumption (Assumption~\ref{assumption}), achieving an error upper bound in \Cref{thm:PCA-weights-chosen}. In the next subsection, we prove an error lower bound (\Cref{thm:PCA-lower}) that matches our upper bound (\Cref{thm:PCA-weights-chosen}) up to a constant factor, assuming $d \ge (1 + \Omega(1))k$ and $\delta = \Theta(1)$.

We prove our error upper bound in a slightly more general setting than $\mu_1,\ldots,\mu_n$ drawn iid from $N(0,\sigma^2UU^\trp)$. Specifically, we make the following assumption on the distribution of $\mu_1,\ldots,\mu_n$:
\begin{assumption}
\label{assumption:mu}
The user-specific means $\mu_1,\ldots,\mu_n\in\bR^d$ are mean-zero independent random vectors supported on an unknown $k$-dimensional subspace $\Gamma$. Moreover, for a parameter $\sigma > 0$, for every $i = 1,\ldots,n$, $\mu_i$ has sub-Gaussian constant $O(\sigma)$, and the $k$-th largest eigenvalue of $\bE[\mu_i\mu_i^\trp]$ is at least $\sigma^2$. 
\end{assumption}
Under this assumption, we have the following lower bound on the $\sigma_k^2$ in \Cref{thm:PCA-fixed} (see \Cref{sec:proof-sigma-k} for proof):

\begin{claim}
\label{claim:sigma-k}
Under Assumption~\ref{assumption:mu}, let $w_1,\ldots,w_n\in\bR_{\ge 0}$ be user weights satisfying $w_1 + \cdots + w_n =1$ and $\sigma_k^2$ be the $k$-th largest eigenvalue of $\sum_{i=1}^n w_i\mu_i\mu_i^\trp$.
There exists an absolute constant $C_* > 1$ such that for any $\delta\in (0,1/2)$, as long as $\max_{1\le i \le n}w_i \le 1/C_*(k + \log(1/\delta))$, then $\sigma_k^2 \ge \sigma^2/2$ with probability at least $1 - \delta/2$.
\end{claim}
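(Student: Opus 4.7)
The plan is to reduce to a concentration bound on the $k$-dimensional subspace $\Gamma$ and then combine Weyl's inequality with an $\epsilon$-net / scalar Bernstein argument.

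First, since every $\mu_i$ is supported on $\Gamma$, the matrix $M := \sum_{i=1}^n w_i\mu_i\mu_i^\trp$ has range contained in $\Gamma$, and its $k$-th largest eigenvalue equals the smallest eigenvalue of its restriction to $\Gamma$. The hypothesis that the $k$-th largest eigenvalue of $\bE[\mu_i\mu_i^\trp]$ is at least $\sigma^2$, together with $\mu_i$ being supported on $\Gamma$, yields $\bE[\mu_i\mu_i^\trp]\succeq \sigma^2 P_\Gamma$, where $P_\Gamma$ denotes the orthogonal projection onto $\Gamma$. Summing with $\sum_i w_i = 1$ gives $\bE[M]\succeq \sigma^2 P_\Gamma$, so by Weyl's inequality it suffices to show
\[
\|M-\bE[M]\| \le \sigma^2/2 \quad \text{with probability at least } 1-\delta/2.
\]

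Second, I would take a $1/4$-net $\mathcal N$ of the unit sphere of $\Gamma$; standard volume arguments give $|\mathcal N|\le 9^k$. Because $M-\bE[M]$ is symmetric with range in $\Gamma$, $\|M-\bE[M]\| \le 2\sup_{v\in\mathcal N}|v^\trp(M-\bE[M])v|$. For each fixed $v\in\mathcal N$, $v^\trp(M-\bE[M])v = \sum_{i=1}^n w_i\xi_i$ where $\xi_i := (v^\trp\mu_i)^2 - \bE[(v^\trp\mu_i)^2]$ are independent and mean-zero. Since $v^\trp\mu_i$ is $O(\sigma)$-sub-Gaussian by Assumption~\ref{assumption:mu}, $(v^\trp\mu_i)^2$ is $O(\sigma^2)$-sub-exponential, so each $\xi_i$ has sub-exponential constant $O(\sigma^2)$.

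Third, apply the scalar Bernstein inequality to $\sum_i w_i\xi_i$: with probability at least $1-2e^{-s}$,
\[
\Bigl|\sum_i w_i\xi_i\Bigr| \le O\Bigl(\sigma^2\sqrt{\textstyle s\sum_i w_i^2} + \sigma^2 s\max_i w_i\Bigr) \le O\Bigl(\sigma^2\sqrt{sw_{\max}} + \sigma^2 sw_{\max}\Bigr),
\]
where $w_{\max}=\max_i w_i$ and we used $\sum_i w_i^2 \le w_{\max}\sum_i w_i = w_{\max}$. Union-bounding over $\mathcal N$ with $s=C_0(k+\log(1/\delta))$ for a sufficiently large absolute constant $C_0$ yields, with probability at least $1-\delta/2$,
\[
\|M-\bE[M]\| \le O\Bigl(\sigma^2\sqrt{w_{\max}(k+\log(1/\delta))} + \sigma^2 w_{\max}(k+\log(1/\delta))\Bigr).
\]
If $w_{\max}\le 1/(C_*(k+\log(1/\delta)))$ for a sufficiently large $C_*$, the right-hand side is at most $\sigma^2/2$, completing the proof via Weyl.

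The only delicate point is arranging the variance proxy $\sum_i w_i^2$ in Bernstein to be bounded by $w_{\max}$ using $\sum_i w_i = 1$; this is what forces the condition on $w_{\max}$ in the claim to scale as $1/(k+\log(1/\delta))$ rather than as $1/((k+\log(1/\delta))\log(k/\delta))$ that a direct matrix-Bernstein/Chernoff application would give. Everything else is a routine sub-Gaussian net argument.
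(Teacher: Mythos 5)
Your proof is correct and follows essentially the same route as the paper: the paper restricts to the $k$-dimensional subspace (via a coordinate rotation rather than an explicit projection $P_\Gamma$), applies its matrix concentration lemma—which is internally the same $\epsilon$-net plus sub-exponential Bernstein argument you give—uses $\sum_i w_i^2 \le \max_i w_i$ together with $\sum_i w_i = 1$, and concludes by comparing with the $k$-th eigenvalue of the expectation exactly as in your Weyl step. No substantive difference.
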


The following definition is important for us to choose the weights $w_1,\ldots,w_n$ in \eqref{eq:estimator-PCA} optimally:
\begin{definition}
\label{def:r}
Define $\gamma_i = \left(\frac{\eta_i^2}{\sigma^2m_i} + \frac{\eta_i^4}{\sigma^4m_i^2}\right)^{-1}$ and assume w.l.o.g.\ that $\gamma_1\ge \cdots \ge \gamma_n$.
Define $\gamma_i' = \gamma_i$ if $i\ge k$, and $\gamma_i' = \gamma_k$ if $i < k$.
\end{definition}
Intuitively, we can view $\gamma_i$ as measuring the ``amount of information'' provided by the data points from user $i$. This is consistent with the fact that $\gamma_i$ increases as the number $m_i$ of data points from user $i$ increases, and $\gamma_i$ decreases as the noise magnitude $\eta_i$ from user $i$ increases.
With the users sorted so that $\gamma_1\ge \cdots \ge \gamma_n$, the quantity $\gamma_i'$ is then defined to be $\gamma_k$ for the $k$ most ``informative'' users $i = 1,\ldots,k$, and $\gamma_i' = \gamma_i$ for other users.
We make the following mild assumption on $\gamma_i'$ under which we achieve optimal estimation error:
\begin{assumption}
\label{assumption}
$\sum_{i=1}^n \gamma_i' \ge C_*(k + \log(1/\delta))\gamma_1'$ for $C_*$ defined in Claim~\ref{claim:sigma-k}.
\end{assumption}
By the definition of $\gamma_i'$, it is easy to show that Assumption~\ref{assumption} is equivalent to $\sum_{i=k + 1}^n \gamma_i \ge ((C_* - 1) k + C_*\log(1/\delta))\gamma_k$. 
Therefore, if we view $\gamma_i$ as the ``amount of information'' from user $i$, Assumption~\ref{assumption} intuitively requires that a significant contribution to the total ``information'' comes from outside the $k$ most ``informative'' users. This assumption allows us to avoid the case where we only have exactly $n = k$ users: in that case, we would have $\sigma_k^2\approx \sigma^2/k^2$ for uniform weights $w_1 = \cdots = w_n$ (see \citep{MR2407948} and references therein), as opposed to the desired $\sigma_k^2 \ge \sigma^2/2$ in \Cref{claim:sigma-k}.

Assumption~\ref{assumption} is a mild assumption. For example,
when $\gamma_{k} = \cdots = \gamma_n$, Assumption~\ref{assumption} holds as long as $n \ge C_*(k + \log(1/\delta))$.
Also,
since $\gamma_1' = \cdots = \gamma_k' \ge \gamma_{k+1}' \ge \cdots \ge \gamma_n' \ge 0$, it trivially holds that $\sum_{i=1}^n \gamma_i' \ge k\gamma_1'$. Assumption~\ref{assumption} is relatively mild when compared to this trivial inequality.

Under Assumption~\ref{assumption}, we show that it is optimal to choose the weights $w_1,\ldots,w_n$ as 
\begin{equation}
\label{eq:optimal-weights}
w_i = \frac{\gamma_i'}{\sum_{\ell = 1}^n \gamma_\ell'}. 
\end{equation}
Specifically, if we plug \eqref{eq:optimal-weights} into \Cref{thm:PCA-fixed} and bound $\xi$ and $\sigma_k$ based on the distribution of $\mu_1,\ldots,\mu_n$,
we get the following error upper bound which matches our lower bound (\Cref{thm:PCA-lower}) in \Cref{sec:PCA-lower}. We defer its proof to \Cref{sec:proof-PCA-weights-chosen}.
\begin{restatable}{theorem}{thmPCAweightschosen}
\label{thm:PCA-weights-chosen}
Under Assumptions~\ref{assumption:mu} and \ref{assumption}, if we choose $w_1,\ldots,w_n$ as in \eqref{eq:optimal-weights} and define $\theta = \angle(\Gamma,\hat\Gamma)$, for $\delta\in (0,1/2)$, with probability at least $1-\delta$,
\begin{equation}
\label{eq:PCA-weights-chosen}
\sin\theta \le O\left(\sqrt{\frac{d + \log(1/\delta)}{\sum_{i=1}^n \gamma_i'}}\right).
\end{equation}
\end{restatable}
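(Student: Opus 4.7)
The plan is to apply \Cref{thm:PCA-fixed} conditioned on $\mu_1,\ldots,\mu_n$ and then control the two data-independent quantities that appear on its right-hand side: a lower bound on the signal strength $\sigma_k^2$, and an upper bound on $\xi^2 = \|\sum_{i} w_i^2 \mu_i \mu_i^\trp \eta_i^2/m_i\|$. Once the means are fixed, the conditional distribution of the data matches the hypotheses of \Cref{thm:PCA-fixed} exactly (the noise vectors are conditionally independent, mean zero, and $\eta_i$-sub-Gaussian), so \eqref{eq:PCA-fixed} holds conditionally on $\mu_1,\ldots,\mu_n$ with probability at least $1-\delta/2$ over the noise.

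For the signal strength, I would invoke \Cref{claim:sigma-k} with failure probability $\delta/2$. Its hypothesis $\max_i w_i \le 1/(C_*(k+\log(1/\delta)))$ is immediate from the choice $w_i = \gamma_i'/\sum_\ell \gamma_\ell'$ combined with \Cref{assumption}, because $\gamma_i' \le \gamma_1'$ implies $\max_i w_i \le \gamma_1'/\sum_\ell \gamma_\ell' \le 1/(C_*(k+\log(1/\delta)))$. This yields $\sigma_k^2 \ge \sigma^2/2$ with probability at least $1-\delta/2$.

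Upper bounding $\xi^2$ is the technical heart of the argument. Writing $\alpha_i := w_i^2 \eta_i^2/m_i$, the expected matrix $\sum_i \alpha_i \bE[\mu_i \mu_i^\trp]$ has operator norm $O(\sigma^2 \sum_i \alpha_i)$ because each $\mu_i$ is $O(\sigma)$-sub-Gaussian. I would then use a matrix-Bernstein-type inequality for sums of independent, sub-exponential, rank-one random matrices to obtain $\xi^2 \le O(\sigma^2 \sum_i \alpha_i)$ with probability at least $1-\delta/2$; the key point is that each $\mu_i$ is supported on a $k$-dimensional subspace, so the ambient dimension $d$ does not appear in the leading term, and the small-$\max_i w_i$ bound afforded by \Cref{assumption} keeps the Bernstein correction strictly lower order. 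This is the step I expect to be the main obstacle: the deviation term must be absorbed into $O(\sigma^2 \sum_i \alpha_i)$ without contaminating it by factors of $k$.

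The remainder is algebra. The identity $\sigma^2 w_i^2\eta_i^2/m_i + w_i^2 \eta_i^4/m_i^2 = \sigma^4 w_i^2/\gamma_i$ bounds the quantity under the square root in \eqref{eq:PCA-fixed} by $O(\sigma^4 \sum_i w_i^2/\gamma_i)$. Plugging in $w_i = \gamma_i'/\sum_\ell \gamma_\ell'$ and using the elementary inequality $(\gamma_i')^2/\gamma_i \le \gamma_i'$ for every $i$ (it is an equality when $i \ge k$ since $\gamma_i' = \gamma_i$, and holds when $i < k$ because $\gamma_i' = \gamma_k \le \gamma_i$) gives
\[
\sum_{i=1}^n \frac{w_i^2}{\gamma_i} \le \frac{1}{\sum_\ell \gamma_\ell'}.
\]
Combined with $\sigma_k^{-2} = O(1/\sigma^2)$, the first term of \eqref{eq:PCA-fixed} becomes $O(\sqrt{(d+\log(1/\delta))/\sum_\ell \gamma_\ell'})$. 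An analogous calculation, using $\eta_i^2/(\sigma^2 m_i) \le \gamma_i^{-1}$ and $\gamma_i'/\gamma_i \le 1$, bounds the second, additive term by $O((d+\log(1/\delta))/\sum_\ell \gamma_\ell')$; this is dominated by the square-root term whenever the overall bound is meaningful (below $1$), and is vacuous otherwise since $\sin\theta \le 1$. A union bound on the $\delta/2$ failure events of the signal-strength and noise-spectral-norm steps completes the proof.
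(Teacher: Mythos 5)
Your overall route---condition on the means, apply \Cref{thm:PCA-fixed}, lower bound $\sigma_k^2$ via \Cref{claim:sigma-k}, upper bound $\xi$ by matrix concentration inside the $k$-dimensional subspace, then do the $\gamma_i'$ algebra---is exactly the paper's, and your algebraic steps (the identity $\sigma^2 w_i^2\eta_i^2/m_i + w_i^2\eta_i^4/m_i^2 = \sigma^4 w_i^2/\gamma_i$, the bound $(\gamma_i')^2/\gamma_i\le\gamma_i'$, the treatment of the additive term via $\min\{1,y\}\le\sqrt y$) are correct. The gap is the step you yourself flag: the claim that Assumption~\ref{assumption} makes the Bernstein correction ``strictly lower order,'' i.e.\ that $\xi^2\le O\bigl(\sigma^2\sum_i w_i^2\eta_i^2/m_i\bigr)$. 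That intermediate claim is false in general. The correction coming out of a bound such as \Cref{lm:matrix-norm} is $(k+\log(1/\delta))\,\sigma^2\max_i w_i^2\eta_i^2/m_i$, and Assumption~\ref{assumption} does not force it below $\sigma^2\sum_i w_i^2\eta_i^2/m_i$: take $k$ users with $\gamma_i=\Gamma$ huge (so $\eta_i^2/(\sigma^2 m_i)\approx 1/\Gamma$) and $n-k$ very noisy users with $\gamma_i=g\ll1$ (so $\eta_i^2/(\sigma^2 m_i)\approx 1/\sqrt g$, since the quartic part of $1/\gamma_i$ dominates), with $(n-k)g\approx C_*(k+\log(1/\delta))\Gamma$ so that Assumption~\ref{assumption} is tight. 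Writing $S=\sum_\ell\gamma_\ell'$, you get $\sum_i w_i^2\eta_i^2/m_i\approx\sigma^2\bigl(k\Gamma+(n-k)g^{3/2}\bigr)/S^2$ while the correction is $\approx\sigma^2(k+\log(1/\delta))\Gamma/S^2$; letting $g\to0$ with $\log(1/\delta)\gg k$, the correction exceeds your claimed leading term by an arbitrarily large factor. The reason is that $\gamma_i$ only constrains $\eta_i^2/(\sigma^2m_i)+\eta_i^4/(\sigma^4m_i^2)$, so for low-information users the quadratic piece entering $\xi^2$ can be far smaller than $\sigma^2/\gamma_i$, deflating the baseline you compare against.

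The fix is small and is exactly what the paper does: do not absorb the correction at the level of $\xi^2$. Keep the two-term bound $\xi^2\le O\bigl(\sigma^2\sum_i w_i^2\eta_i^2/m_i+(k+\log(1/\delta))\sigma^2\max_i w_i^2\eta_i^2/m_i\bigr)$, plug it into \eqref{eq:PCA-fixed}, and only then compare: using $\eta_i^2/(\sigma^2m_i)\le1/\gamma_i'$ and $w_i=\gamma_i'/S$, the extra term contributes $O\bigl(\sqrt{(d+\log(1/\delta))(k+\log(1/\delta))\gamma_1'}/S\bigr)$, and Assumption~\ref{assumption}, in the form $(k+\log(1/\delta))\gamma_1'\le S/C_*$, bounds this by $O\bigl(\sqrt{(d+\log(1/\delta))/S}\bigr)$, the same order as your main term. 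With that change---and a minor repair of your failure-probability budget, since $\delta/2+\delta/2+\delta/2>\delta$ (use, e.g., $\delta/2$ for \Cref{claim:sigma-k} and $\delta/4$ each for the $\xi$ bound and the conditional application of \Cref{thm:PCA-fixed})---your argument coincides with the paper's proof.
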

For comparison, consider the setting when $\sigma=\eta_i=1$ for every $i = 1,\ldots,n$. The result then says that $\sin\theta$ is bounded by approximately $\sqrt{\frac{d}{\sum_{i=1}^n m_i}}$. This is the same rate as we would get if we have $\sum_{i=1}^n m_i$ users each contributing a single independent data point with homogeneous spherical noise. Thus as long as the data points are not too concentrated on fewer than $k$ users, the heterogeneity comes at no additional cost.

\subsection{Lower Bound}
\label{sec:PCA-lower}
We prove a lower bound matching the upper bound in \Cref{thm:PCA-weights-chosen} up to constant in the setting where $\delta = \Theta(1)$, $d \ge (1 + \Omega(1))k$. 

For every positive integer $d$, there is a natural ``uniform'' distribution over $\Ob_d$ given by Haar's theorem \citep{MR1503103} (see e.g.\ \citep{MR3186070} for a textbook). We denote this distribution by $\haar(\Ob_d)$. A random matrix $A$ drawn from $\haar(\Ob_d)$ has the following invariance property: for any deterministic matrix $B\in \Ob_d$, the random matrices $A, AB$ and $BA$ all have the same distribution. For an integer $k \le d$, we can construct a random matrix $A_1\in \Ob_{d,k}$ by first drawing $A\in\bR^{d\times d}$ from $\haar(\Ob_d)$ and then take the first $k$ columns of $A$. We denote the distribution of $A_1$ by $\haar(\Ob_{d,k})$. The invariance property of $\haar(\Ob_d)$ immediately implies the following claims:
\begin{claim}
\label{claim:invariance-1}
Let $A\in \Ob_d$ be a random matrix drawn from $\haar(\Ob_d)$ and let $B\in \Ob_{d,k}$ be a fixed matrix. Then $AB$ distributes as $\haar(\Ob_{d,k})$.
\end{claim}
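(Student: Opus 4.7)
The plan is to reduce the claim to the stated right-invariance of $\haar(\Ob_d)$ (that $AB'$ and $A$ are equidistributed for any deterministic $B'\in\Ob_d$) together with the definition of $\haar(\Ob_{d,k})$ as the distribution of the first $k$ columns of a Haar-distributed orthogonal matrix.

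First, I would extend $B$ to a full orthogonal matrix. Since $B\in\Ob_{d,k}$ has orthonormal columns, its columns form an orthonormal set in $\bR^d$, and I can complete them (e.g., via Gram–Schmidt on any basis of the orthogonal complement of $\col(B)$) to an orthonormal basis $b_1,\ldots,b_d$ of $\bR^d$. Let $\tilde B \in \Ob_d$ be the matrix whose $i$-th column is $b_i$, so that the first $k$ columns of $\tilde B$ coincide with $B$. Writing $E_k \in \bR^{d\times k}$ for the matrix consisting of the first $k$ columns of the identity $I_d$, this construction gives the identity $B = \tilde B E_k$.

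Next, I would invoke the invariance property already quoted in the excerpt: because $\tilde B\in\Ob_d$ is deterministic and $A\sim\haar(\Ob_d)$, the random matrix $A\tilde B$ is again distributed as $\haar(\Ob_d)$. Consequently, $AB = A\tilde B E_k$ has the same distribution as $A' E_k$ where $A'\sim\haar(\Ob_d)$. But $A'E_k$ is exactly the matrix obtained by taking the first $k$ columns of a Haar-distributed $d\times d$ orthogonal matrix, which is the definition of $\haar(\Ob_{d,k})$ given in the excerpt. This yields $AB\sim\haar(\Ob_{d,k})$, completing the argument.

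There is no real obstacle here beyond making the completion of $B$ to $\tilde B$ explicit and being careful that the definition of $\haar(\Ob_{d,k})$ used is the "first-$k$-columns" definition provided just before the claim. Everything else is a one-line application of right-invariance followed by a matrix identity.
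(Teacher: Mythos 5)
Your argument is correct and is essentially the paper's own proof: the paper likewise completes $B$ to a full orthogonal matrix $C\in\Ob_d$, notes that $AC\sim\haar(\Ob_d)$ by the invariance property, and identifies $AB$ as the first $k$ columns of $AC$. Your version merely makes the completion and the factorization $B=\tilde B E_k$ explicit, which is a fine presentational choice but not a different route.
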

\begin{proof}
The matrix $B$ can be written as the first $k$ columns of a matrix $C\in \Ob_{d}$. Now $AB$ is the first $k$ columns of $AC$, where $AC$ distributes as $\haar(\Ob_d)$ by the invariance property. This implies that $AB$ distributes as $\haar(\Ob_{d,k})$.
\end{proof}
\begin{claim}
\label{claim:invariance-2}
Let $B\in \Ob_{d,k}$ be a random matrix. Assume for every fixed matrix $A\in \Ob_d$, the random matrices $B$ and $AB$ have the same distribution. Then $B\sim \haar(\Ob_{d,k})$.
\end{claim}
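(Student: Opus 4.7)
The plan is to prove the claim by a standard coupling / randomization argument: introduce an auxiliary Haar-distributed random orthogonal matrix $U \in \Ob_d$, independent of $B$, and compute the distribution of $UB$ in two different ways.

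First, I would condition on the value of $U$. By the hypothesis of the claim, for every deterministic $A \in \Ob_d$ the random matrix $AB$ has the same distribution as $B$. Applying this with $A = U_0$ for each realization $U_0$ of $U$ and then averaging over $U$ (using independence of $U$ and $B$), I obtain that $UB$ has the same distribution as $B$. Concretely, for any measurable set $S \subseteq \Ob_{d,k}$,
\[
\Pr[UB \in S] = \int_{\Ob_d} \Pr[U_0 B \in S]\,\mathrm{d}\haar(\Ob_d)(U_0) = \int_{\Ob_d} \Pr[B \in S]\,\mathrm{d}\haar(\Ob_d)(U_0) = \Pr[B \in S].
\]

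Next, I would condition instead on the value of $B$. For each fixed realization $B_0 \in \Ob_{d,k}$, Claim~\ref{claim:invariance-1} applied to the Haar-distributed $U$ gives that $UB_0 \sim \haar(\Ob_{d,k})$. Since this distribution does not depend on $B_0$ and $U$ is independent of $B$, averaging over $B$ yields $UB \sim \haar(\Ob_{d,k})$ unconditionally. Combining the two computations, $B$ and $\haar(\Ob_{d,k})$ have the same distribution, which is the desired conclusion.

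There is no substantial obstacle here; the only thing to be careful about is the order of conditioning and the use of independence between the auxiliary variable $U$ and the given matrix $B$, which justifies the two applications of Fubini above. The argument is essentially the standard proof that left-invariance under the full orthogonal group characterizes the Haar measure on the Stiefel manifold, specialized to the case where existence of such an invariant measure has already been established via Claim~\ref{claim:invariance-1}.
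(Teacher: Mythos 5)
Your proposal is correct and is essentially the same argument as the paper's: the paper also introduces an independent Haar-distributed $A\in\Ob_d$, notes that $AB$ has the same distribution as $B$ by averaging the hypothesis over $A$, and concludes $AB\sim\haar(\Ob_{d,k})$ via Claim~\ref{claim:invariance-1} conditioned on $B$. You have simply spelled out the two conditioning/Fubini steps that the paper leaves implicit.
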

\begin{proof}
If we draw $A$ independently from $\haar(\Ob_d)$, the random matrices $B$ and $AB$ still have the same distribution. By Claim~\ref{claim:invariance-1}, $AB$ distributes as $\haar(\Ob_{d,k})$, so $B$ must also distribute as $\haar(\Ob_{d,k})$.
\end{proof}
With the definition of $\haar(\Ob_{d,k})$, we state our lower bound in the following theorem:
\begin{restatable}{theorem}{pcalb}
\label{thm:PCA-lower}
Let $k,d,n$ be positive integers satisfying $k < d$ and $k \le n$. Let $m_1,\ldots,m_n$ be positive integers and $\sigma,\eta_1,\ldots,\eta_n$ be positive real numbers.
Suppose we draw $U\in \Ob_{d,k}$ from $\haar(\Ob_{d,k})$ and then draw $\mu_1,\ldots,\mu_n$ independently from
$N(0, \sigma^2UU^\trp)$. 
For every $i = 1,\ldots,n$, we draw $m_i$ data points $x_{ij}$ for $j = 1,\ldots,m_i$ as $x_{ij} = \mu_i + z_{ij}$, where
each $z_{ij}$ is drawn independently from the spherical Gaussian $N(0, \eta_i^2I)$.
Let $\hat\Gamma$ be any estimator mapping $(x_{ij})_{1\le i \le n, 1\le j \le m_i}$ to a (possibly randomized) $k$-dimensional subspace of $\bR^d$.
Let $\theta$ denote the maximum principal angle between $\hat \Gamma((x_{ij})_{1\le i \le n, 1\le j \le m_i})$ and the true subspace $\Gamma = \col(U)$.
If real numbers $t \ge 0$ and $\delta \in [0,1/2)$ satisfy
$
\Pr[\sin\theta \le t] \ge 1-\delta,
$
then
\begin{equation}
\label{eq:PCA-lower}
t \ge \Omega\left(\min\left\{1,\sqrt{\frac{(d - k)(1 - \delta)}{\sum_{i=k}^{n} \gamma_i}}\right\}\right),
\end{equation}
where $\gamma_1,\ldots,\gamma_n$ are defined in \Cref{def:r}.
\end{restatable}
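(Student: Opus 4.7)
The plan is to prove the lower bound via Fano's method applied to a local packing of the Grassmannian, after first reducing the problem to a one-dimensional subspace estimation sub-problem. The key structural insight is that revealing the $k-1$ most informative user means to the estimator is a free gift (it can only decrease the risk), but doing so precisely removes the contributions of users $i=1,\ldots,k-1$, producing $\sum_{i=k}^n \gamma_i$ rather than $\sum_{i=1}^n \gamma_i$ in the denominator.

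\textbf{Reduction.} Assume WLOG that $\gamma_1 \ge \cdots \ge \gamma_n$, and reveal $\mu_1,\ldots,\mu_{k-1}$ to the estimator. Almost surely these span a $(k-1)$-dimensional subspace $V$. By the invariance properties in Claims~\ref{claim:invariance-1} and \ref{claim:invariance-2} applied inside $V^\perp$, the conditional distribution of the unit direction $u \in V^\perp$ satisfying $\col(U) = V \oplus \mathrm{span}(u)$ is uniform on the unit sphere of $V^\perp \cong \bR^{d-k+1}$. Projecting each $x_{ij}$ onto $V^\perp$, the $V$-components are independent of $u$ and carry no information; for users $i<k$ the $V^\perp$-projection is pure noise once $\mu_i$ is revealed; and for users $i \ge k$ the projected data $\tilde x_{ij} := P_{V^\perp} x_{ij} = (u^\trp \mu_i) u + P_{V^\perp} z_{ij}$ form an instance of the original model with $k=1$ and $d' = d-k+1$.

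\textbf{Packing and KL.} Construct a local packing by taking a Gilbert--Varshamov packing of $M \ge 2^{c(d-k)}$ unit vectors $w_1,\ldots,w_M$ with constant pairwise $\ell_2$-distances in a $(d-k)$-dimensional subspace of $V^\perp$ orthogonal to a fixed reference $u_0 \in V^\perp$, and setting $u^{(s)} := (u_0 + \tau w_s)/\|u_0 + \tau w_s\|_2$ for a scale $\tau \in (0,1/2)$. An elementary calculation gives pairwise $\sin\angle(u^{(s)}, u^{(s')}) \in [c_1\tau, c_2\tau]$ for absolute constants $c_1, c_2 > 0$. For each pair and each user $i \ge k$, the joint distribution of user $i$'s $V^\perp$-projected data is centered Gaussian in $\bR^{(d-k+1) m_i}$ with covariance $\tilde\Sigma_i(u) = \sigma^2 (\ind_{m_i}\ind_{m_i}^\trp) \otimes uu^\trp + \eta_i^2 I$; its eigenvalues ($\sigma^2 m_i + \eta_i^2$ with multiplicity $1$ and $\eta_i^2$ with multiplicity $(d-k+1)m_i - 1$) do not depend on $u$, so $\log\det$ terms in the KL cancel. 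Computing $\tr(\tilde\Sigma_i(u')^{-1}\tilde\Sigma_i(u))$ using the rank-one Kronecker structure yields
\[
\kl\bigl(P_i(u) \,\|\, P_i(u')\bigr) = \frac{\gamma_i}{2}\bigl(1 - (u^\trp u')^2\bigr) = \frac{\gamma_i}{2}\sin^2\angle(u,u'),
\]
with $\gamma_i$ exactly matching \Cref{def:r}. Summing over the independent users $i=k,\ldots,n$ bounds the total KL by $\tfrac12 c_2^2 \tau^2 \sum_{i=k}^n \gamma_i$.

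\textbf{Fano.} If $\Pr[\sin\theta \le c_1\tau/2] \ge 1-\delta$, then since $\sin\angle$ satisfies the triangle inequality on the Grassmannian, the minimum-distance decoder on $\{u^{(1)},\ldots,u^{(M)}\}$ recovers the true index with probability $\ge 1-\delta$; Fano's inequality then gives $(1-\delta)\log M - \log 2 \le \tfrac{1}{2} c_2^2 \tau^2 \sum_{i=k}^n \gamma_i$. Combining with $\log M = \Omega(d-k)$ and rearranging yields $\tau^2 = \Omega((d-k)(1-\delta)/\sum_{i=k}^n \gamma_i)$ provided $d-k$ exceeds an absolute constant; the regimes of small $d-k$ and $\tau \ge 1$ are absorbed by the $\min\{1,\cdot\}$ and the trivial $\sin\theta \le 1$. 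Hence $t \ge c_1 \tau / 2 = \Omega(\sqrt{(d-k)(1-\delta)/\sum_{i=k}^n \gamma_i})$, matching \eqref{eq:PCA-lower}. The main technical chore is the KL computation, but the most delicate conceptual step is the initial reduction, which is precisely what produces the $\sum_{i=k}^n$ (rather than $\sum_{i=1}^n$) denominator and prevents a small number of very informative users from artificially weakening the bound.
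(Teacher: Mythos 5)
Your plan is, at its core, the same as the paper's: neutralize users $1,\ldots,k-1$ by arguing that, given their data, the unknown extra direction in the orthogonal complement of $\mathrm{span}(\mu_1,\ldots,\mu_{k-1})$ is rotation-invariant; then run Fano on a local packing at scale $\asymp t$ with a per-user KL bound. Your KL computation is correct and matches the paper's (you work directly with the $m_i$-sample Gaussian via the Kronecker structure, the paper first passes to the sufficient statistic $\bar x_i$ and applies \Cref{lm:kl-angle}; both give $\tfrac{\gamma_i}{2}\sin^2\angle(u,u')$ with $\gamma_i$ exactly as in \Cref{def:r}), and the minimum-distance decoding and Fano steps are standard.

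The gap is in the Fano step as you state it. Fano requires the parameter to be uniform over the finite packing, but after conditioning on $\mu_1,\ldots,\mu_{k-1}$ the direction $u$ is uniform over the continuous sphere of $V^\perp$; your fixed packing around a fixed reference $u_0$ is a null set under that prior, so the hypothesis $\Pr[\sin\theta\le t]\ge 1-\delta$ (an average over the Haar prior) does not imply that "the minimum-distance decoder recovers the true index with probability $\ge 1-\delta$" in an experiment where $u$ is drawn uniformly from $\{u^{(1)},\ldots,u^{(M)}\}$. This is precisely the step the paper spends its machinery on: it realizes the Haar prior as a mixture $V=WE$ with $W\sim\haar(\Ob_d)$ and $E$ uniform over a finite packing-derived set (\Cref{claim:invariance-1}), proves via the graphical-model/conditional-independence argument that $E$ is independent of the first $k-1$ users' data given $W$, and then fixes the conditioning variables (law of total expectation) so that Fano applies to a genuinely finite uniform prior. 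In your framework the fix is short but necessary: rotate your packing by a Haar rotation $R$ of $V^\perp$ (so $Ru^{(s)}$ with $R$ Haar and $s$ uniform reproduces the uniform-on-sphere conditional law), then average over $R$ to extract a rotation for which the conditional success probability is still $\ge 1-\delta$. Relatedly, your claim that the conditional law of $u$ given $\mu_1,\ldots,\mu_{k-1}$ is exactly uniform is true but is not an immediate consequence of \Cref{claim:invariance-1,claim:invariance-2}; it needs the conditional-independence argument the paper carries out. Finally, the small-$(d-k)$ regime is not "absorbed by the $\min\{1,\cdot\}$": e.g.\ for $d-k=1$ your Gilbert--Varshamov set has only two points and Fano's $\log 2$ slack kills the bound, while the required conclusion is still nontrivial; the paper avoids this by using the packing of \Cref{lm:packing} (size $10^{d-k}$ inside a cap of radius $O(t)$), and you would need that lemma or a separate two-point argument there.
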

Note that $\gamma_i' = \gamma_i$ for $i \ge k$, so our upper bound in \eqref{eq:PCA-weights-chosen} matches the lower bound \eqref{eq:PCA-lower} up to a constant factor assuming $\delta = \Theta(1)$ and $d \ge (1 + \Omega(1))k$.

We use the local Fano method to prove the lower bound using the technical lemmas in \Cref{sec:fano}. In particular, we reduce our goal to proving an upper bound on the KL divergence between Gaussian distributions whose covariance matrices are defined based on matrices $U,\hat U\in \Ob_{d,k}$ with $\|UU^\trp - \hat U\hat U^\trp\|_F$ bounded. We prove the following lemma in \Cref{sec:proof-kl-angle} that upper bounds the KL divergence using $\|UU^\trp - \hat U\hat U^\trp\|_F$:
\begin{lemma}
\label{lm:kl-angle}
For $\sigma\in\bR_{\ge 0}, \eta\in\bR_{>0},U,\hat U\in \Ob_{d,k}$, define $\Sigma = \sigma^2UU^\trp + \eta^2 I$ and $\hat \Sigma = \sigma^2\hat U\hat U^\trp + \eta^2 I$. Then,
\[
\kl(N(0, \hat \Sigma)\|N(0, \Sigma)) = \frac {\sigma^4\|UU^\trp - \hat U\hat U^\trp\|_F^2}{4(\sigma^2\eta^2 + \eta^4)}.
\]
\end{lemma}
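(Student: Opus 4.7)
The plan is to compute the KL divergence using the standard closed form for zero-mean multivariate Gaussians,
\[
\kl(N(0,\hat\Sigma)\|N(0,\Sigma)) = \tfrac{1}{2}\left(\tr(\Sigma^{-1}\hat\Sigma) - d + \log\tfrac{\det\Sigma}{\det\hat\Sigma}\right),
\]
and then unpack each piece using the structure of $\Sigma$ and $\hat\Sigma$. The first observation I would record is that both $UU^\trp$ and $\hat U \hat U^\trp$ are rank-$k$ orthogonal projectors (since $U^\trp U = \hat U^\trp\hat U = I_k$), so $\Sigma$ and $\hat\Sigma$ share the same spectrum: eigenvalue $\sigma^2+\eta^2$ with multiplicity $k$ and eigenvalue $\eta^2$ with multiplicity $d-k$. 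Consequently $\det\Sigma=\det\hat\Sigma$, and the log-det term vanishes. This leaves us needing only to compute $\tr(\Sigma^{-1}\hat\Sigma)-d$.

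Next I would invert $\Sigma$ explicitly. Decomposing into the range of $U$ and its orthogonal complement gives
\[
\Sigma^{-1} = \tfrac{1}{\eta^2}I - \tfrac{\sigma^2}{\eta^2(\sigma^2+\eta^2)}UU^\trp,
\]
which one can also verify by direct multiplication using $UU^\trp UU^\trp = UU^\trp$. I would then expand
\[
\tr(\Sigma^{-1}\hat\Sigma) = \tr\!\left(\left(\tfrac{1}{\eta^2}I - \tfrac{\sigma^2}{\eta^2(\sigma^2+\eta^2)}UU^\trp\right)(\eta^2 I + \sigma^2\hat U\hat U^\trp)\right)
\]
and collect terms. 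Using $\tr(UU^\trp)=\tr(\hat U\hat U^\trp)=k$, the scalar (non-cross) parts combine to yield $d + \frac{\sigma^4 k}{\eta^2(\sigma^2+\eta^2)}$, while the cross term contributes $-\frac{\sigma^4}{\eta^2(\sigma^2+\eta^2)}\tr(UU^\trp\hat U\hat U^\trp)$. This gives
\[
\tr(\Sigma^{-1}\hat\Sigma) - d = \tfrac{\sigma^4}{\eta^2(\sigma^2+\eta^2)}\big(k - \tr(UU^\trp\hat U\hat U^\trp)\big).
\]

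Finally, I would convert the trace of the product of projectors into the Frobenius distance. Expanding
\[
\|UU^\trp - \hat U\hat U^\trp\|_F^2 = \tr(UU^\trp) + \tr(\hat U\hat U^\trp) - 2\tr(UU^\trp\hat U\hat U^\trp) = 2k - 2\tr(UU^\trp\hat U\hat U^\trp),
\]
so that $k - \tr(UU^\trp\hat U\hat U^\trp) = \tfrac{1}{2}\|UU^\trp - \hat U\hat U^\trp\|_F^2$. Substituting back and dividing by $2$ gives exactly $\frac{\sigma^4\|UU^\trp - \hat U\hat U^\trp\|_F^2}{4(\sigma^2\eta^2 + \eta^4)}$, as claimed.

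This proof is essentially pure bookkeeping, so there is no serious conceptual obstacle. The only step that requires care is not getting a sign wrong when simplifying $\tr(\Sigma^{-1}\hat\Sigma) - d$; the cleanest way to stay honest is to notice at the outset that $\Sigma$ and $\hat\Sigma$ have identical eigenvalue multisets, which makes both the log-det cancellation and the non-negativity of $k - \tr(UU^\trp\hat U\hat U^\trp)$ (and hence of the final KL) transparent.
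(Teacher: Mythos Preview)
Your proof is correct and takes a genuinely different route from the paper. The paper invokes the principal-angles decomposition: it writes $UU^\trp = PJP^\trp$ and $\hat U\hat U^\trp = P\hat JP^\trp$ with explicit block-diagonal $J,\hat J$ built from $\cos\Theta,\sin\Theta$, reads off the diagonal entries of $(\sigma^2 J + \eta^2 I)^{-1}(\sigma^2\hat J + \eta^2 I)$, and sums $\sin^2\theta_i$ to recover $\|UU^\trp - \hat U\hat U^\trp\|_F^2$. You bypass this entirely by writing $\Sigma^{-1}$ in closed form (a rank-$k$ perturbation of $\tfrac{1}{\eta^2}I$), expanding the trace linearly, and using the projector identity $\|UU^\trp - \hat U\hat U^\trp\|_F^2 = 2k - 2\tr(UU^\trp\hat U\hat U^\trp)$ directly. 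Your argument is shorter and more elementary, requiring only that $UU^\trp$ is an orthogonal projector; the paper's route, while heavier, has the side benefit of making the contribution of each individual principal angle $\theta_i$ visible in the intermediate expression, which ties the computation to the geometric picture developed in \Cref{sec:principal-angles}.
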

\Cref{lm:kl-angle} and the results in \Cref{sec:fano} allow us to prove a version of \eqref{eq:PCA-lower} in which the sum in the demoninator is over $i = 1,\ldots,n$.
This, however, is weaker and less useful than \eqref{eq:PCA-lower} in which the sum in the denominator is over $i = k,k+1,\ldots,n$.
To prove \Cref{thm:PCA-lower}, we extract a hard distribution in which the data points from users $1,\ldots,k-1$ are ``useless'' in terms of subspace recovery. 

Let $\Gamma_1$ be the $(k-1)$-dimensional subspace spanned by $\mu_1,\ldots,\mu_{k-1}$. We let $v_1,\ldots,v_{k-1}$ be a random orthonormal basis of $\Gamma_1$, and we append another vector $v_k\in\Gamma$ to form an orthonormal basis $v_1,\ldots,v_k$ of $\Gamma$. We define $V_1 = [v_1\ \cdots\ v_{k-1}]\in \Ob_{d,k-1}$ and $V = [v_1\ \cdots\ v_k]\in \Ob_{d,k}$. In \Cref{fig:1} we show a graphical model demonstrating the dependency among the random objects we defined.
\begin{figure}[h]
\centering
\includegraphics[scale=0.3]{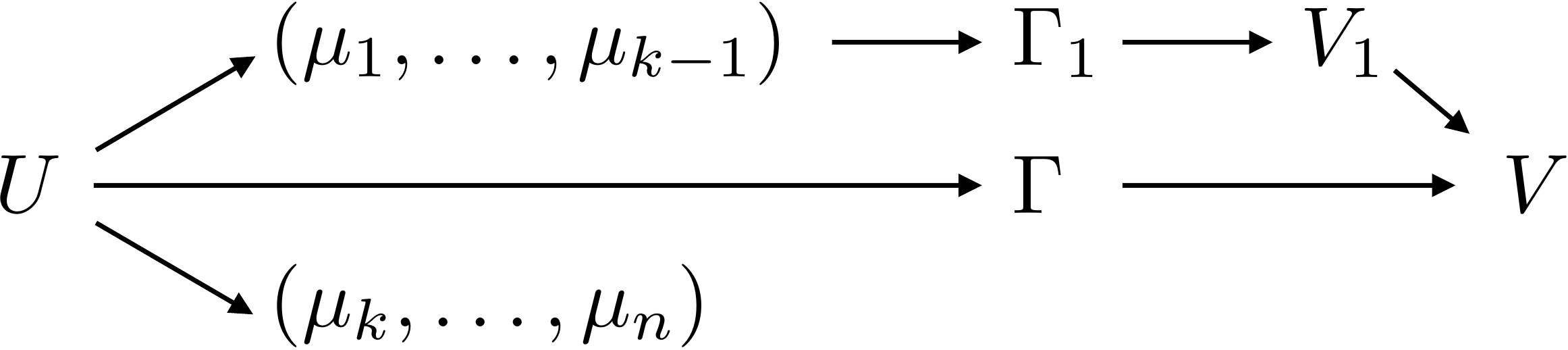}
\caption{Graphical model A.}
\label{fig:1}
\end{figure}

Let us focus on the joint distribution of $(V_1,V, (\mu_1,\ldots,\mu_{k-1}))$. 
By the invariance property, for any matrices $\tilde V_1\in \Ob_{d,k-1}, \tilde V\in \Ob_{d,k}$, measurable set $S\subseteq (\bR^d)^{k - 1}$, and orthogonal matrix $G\in \Ob_d$,
\[
\Pr[(\mu_1,\ldots,\mu_{k-1})\in S| V = \tilde V, V_1 = \tilde V_1] = \Pr[(\mu_1,\ldots,\mu_{k-1})\in S_G| V = G\tilde V, V_1 = G\tilde V_1],
\]
where $S_G = \{(G\tilde\mu_1,\ldots,G\tilde\mu_{k-1}):(\tilde\mu_1,\ldots,\tilde\mu_{k-1})\in S\}$.
For any $\tilde V,\tilde V'\in \Ob_{d,k}$ whose first $k-1$ columns are both $\tilde V_1$, there exists $G\in \Ob_d$ such that $\tilde V' = G\tilde V$  and thus $\tilde V_1 = G\tilde V_1$. This implies that for any $\tilde\mu\in \col(\tilde V_1)$, we have $G\tilde\mu = \tilde\mu$, and thus $(S\cap \col(\tilde V_1)^{k-1})_G = S\cap \col(\tilde V_1)^{k-1}$ for any measurable $S\subseteq (\bR^d)^{k-1}$. 
Here, $\col(\tilde V_1)^{k-1} = \{(\tilde \mu_1,\ldots,\tilde \mu_{k-1}): \tilde \mu_i\in \col(\tilde V_1) \text{ for } i=1,\ldots,k-1\}\subseteq (\bR^d)^{k-1}$.
When conditioned on $V_1 = \tilde V_1$, for every $i = 1,\ldots,k-1$ we have $\mu_i \in \Gamma_1 = \col(V_1) = \col(\tilde V_1)$, which implies that $(\mu_1,\ldots,\mu_{k-1})\in \col(\tilde V_1)^{k-1}$.
Therefore,
\begin{align*}
&\Pr[(\mu_1,\ldots,\mu_{k-1})\in S| V = \tilde V, V_1 = \tilde V_1]\\
= {} & \Pr[(\mu_1,\ldots,\mu_{k-1})\in S\cap \col(\tilde V_1)^{k-1}| V = \tilde V, V_1 = \tilde V_1]\\
= {} & \Pr[(\mu_1,\ldots,\mu_{k-1})\in (S\cap \col(\tilde V_1)^{k-1})_G| V = G\tilde V, V_1 = G\tilde V_1]\\
= {} & \Pr[(\mu_1,\ldots,\mu_{k-1})\in S\cap \col(\tilde V_1)^{k-1}| V = \tilde V', V_1 = \tilde V_1]\\
= {} & \Pr[(\mu_1,\ldots,\mu_{k-1})\in S| V = \tilde V', V_1 = \tilde V_1].
\end{align*}
This implies that $(\mu_1,\ldots,\mu_{k-1})$ and $V$ are conditionally independent given $V_1$. Therefore, the joint distribution of $(V_1,V, (\mu_1,\ldots,\mu_{k-1}))$ can be formed by first drawing $V$ and $V_1$, and then drawing $\mu_1,\ldots,\mu_{k-1}$ based only on $V_1$ and not on $V$. Since $\mu_k,\ldots,\mu_n$ are drawn iid from $N(0, \sigma^2UU^\trp) = N(0, \sigma^2VV^\trp)$, we have the graphical model shown in \Cref{fig:2}.
\begin{figure}[h]
\centering
\includegraphics[scale=0.3]{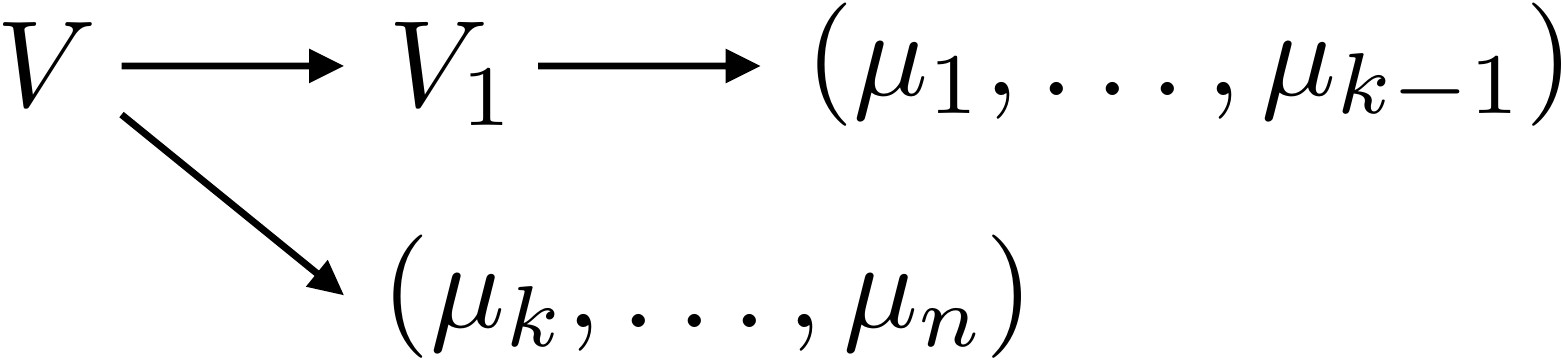}
\caption{Graphical model B.}
\label{fig:2}
\end{figure}

By Claim~\ref{claim:invariance-2}, the marginal distribution of $V$ is $\haar(\Ob_{d,k})$.
By Claim~\ref{claim:invariance-1}, we can implement this distribution by first drawing $W\sim \haar(\Ob_d)$ and then drawing $E$ independently from any distribution over $\Ob_{d,k}$ and let $V = WE$. We choose the distribution of $E$ later, where we ensure that the first $k-1$ columms of $E$ is always $\begin{bmatrix}I_{k-1} \\ 0\end{bmatrix}$. This guarantees that the first $k-1$ columns of $W$ and $V$ are the same, and thus $V_1$ is exactly the first $k-1$ columns of $W$, resulting in the graphical model shown in \Cref{fig:3}.
\begin{figure}[h]
\centering
\includegraphics[scale=0.3]{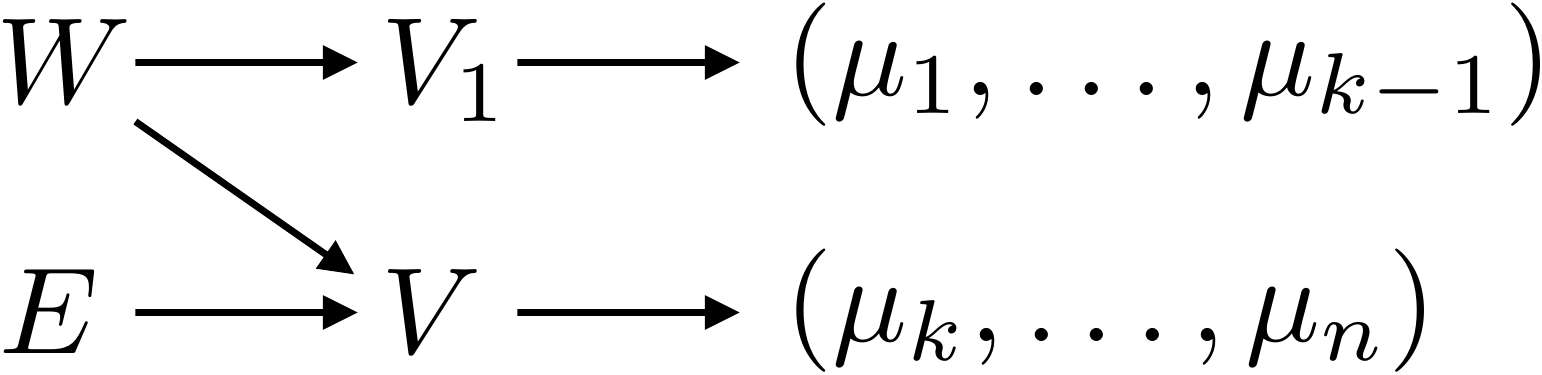}
\caption{Graphical model C.}
\label{fig:3}
\end{figure}

Note that in \Cref{fig:3} there is no directed path from $E$ to $(\mu_1,\ldots,\mu_{k-1})$. Intuitively, this means that knowing $(\mu_1,\ldots,\mu_{k-1})$ gives us no information about $E$. 
Now by choosing the distribution of $E$ appropriately, we can prove \eqref{eq:PCA-lower} in which the denominator does not contain $\gamma_1,\ldots,\gamma_{k-1}$.
We defer the complete proof of \Cref{thm:PCA-lower} to \Cref{sec:proof-PCA-lower}.

\section{Linear Models}
\label{sec:linear-model}
In the linear models setting, 
the data distribution of user $i$ is parameterized by an unknown vector $\beta_i\in\bR^d$. As before, we assume that the vectors $\beta_1,\ldots,\beta_n$ from the $n$ users lie in an unknown $k$-dimensional subspace $\Gamma$.
Our goal is to recover the subspace using the following data.
For every $i = 1,\ldots,n$, we have $m_i$ data points from user $i$: $(x_{i1},y_{i1}),\ldots,(x_{im_i}, y_{im_i})\in\bR^d\times\bR$. For every $j = 1,\ldots,m_i$, we assume the \emph{measurement} $x_{ij}\in\bR^d$ is a random vector drawn independently from an $O(1)$-sub-Gaussian distribution with zero mean and identity covariance matrix. The measurement \emph{outcome} $y_{ij}$ is determined by $y_{ij}= x_{ij}^\trp \beta_i + z_{ij}$, where
the random \emph{noise} $z_{ij}\in\bR$ can depend on the measurements $x_{i1},\ldots,x_{im_i}$. When conditioned on $x_{i1},\ldots,x_{im_i}$, we assume
every $z_{ij}$ for $j = 1,\ldots, m_i$ is independently drawn from an $\eta_{i}$-sub-Gaussian distribution with zero mean, but we do \emph{not} assume that the conditional distribution of $z_{ij}$ is the same for every $j = 1,\ldots,m_i$. The (in)dependence among $x_{ij}$ and $z_{ij}$ for $i = 1,\ldots,n$ and $j =1,\ldots, m_i$ can be summarized by the example graphical model in \Cref{fig:4}.
\begin{figure}[h]
\centering
\includegraphics[scale=0.3]{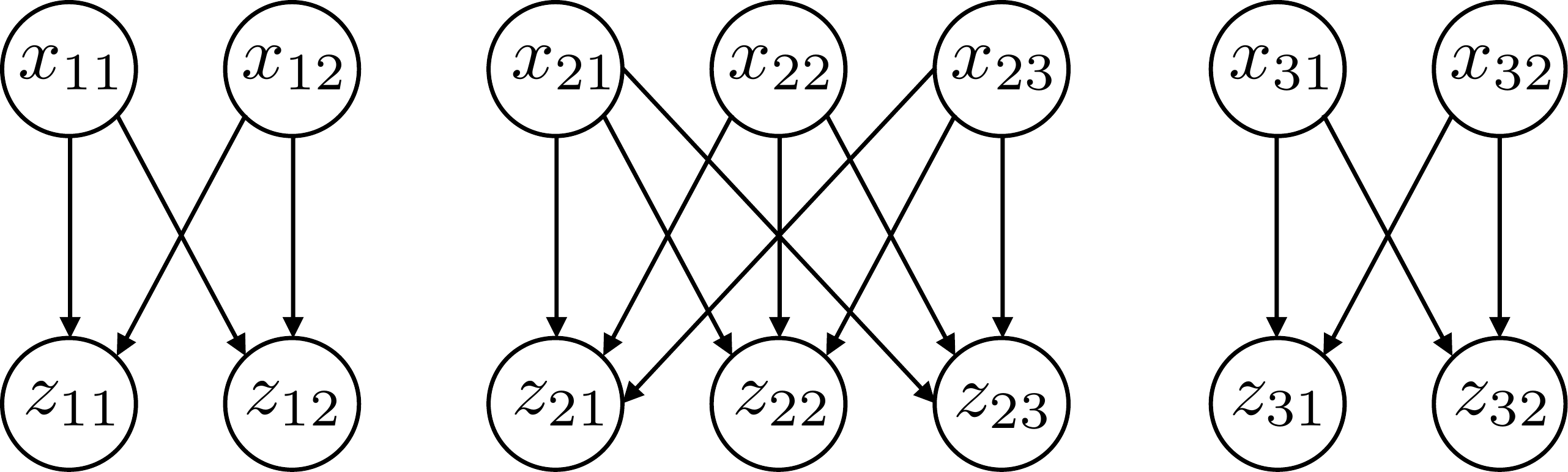}
\caption{An example for $n = 3, m_1 = 2, m_2 = 3, m_3 = 2$.}
\label{fig:4}
\end{figure}

Since we allow the noise $z_{ij}$ to depend on the measurements $x_{ij}$, it is information-theoretically impossible to recover the subspace if we only have one data point from every user. 
Consider the scenario where every $\beta_i$ is drawn independently from $N(0, \sigma^2uu^\trp)$ for an unknown unit vector $u\in\bR^d$ and every $x_{ij}$ is drawn independently and uniformly from $\{-1,1\}^d$. If we set $z_{ij}$ to be $z_{ij} = x_{ij}^\trp \nu_{ij}$ where $\nu_{ij}$ is independently drawn from $N(0, \sigma^2(I - uu^\trp))$, then every $y_{ij}$ satisfies $y_{ij} = x_{ij}^\trp (\beta_i + \nu_{ij})$ where $\beta_i + \nu_{ij}$ distributes as $N(0,\sigma^2I)$ independently from $x_{ij}$. This implies that the joint distribution of $((x_{i1},y_{i1}))_{i=1,\ldots,n}$ does not change with $u$, i.e., we get no information about $u$ from one data point per user.

Thus, we assume $m_i \ge 2$ for every user $i$. In this case, we achieve error upper bounds that match the ones in \citep{tripuraneni2021provable}
despite our relaxed assumptions on the noise. 
Our estimator is the subspace $\hat \Gamma$ spanned by the top-$k$ eigenvectors of $A$ defined in \eqref{eq:estimator-linear}.
We defer the analysis of our estimator to \Cref{sec:linear-models-details}.

\bibliographystyle{plainnat}
\appendix
\section{Basics about Principal Angles}
\label{sec:principal-angles}
We provide a formal definition of the principal angles introduced by \citet{MR1503705} and briefly discuss some basic properties of this notion (see e.g.\ \citep{MR1061154} for a textbook).

Let $U, \hat U \in \Ob_d$ be two orthogonal matrices. Suppose the columns of $U$ and $\hat U$ are partitioned as $U = [U_1\ U_2], \hat U = [\hat U_1\ \hat U_2]$ where $U_1\in \Ob_{d,k}$ and $\hat U_1\in \Ob_{d,\hat k}$ for integers $k,\hat k$ satisfying $0 < k \le \hat k < d$. Starting from a singular value decomposition of $U_1^\trp \hat U_1$ as $R_1^\trp\Sigma \hat R_1$, one can find an integer $\ell\in\bZ_{\ge 0}$ and angles $0 < \theta_1\le \cdots \le \theta_\ell \le \pi/2$ such that there exist orthogonal matrices $P,\hat P\in \Ob_d, R_1\in \Ob_k, R_2\in \Ob_{d - k}, \hat R_1\in \Ob_{\hat k}, \hat R_2\in \Ob_{d - \hat k}$ satisfying 
\begin{align}
& U = P\begin{bmatrix} R_1 & 0 \\ 0 & R_2\end{bmatrix},\quad \
\hat U = \hat P\begin{bmatrix} \hat R_1 & 0 \\ 0 & \hat R_2\end{bmatrix}, \quad\textnormal{and} \label{eq:principal-1}
\\
& P^\trp \hat P = (\hat P^\trp P)^\trp = \begin{bmatrix}
I_{k - \ell} & 0 & 0 & 0 & 0 \\
0 & \cos\Theta & 0 & \sin\Theta & 0\\
0 & 0 & I_{\hat k - k} & 0 & 0\\
0 & -\sin\Theta & 0 & \cos\Theta & 0\\
0 & 0 & 0 & 0 & I_{d - \hat k - \ell}
\end{bmatrix},\label{eq:principal-2}
\end{align}
where $\cos\Theta = \diag(\cos\theta_1,\ldots,\cos\theta_\ell)$ and $\sin \Theta = \diag(\sin\theta_1,\ldots,\sin\theta_\ell)$. It is easy to see that $\cos\theta_1,\ldots,\cos\theta_\ell$ are exactly the singular values of $U_1^\trp \hat U_1$ that are not equal to $1$, so $\ell, \theta_1,\ldots,\theta_\ell$ are unique.
Let $\Gamma$ (resp.\ $\hat \Gamma$) be the $k$-dimensional (resp.\ $\hat k$-dimensional) linear subspace spanned by the columns of $U_1$ (resp.\ $\hat U_1$).
The (non-zero) \emph{principal angles} between $\Gamma$ and $\hat \Gamma$ are defined to be $\theta_1,\ldots,\theta_{\ell}$. It is not hard to see that the principal angles depend only on the subspaces $\Gamma,\hat\Gamma$ and not on the choices of $U$ and $\hat U$. 
The \emph{maximum principal angle} between $\Gamma$ and $\hat \Gamma$, denoted by $\angle(\Gamma, \hat\Gamma)$ or $\angle(U_1,\hat U_1)$, is defined to be $\theta_\ell$ (or $0$ when $\ell = 0$). When $k = \hat k$, using \eqref{eq:principal-1} and \eqref{eq:principal-2}, it is not hard to show that 
the non-zero eigenvalues of $U_1U_1^\trp - \hat U_1 \hat U_1^\trp$ are $\pm \sin\theta_1,\ldots,\pm\sin\theta_\ell$.
This implies
$\|U_1U_1^\trp - \hat U_1 \hat U_1^\trp\| = \sin\angle(\Gamma,\hat \Gamma)$ and $\|U_1U_1^\trp - \hat U_1 \hat U_1^\trp\|_F = \sqrt{2\sum_{i=1}^\ell \sin^2\theta_i}$. In particular, $\sin\angle(\Gamma,\hat \Gamma)$ is a natural metric between $k$-dimensional subspaces.
\section{Basic Facts and Concentration Inequalities}
\begin{lemma}[{\citep[See][Proposition 2.6.1]{vershynin2018high}}]
\label{lm:sub-Gaussian-sum}
If $X_1,\ldots,X_n\in\bR$ are independent random variables with sub-Gaussian constants $b_1,\ldots,b_n\in\bR_{\ge 0}$ respectively, then $\sum_{i=1}^n X_i$ has sub-Gaussian constant $O\left(\sqrt{\sum_{i=1}^n b_i^2}\right)$.
\end{lemma}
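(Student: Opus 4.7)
\textbf{Proof proposal for Lemma~\ref{lm:sub-Gaussian-sum}.} The plan is to pass through the well-known equivalence between the moment characterization of sub-Gaussianity (the one used to define $b$-sub-Gaussian in Section~\ref{sec:preli}) and a moment generating function (MGF) characterization, which tensorizes cleanly across independent summands. First I would reduce to the centered case by writing $Y_i = X_i - \bE[X_i]$, so that $\sum_i X_i - \bE[\sum_i X_i] = \sum_i Y_i$ and the statement is about the moments of $\sum_i Y_i$. Since the sub-Gaussian constants are defined via centered moments, $Y_i$ also has sub-Gaussian constant $b_i$.

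The key intermediate statement I would establish (up to a universal constant) is: a mean-zero random variable $Y$ with $\bE[|Y|^p]^{1/p}\le b\sqrt p$ for all $p\ge 1$ satisfies $\bE[\exp(\lambda Y)]\le \exp(c\lambda^2 b^2)$ for every $\lambda\in\bR$, for some absolute constant $c>0$. This is proved by Taylor-expanding the exponential, $\bE[\exp(\lambda Y)] = 1 + \sum_{p\ge 2} \lambda^p \bE[Y^p]/p!$, bounding $|\bE[Y^p]|\le (b\sqrt p)^p$, and invoking Stirling to collapse $(b\sqrt p)^p/p!$ into $(Cb^2\lambda^2)^{p/2}/(p/2)!$-type terms that sum to an exponential in $\lambda^2 b^2$; the odd-$p$ terms are handled by the same device applied to $|\lambda Y|$ and noting that they are dominated by a geometric mean of neighboring even terms.

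Once each $Y_i$ has this MGF bound, independence gives
\[
\bE\bigl[\exp\bigl(\lambda\textstyle\sum_i Y_i\bigr)\bigr] = \prod_{i=1}^n \bE[\exp(\lambda Y_i)] \le \exp\bigl(c\lambda^2\textstyle\sum_{i=1}^n b_i^2\bigr),
\]
so the centered sum satisfies the same MGF bound with parameter $B := \sqrt{\sum_i b_i^2}$. The final step is to convert this MGF bound back into a bound on centered $p$-th moments: for any $p\ge 1$ and any $\lambda>0$, Markov's inequality gives $\Pr[|\sum_i Y_i|\ge t]\le 2\exp(c\lambda^2 B^2 - \lambda t)$, optimizing $\lambda = t/(2cB^2)$ yields a sub-Gaussian tail $2\exp(-t^2/(4cB^2))$, and integrating this tail against $p t^{p-1}\,dt$ gives $\bE[|\sum_i Y_i|^p]^{1/p}\le C B\sqrt p$ for an absolute constant $C$. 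That is the desired sub-Gaussian constant $O(B) = O(\sqrt{\sum_i b_i^2})$.

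The main obstacle is bookkeeping of absolute constants across the two conversions (moment $\Rightarrow$ MGF and MGF $\Rightarrow$ moment), since both inequalities lose multiplicative universal factors; however, because the statement is only up to the $O(\cdot)$ notation, no sharp constant tracking is required, and the whole argument is the standard proof of Vershynin's Proposition~2.6.1 that we are invoking as a black box.
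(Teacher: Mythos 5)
Your argument is correct: it is the standard proof of the cited result (Vershynin, Proposition 2.6.1), passing from the centered moment bound to an MGF bound $\bE[\exp(\lambda Y_i)]\le\exp(c\lambda^2 b_i^2)$, tensorizing over independent summands, and converting the resulting Chernoff tail back into $p$-th moment bounds, which is exactly the route the source takes; the paper itself imports the lemma as a black box with no proof. The only places needing care are routine (absolute convergence and summation of the Taylor series for all $\lambda$, and the two lossy conversions between moment and MGF characterizations), and since the claim is only up to an absolute constant these do not affect correctness.
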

\begin{lemma}[{\citep[See][Lemma 2.7.6]{vershynin2018high}}]
\label{lm:sub-Gaussian-square}
If $X\in\bR$ is a mean-zero random variable with sub-Gaussian constant $b\in\bR_{\ge 0}$, then $X^2$ has sub-exponential constant $O(b^2)$.
\end{lemma}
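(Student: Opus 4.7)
The plan is to derive the sub-exponential moment bound on $X^2$ from the sub-Gaussian moment bound on $X$ by a direct application of the $L^p$ triangle inequality (Minkowski). By the paper's definition, showing $X^2$ has sub-exponential constant $O(b^2)$ amounts to proving $\bE[|X^2 - \bE[X^2]|^p]^{1/p} \le C b^2 p$ for some absolute constant $C$ and every $p \ge 1$, while the hypothesis on $X$, combined with $\bE[X] = 0$, gives $\bE[|X|^p]^{1/p} \le b\sqrt{p}$ for every $p \ge 1$.

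The first step will be to centre $X^2$ via Minkowski's inequality, which reduces the task to bounding $\bE[|X|^{2p}]^{1/p}$ plus the constant $\bE[X^2]$. The second step will exploit the algebraic identity $\bE[|X|^{2p}]^{1/p} = \bigl(\bE[|X|^{2p}]^{1/(2p)}\bigr)^2$, so that invoking the sub-Gaussian moment bound at exponent $2p$ and squaring produces a bound of order $b^2 p$. The same idea applied at $p = 2$ handles the constant term $\bE[X^2]$. Adding the two contributions and absorbing the additive constant into the linear-in-$p$ term (using $p \ge 1$) yields the desired $O(b^2 p)$ bound.

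There is essentially no obstacle here: the proof is a short computation, and the only thing to keep straight is the index bookkeeping when translating between the $L^p$ norm of $X^2$ and the $L^{2p}$ norm of $X$. The conceptual content is that squaring converts the $\sqrt{p}$ growth of sub-Gaussian moments into the linear $p$ growth characteristic of sub-exponential ones, which is precisely the heuristic reason squares of sub-Gaussians must be sub-exponential. No probabilistic machinery beyond the two moment definitions and Minkowski's inequality is required.
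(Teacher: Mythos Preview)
Your argument is correct: Minkowski reduces the centred $L^p$ norm of $X^2$ to $\bE[|X|^{2p}]^{1/p}+\bE[X^2]$, the first term is $(\bE[|X|^{2p}]^{1/(2p)})^2\le 2b^2p$ by the sub-Gaussian hypothesis at exponent $2p$, and the second is at most $2b^2$, giving $\bE[|X^2-\bE[X^2]|^p]^{1/p}\le 4b^2p$ for all $p\ge 1$. The paper does not supply its own proof of this lemma---it simply cites \citep[Lemma 2.7.6]{vershynin2018high}---so there is nothing to compare against; your direct moment computation is a complete and standard proof of the cited fact.
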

\begin{lemma}[{\citep[See][Theorem 2.6.2]{vershynin2018high}}]
\label{lm:sub-gaussian}
Let $x_1,\ldots,x_n\in\bR$ be independent random variables. For every $i = 1,\ldots,n$, assume $x_i$ has mean zero and sub-Gaussian constant $b_i\in\bR_{\ge 0}$. Then for every $\delta\in (0,1/2)$, with probability at least $1-\delta$,
\[
\left|\sum_{i=1}^n x_i\right| \le O\left(\sqrt{\log(1/\delta)\sum_{i=1}^n b_i^2}\right).
\]
\end{lemma}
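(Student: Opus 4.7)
The plan is to combine the independence-sum bound (\Cref{lm:sub-Gaussian-sum}) with the standard moment/Markov method, exploiting the fact that the sub-Gaussian constant directly controls all $L^p$ norms.

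First, set $S := \sum_{i=1}^n x_i$. Since $x_1,\ldots,x_n$ are independent, mean zero, with sub-Gaussian constants $b_1,\ldots,b_n$, \Cref{lm:sub-Gaussian-sum} gives that $S$ is mean zero with sub-Gaussian constant $B = O\bigl(\sqrt{\sum_{i=1}^n b_i^2}\bigr)$. By the definition of the sub-Gaussian constant stated in the paper, this means
\[
\bE[|S|^p]^{1/p} \le B\sqrt{p} \quad \text{for every } p \ge 1.
\]

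Second, apply Markov's inequality to $|S|^p$. For any $t > 0$ and any $p \ge 1$,
\[
\Pr\bigl[\, |S| \ge t \,\bigr] \;=\; \Pr\bigl[\, |S|^p \ge t^p \,\bigr] \;\le\; \frac{\bE[|S|^p]}{t^p} \;\le\; \left(\frac{B\sqrt{p}}{t}\right)^{p}.
\]

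Third, optimize the choice of $p$. Set $p := \max\{1, \log(1/\delta)\}$ and $t := eB\sqrt{p}$. Then the bound becomes $e^{-p} \le \delta$, yielding
\[
\Pr\Bigl[\, |S| \ge eB\sqrt{\max\{1,\log(1/\delta)\}} \,\Bigr] \le \delta.
\]
Since $\delta \in (0,1/2)$, we have $\max\{1,\log(1/\delta)\} = \Theta(1+\log(1/\delta)) = \Theta(\log(1/\delta))$ up to an absolute constant (because $\log(1/\delta) \ge \log 2$ is bounded below by a positive constant). Substituting $B = O\bigl(\sqrt{\sum_i b_i^2}\bigr)$ gives the claimed bound.

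The only subtle point is the regime $\delta \in (1/e, 1/2)$ where $\log(1/\delta) < 1$ and the moment inequality is only available for $p \ge 1$; this is handled by taking $p = 1$ and absorbing the resulting constant into the big-$O$, using that $\log(1/\delta)$ is bounded away from zero on this range. I do not expect any serious obstacle — this is a textbook Chernoff-style argument — the only care required is the bookkeeping just described to keep the constant inside $O(\cdot)$ truly absolute (independent of $n$, $\delta$, and the $b_i$'s).
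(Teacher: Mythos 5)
Your proof is correct. Note that the paper does not prove this lemma at all: it is imported directly from \citet{vershynin2018high} (Theorem 2.6.2, the general Hoeffding inequality), so there is no internal argument to compare against. Your derivation is the standard one and, given the paper's moment-based definition of the sub-Gaussian constant ($\bE[|x-\bE x|^p]^{1/p}\le b\sqrt p$ for all $p\ge 1$), it is arguably the most natural route: \Cref{lm:sub-Gaussian-sum} gives the sum a sub-Gaussian constant $B=O\bigl(\sqrt{\sum_i b_i^2}\bigr)$, and then Markov applied to $|S|^p$ with $p=\max\{1,\log(1/\delta)\}$ and $t=eB\sqrt p$ yields the tail bound, with the edge case $\delta\in(1/e,1/2)$ handled exactly as you describe (there $p=1$, $e^{-1}\le\delta$, and $\log(1/\delta)\ge\log 2$ absorbs the constant). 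Vershynin's own proof runs through moment generating functions and a Chernoff bound under a different (but equivalent up to absolute constants) parameterization of sub-Gaussianity; your moment--Markov version buys a self-contained argument that never needs to translate between parameterizations, at the cost of nothing. No gaps.
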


\begin{lemma}[{\citep[See][Theorem 2.8.1]{vershynin2018high}}]
\label{lm:sub-exponential}
Let $x_1,\ldots,x_n\in\bR$ be independent random variables. For every $i = 1,\ldots,n$, assume $x_i$ has mean zero and sub-exponential constant $b_i\in\bR_{\ge 0}$. Then for every $\delta\in (0,1/2)$, with probability at least $1-\delta$,
\[
\left|\sum_{i=1}^n x_i\right| \le O\left(\sqrt{\log(1/\delta)\sum_{i=1}^n b_i^2} + \log(1/\delta)\max_{i=1,\ldots,n} b_i\right).
\]
\end{lemma}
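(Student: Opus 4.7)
The plan is to prove this tail bound via the standard Chernoff/MGF approach, since the lemma is essentially Bernstein's inequality for sums of independent sub-exponential random variables. The key idea is that the sub-exponential moment condition $\bE[|x_i|^p]^{1/p} \le b_i p$ is equivalent (up to constants) to a bound on the moment generating function on a bounded interval around zero, and independence lets me factor the MGF of the sum.

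First I would verify that a mean-zero random variable $x$ with sub-exponential constant $b$ satisfies an MGF bound of the form $\bE[\exp(\lambda x)] \le \exp(C\lambda^2 b^2)$ for all $|\lambda| \le c/b$, where $c, C > 0$ are absolute constants. This step is the main technical work: expand $\exp(\lambda x)$ as a Taylor series, note $\bE[x] = 0$ kills the linear term, and use the moment bound to write $\bE[|\lambda x|^p] \le (|\lambda| b p)^p$. The tail of the series converges geometrically precisely when $|\lambda| b$ is bounded away from $1/e$, and the dominant $p=2$ term produces the $\lambda^2 b^2$ behavior. Then by independence, $\bE[\exp(\lambda \sum_i x_i)] = \prod_i \bE[\exp(\lambda x_i)] \le \exp\bigl(C\lambda^2 \sum_i b_i^2\bigr)$ whenever $|\lambda| \le c/\max_i b_i$.

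Next I would apply Markov's inequality in the usual way: for any $t > 0$ and any admissible $\lambda \in [0, c/\max_i b_i]$,
\[
\Pr\Bigl[\sum_i x_i \ge t\Bigr] \le \exp\!\Bigl(-\lambda t + C\lambda^2 \sum_i b_i^2\Bigr),
\]
and then optimize in $\lambda$. The unconstrained minimizer is $\lambda^\star = t/(2C\sum_i b_i^2)$; when this lies inside the admissible interval we obtain a Gaussian-type tail $\exp\bigl(-t^2/(4C\sum_i b_i^2)\bigr)$, and otherwise we must clamp to $\lambda = c/\max_i b_i$, which yields an exponential-type tail $\exp\bigl(-ct/(2\max_i b_i)\bigr)$. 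The same argument applied to $-x_i$ handles negative deviations, and a union bound over the two signs costs only a factor of two.

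Finally I would invert the tail bound: setting the upper bound on $\Pr[|\sum_i x_i| \ge t]$ equal to $\delta$ and solving in $t$, the first (Gaussian) regime contributes $t = O\bigl(\sqrt{\log(1/\delta)\sum_i b_i^2}\bigr)$ and the second (exponential) regime contributes $t = O(\log(1/\delta) \max_i b_i)$, and taking the maximum (equivalently the sum up to a factor of two) produces exactly the stated bound. The main obstacle is the MGF estimate in the first step, where the precise form of the radius-of-convergence constant $c$ and the quadratic coefficient $C$ require a careful summation of the moment bounds; everything downstream is a mechanical Chernoff optimization.
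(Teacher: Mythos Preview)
Your proposal is correct and is essentially the standard textbook proof of Bernstein's inequality for sub-exponential random variables. Note, however, that the paper does not supply its own proof of this lemma at all: it is stated as a citation to \citep[Theorem 2.8.1]{vershynin2018high} and used as a black box. Your Chernoff/MGF argument (moment condition $\Rightarrow$ MGF bound on $|\lambda|\le c/b$, factor over independence, optimize $\lambda$ to get the Gaussian/exponential dichotomy, invert) is exactly the proof given in Vershynin's book, so there is nothing to compare against in the paper itself.
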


\section{Vector and Matrix Concentration Inequalities}

\begin{lemma}[{\citep[See][Corollary 4.2.13 and Exercise 4.4.3]{vershynin2018high}}]
\label{lm:covering}
There exists an absolute constant $C > 1$ and a set $O'_d\subseteq \Ob_{d,1}$ for every positive integer $d$ such that
\begin{enumerate}
\item $|O'_d| \le 2^{Cd}$ for every $d\in \bZ_{>0}$.
\item for every $d\in \bZ_{>0}$ and every symmetric matrix $A\in\bR^{d\times d}$,
\[
\|A\|\le C \sup_{u\in O'} |u^\trp A u|.
\]
\item for every $d_1,d_2\in \bZ_{>0}$ and every matrix $A\in \bR^{d_1\times d_2}$, 
\[
\|A\| \le C \sup_{u_1\in O'_{d_1}}\sup_{u_2\in O'_{d_2}}|u_1^\trp A u_2|.
\]
\end{enumerate}
\end{lemma}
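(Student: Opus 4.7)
The plan is to construct $O'_d$ as a fixed-resolution $\epsilon$-net of the unit sphere $\Ob_{d,1} = S^{d-1}$ for some small absolute constant $\epsilon$, for instance $\epsilon = 1/4$, and then verify the three properties in turn. A standard volumetric argument produces such a net of size at most $(1 + 2/\epsilon)^d$: one greedily picks maximal $\epsilon$-separated points and observes that the disjoint open $\epsilon/2$-balls around them all fit inside the ball of radius $1 + \epsilon/2$. With $\epsilon = 1/4$ this gives $|O'_d| \le 9^d \le 2^{Cd}$ for an absolute constant $C$, settling item 1 and fixing the net for the other two items.

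For item 2, I would start from the variational identity $\|A\| = \sup_{u \in S^{d-1}} |u^{\trp} A u|$, valid because $A$ is symmetric. Given any unit vector $u$, pick $v \in O'_d$ with $\|u - v\|_2 \le \epsilon$. Expanding
\[
u^{\trp} A u - v^{\trp} A v = (u - v)^{\trp} A u + v^{\trp} A (u - v)
\]
and applying $|w^{\trp} A w'| \le \|A\|\|w\|_2\|w'\|_2$ yields $|u^{\trp} A u - v^{\trp} A v| \le 2\epsilon \|A\|$. Taking the supremum over $u$ and rearranging gives $\|A\| \le (1 - 2\epsilon)^{-1} \sup_{v \in O'_d} |v^{\trp} A v|$, so choosing $C$ slightly larger than $(1 - 2\epsilon)^{-1}$ (while keeping the cardinality bound) discharges item 2.

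For item 3, I would use the bilinear variational identity $\|A\| = \sup_{u_1 \in S^{d_1 - 1},\, u_2 \in S^{d_2 - 1}} |u_1^{\trp} A u_2|$, which holds for arbitrary (not necessarily square) matrices. For each $u_i$, pick $v_i \in O'_{d_i}$ with $\|u_i - v_i\|_2 \le \epsilon$, and write
\[
u_1^{\trp} A u_2 - v_1^{\trp} A v_2 = (u_1 - v_1)^{\trp} A u_2 + v_1^{\trp} A (u_2 - v_2).
\]
Each term is bounded by $\epsilon \|A\|$, giving $|u_1^{\trp} A u_2 - v_1^{\trp} A v_2| \le 2\epsilon \|A\|$, and the same rearrangement delivers $\|A\| \le (1 - 2\epsilon)^{-1} \sup_{v_1, v_2} |v_1^{\trp} A v_2|$. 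Taking $C$ to be the maximum of the constants appearing in the three items completes the proof.

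There is no real obstacle here; the only care needed is to fix a single absolute constant $C$ that simultaneously upper-bounds the net cardinality exponent and the approximation factor $(1 - 2\epsilon)^{-1}$, which is easily arranged by choosing $\epsilon$ as a small constant (e.g.\ $\epsilon = 1/4$) once and for all.
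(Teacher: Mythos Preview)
Your proposal is correct and is exactly the standard $\epsilon$-net argument from Vershynin's book (Corollary~4.2.13 and Exercise~4.4.3) that the paper cites; the paper itself does not supply a proof but simply invokes that reference, so there is nothing further to compare.
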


\begin{lemma}
\label{lm:matrix-concentration}
Suppose $x_1,\ldots,x_n\in\bR^d$ are independent random vectors, and each $x_i$ is mean-zero and $1$-sub-Gaussian. Suppose $w_1,\ldots,w_n\in\bR$ are fixed real numbers. Define $E = \sum_{i=1}^n w_ix_ix_i^\trp$ and $\bar E = \bE[E]$. Then for any $\delta\in (0,1/2)$, with probability at least $1-\delta$,
\[
\|E - \bar E\| \le O\left(\sqrt{(d + \log(1/\delta))\sum_{i=1}^n w_i^2} + (d + \log(1/\delta))\max_i|w_i|\right).
\]
\end{lemma}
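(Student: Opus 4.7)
}
The plan is to reduce the spectral-norm deviation to a scalar concentration problem via the covering argument of Lemma~\ref{lm:covering}, and then apply the sub-exponential Bernstein inequality of Lemma~\ref{lm:sub-exponential} on the net, union-bounding over it. Let $O'_d \subseteq \Ob_{d,1}$ be the net guaranteed by Lemma~\ref{lm:covering}, so $|O'_d|\le 2^{Cd}$ and $\|E-\bar E\| \le C\sup_{u\in O'_d}|u^\trp(E-\bar E)u|$. Fix $u\in O'_d$ and write
\[
u^\trp(E-\bar E)u \;=\; \sum_{i=1}^n w_i\bigl((u^\trp x_i)^2 - \bE[(u^\trp x_i)^2]\bigr).
\]

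Each $u^\trp x_i$ is mean-zero and $1$-sub-Gaussian by the definition of a sub-Gaussian random vector, so by Lemma~\ref{lm:sub-Gaussian-square} the variable $(u^\trp x_i)^2$ has sub-exponential constant $O(1)$, and hence so does its centered version (centering at most changes the sub-exponential constant by a constant factor). Multiplying by $w_i$ yields an independent mean-zero sub-exponential summand with constant $O(|w_i|)$. The plan is then to apply Lemma~\ref{lm:sub-exponential} with failure parameter $\delta' = \delta\cdot 2^{-Cd}/2$, which gives
\[
|u^\trp(E-\bar E)u| \;\le\; O\!\left(\sqrt{\log(1/\delta')\sum_{i=1}^n w_i^2} \;+\; \log(1/\delta')\max_i |w_i|\right),
\]
and $\log(1/\delta') = \Theta(d+\log(1/\delta))$ produces exactly the claimed rate.

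The final step is to union-bound this event over the $\le 2^{Cd}$ points of $O'_d$: since each individual failure probability is at most $\delta\cdot 2^{-Cd}/2$, the overall failure probability is at most $\delta/2 < \delta$, and combining the resulting uniform bound with the covering inequality from Lemma~\ref{lm:covering} yields the stated bound on $\|E-\bar E\|$. The only point that requires a little care is checking that centering the squared sub-Gaussian preserves the sub-exponential constant up to an absolute constant, and that the net-size factor $2^{Cd}$ is absorbed cleanly into the $d$ term of $\log(1/\delta')$; the rest is a routine Bernstein-plus-covering calculation, so I do not expect a genuine obstacle here.
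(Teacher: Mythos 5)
Your proposal is correct and follows essentially the same route as the paper's own proof: a covering argument via Lemma~\ref{lm:covering}, the sub-Gaussian-to-sub-exponential step of Lemma~\ref{lm:sub-Gaussian-square}, Bernstein's inequality (Lemma~\ref{lm:sub-exponential}) on the net with $\delta' = \delta/|O'_d|$, and a union bound absorbing $\log|O'_d| = O(d)$ into the $d + \log(1/\delta)$ term. The centering concern you flag is harmless since the paper's sub-exponential constant is defined via centered moments, so the centered variable has the same constant automatically.
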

\begin{proof}
Let $O'$ denote the set $O'_d\subseteq \Ob_{d,1}$ guaranteed by \Cref{lm:covering}. For any fixed $u\in O'$, by \Cref{lm:sub-Gaussian-square}, $(x_i^\trp u)^2$ has sub-exponential constant $O(1)$, and thus
$w_i(x_i^\trp u)^2$ has sub-exponential constant $|w_i|$. 
Define $\delta':=\delta/|O'|$. By \Cref{lm:sub-exponential}, with probability at least $1-\delta'$,
\begin{equation}
\label{eq:proof-matrix-concentration-1}
\left|\sum_{i=1}^n w_i(x_i^\trp u)^2 - \bE\left[\sum_{i=1}^n w_i(x_i^\trp u)^2\right]\right| \le 
O\left(\sqrt{\log(1/\delta')\sum_{i=1}^n w_i^2}
+ \log(1/\delta')\max_i |w_i|\right).
\end{equation}
By a union bound, with probability at least $1-\delta$, the above inequality holds for every $u\in O'$.
By the definition of $O'$,
\begin{equation}
\label{eq:proof-matrix-concentration-2}
\|E - \bar E\| \le O\left(\sup_{u\in O'}|u(E - \bar E)u|\right) = O\left(\sup_{u\in O'}\left|\sum_{i=1}^n w_i(x_i^\trp u)^2 - \bE\left[\sum_{i=1}^n w_i(x_i^\trp u)^2\right]\right|\right).
\end{equation}
Combining \eqref{eq:proof-matrix-concentration-1} and \eqref{eq:proof-matrix-concentration-2} and noting that $\log(1/\delta') = O(d + \log(1/\delta))$ completes the proof.
\end{proof}
\begin{lemma}
\label{lm:matrix-norm}
Suppose $x_1,\ldots,x_n\in\bR^d$ are independent random vectors, and each $x_i$ is mean-zero and $1$-sub-Gaussian. Suppose $b_1,\ldots,b_n\in\bR$ are fixed real numbers. Then for any $\delta\in (0,1/2)$, with probability at least $1-\delta$,
\[
\|\begin{bmatrix}x_1 & \cdots & x_n\end{bmatrix}\diag(b_1,\ldots,b_n)\| \le O\left(\sqrt{\sum_{i=1}^n b_i^2 + (d + \log(1/\delta))\max_ib_i^2}\right).
\]
\end{lemma}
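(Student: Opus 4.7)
}

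Let $M = \begin{bmatrix}x_1 & \cdots & x_n\end{bmatrix}\diag(b_1,\ldots,b_n)$. The plan is to bound $\|M\|^2 = \|MM^\trp\|$ by a spectral-norm triangle inequality, separating a deterministic ``mean'' term from a fluctuation term that we control via the matrix concentration inequality already proved in \Cref{lm:matrix-concentration}. Observe that
\[
MM^\trp = \sum_{i=1}^n b_i^2\, x_ix_i^\trp ,
\]
which puts us exactly in the setting of \Cref{lm:matrix-concentration} with weights $w_i := b_i^2$.

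First I would bound the expectation. Since $x_i$ is mean-zero and $1$-sub-Gaussian, for every unit vector $u\in\bR^d$ we have $\bE[(u^\trp x_i)^2]^{1/2}\le O(1)$, so $\|\bE[x_ix_i^\trp]\|\le O(1)$ and hence
\[
\|\bE[MM^\trp]\| \le O\!\left(\sum_{i=1}^n b_i^2\right).
\]
Next, applying \Cref{lm:matrix-concentration} to $E=\sum_i b_i^2 x_ix_i^\trp$ gives, with probability at least $1-\delta$,
\[
\|MM^\trp - \bE[MM^\trp]\| \le O\!\left(\sqrt{(d+\log(1/\delta))\sum_{i=1}^n b_i^4} + (d+\log(1/\delta))\max_i b_i^2\right).
\]

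To massage the fluctuation bound into the form required, I would use $\sum_i b_i^4 \le (\max_i b_i^2)\sum_i b_i^2$ and then AM--GM: with $A = (d+\log(1/\delta))\max_i b_i^2$ and $B = \sum_i b_i^2$,
\[
\sqrt{(d+\log(1/\delta))\sum_i b_i^4}\le \sqrt{AB}\le \tfrac12(A+B).
\]
Combining with the expectation bound yields
\[
\|M\|^2 = \|MM^\trp\| \le O\!\left(\sum_{i=1}^n b_i^2 + (d+\log(1/\delta))\max_i b_i^2\right),
\]
and taking square roots gives the stated inequality.

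There is no real obstacle: the argument is essentially a clean reduction of a non-symmetric spectral norm to a symmetric PSD sum via $\|M\|^2=\|MM^\trp\|$, followed by a direct invocation of the symmetric matrix concentration lemma. The only place to take a little care is in ensuring the $\sqrt{\sum b_i^4}$ contribution is absorbed into the $\sqrt{\sum b_i^2}$ and $\sqrt{(d+\log(1/\delta))\max_i b_i^2}$ pieces, which is what the $\sum b_i^4 \le (\max_i b_i^2)(\sum_i b_i^2)$ followed by AM--GM step above accomplishes.
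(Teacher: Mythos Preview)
Your proposal is correct and follows essentially the same approach as the paper: reduce to the symmetric matrix $MM^\trp=\sum_i b_i^2 x_ix_i^\trp$, bound its expectation by $O(\sum_i b_i^2)$ via sub-Gaussianity, control the fluctuation by invoking \Cref{lm:matrix-concentration} with weights $w_i=b_i^2$, and absorb the $\sqrt{(d+\log(1/\delta))\sum_i b_i^4}$ term using $\sum_i b_i^4\le(\max_i b_i^2)\sum_i b_i^2$ (the paper leaves the final AM--GM implicit, but it is the same step).
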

\begin{proof}
Define $F = \begin{bmatrix}x_1 & \cdots & x_n\end{bmatrix}\diag(b_1,\ldots,b_n)$ and $E = FF^\trp = \sum_{i=1}^n b_i^2 x_ix_i^\trp$. By  \Cref{lm:matrix-concentration}, with probability at least $1-\delta$,
\begin{equation}
\label{eq:proof-matrix-norm-1}
\|E - \bE[E]\| \le O\left(\sqrt{(d + \log(1/\delta))\sum_{i=1}^n b_i^4} + (d + \log(1/\delta))\max_ib_i^2\right).
\end{equation}
For every unit vector $u\in \bR^d$, $x_i^\trp u$ is $1$-sub-Gaussian, and thus $\bE[(x_i^\trp u)^2] \le O(1)$. This implies that $u\bE[x_ix_i^\trp] u \le O(1)$ for every unit vector $u\in \bR^d$ and thus $\|\bE[x_ix_i^\trp]\| \le O(1)$. Now we have
\begin{equation}
\label{eq:proof-matrix-norm-2}
\|\bE[E]\| \le \sum_{i=1}^n b_i^2\|x_ix_i^\trp\| \le O\left(\sum_{i=1}^n b_i^2\right).
\end{equation}
Combining \eqref{eq:proof-matrix-norm-1} and \eqref{eq:proof-matrix-norm-2} and using the fact that $\sum_{i=1}^n b_i^4 \le \left(\sum_{i=1}^n b_i^2\right)\max_ib_i^2$, with probability at least $1-\delta$,
\[
\|E\| \le \|E - \bE[E]\| + \|\bE[E]\| \le O\left(\sum_{i=1}^n b_i^2 + (d + \log(1/\delta))\max_ib_i^2\right).
\]
The lemma is proved by $\|F\| = \sqrt{\|E\|}$.
\end{proof}
\begin{lemma}
\label{lm:high-prob-noise-fixed}
Let $X\in\bR^{d\times m}$ be a fixed matrix and let $v\in\bR^m$ be a mean zero random vector with sub-Gaussian constant $1$. For $\delta\in (0,1/2)$, with probability at least $1-\delta$,
\[
\|Xv\|_2 \le O(\|X\|\sqrt{r + \log(1/\delta)}),
\]
where $r$ is the rank of $X$.
\end{lemma}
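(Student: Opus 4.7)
The plan is to reduce the bound on $\|Xv\|_2$ to a scalar sub-Gaussian tail estimate by exploiting the fact that $Xv$ lies in the $r$-dimensional column space of $X$; the only subtlety is to make sure the net we use sits inside that subspace so that the dimension in the final bound is $r$ rather than $d$.

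First I would note that $Xv$ always lies in $C := \col(X)$, a subspace of dimension $r$. Therefore
\[
\|Xv\|_2 = \sup_{u\in C,\ \|u\|_2=1} u^\trp X v.
\]
For any fixed unit vector $u\in\bR^d$, the random variable $u^\trp Xv = (X^\trp u)^\trp v$ is a linear functional of the mean-zero $1$-sub-Gaussian vector $v$, hence (by the paper's definition of sub-Gaussian vectors applied to the unit direction $X^\trp u/\|X^\trp u\|_2$) is a scalar sub-Gaussian random variable with constant $\|X^\trp u\|_2 \le \|X\|$. Applying \Cref{lm:sub-gaussian} yields, for any $\delta'\in(0,1/2)$,
\[
|u^\trp Xv| \le O\bigl(\|X\|\sqrt{\log(1/\delta')}\bigr)
\]
with probability at least $1-\delta'$.

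Next I would build a net of the unit sphere \emph{inside} $C$. Pick an orthonormal basis $U\in\Ob_{d,r}$ of $C$ so that $C\cap S^{d-1} = \{Uw : w\in\Ob_{r,1}\}$, and take $O := \{Uw : w\in O'_r\}$ where $O'_r$ is the set from \Cref{lm:covering}. Then $|O|\le 2^{Cr}$, and the second bullet of \Cref{lm:covering} (applied after the isometric identification of $C$ with $\bR^r$) gives
\[
\|y\|_2 \le C'\sup_{u\in O}|u^\trp y| \qquad \text{for all } y\in C,
\]
in particular for $y=Xv$. Setting $\delta' = \delta/|O|$ and taking a union bound over $O$, with probability at least $1-\delta$ every $u\in O$ satisfies $|u^\trp Xv|\le O(\|X\|\sqrt{\log(|O|/\delta)}) = O(\|X\|\sqrt{r+\log(1/\delta)})$, since $\log|O| = O(r)$. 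Combining with the net inequality yields $\|Xv\|_2 \le O(\|X\|\sqrt{r+\log(1/\delta)})$, as required.

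The only step that needs real thought is the rank-dependent covering: naively invoking \Cref{lm:covering} on $\Ob_{d,1}$ would give a $\sqrt d$ bound, so the key move is to pass to the $r$-dimensional subspace $\col(X)$ before covering. Everything else is a direct application of the sub-Gaussian scalar tail bound plus a union bound over the net.
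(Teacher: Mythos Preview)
Your proposal is correct and follows essentially the same route as the paper: the paper uses the SVD to rotate so that $Xv$ effectively lives in $\bR^r$ (writing $\|Xv\|_2=\|Yv\|_2$ for the $r\times m$ block $Y$), then applies an $O'_r$-net and \Cref{lm:sub-gaussian} exactly as you do; your ``work inside $\col(X)$ and push the net through an orthonormal basis $U\in\Ob_{d,r}$'' is the same argument phrased intrinsically. The only cosmetic point is that the net-to-norm step for a vector is most directly the third bullet of \Cref{lm:covering} with $d_2=1$ (or, equivalently, the second bullet applied to $yy^\trp$), so you may want to cite it that way.
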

\begin{proof}
By the singular value decomposition, there exists $P\in \Ob_d$ such that all but the first $r$ rows of $PX$ are zeros. Let $Y\in \bR^{r\times m}$ denote the first $r$ rows of $PX$. We have 
\begin{equation}
\label{eq:high-prob-noise-fixed-1}
\|Xv\|_2 = \|PXv\|_2 = \|Yv\|_2.
\end{equation}

Let $O'$ be the $O'_r$ in \Cref{lm:covering}. For every $u\in O'$, $u^\trp Yv$ has sub-Gaussian constant $O(\|u^\trp Y\|_2)$.
By \Cref{lm:sub-gaussian}, for $\delta' = \delta/|O'|$, with probability at least $1-\delta'$,
\[
|u^\trp Yv| \le O\left(\|u^\trp Y\|_2\sqrt{\log(1/\delta')}\right) \le O\left(\|Y\|\sqrt{\log(1/\delta')}\right).
\]
By a union bound, with probability at least $1-\delta$,
\begin{equation}
\label{eq:high-prob-noise-fixed-2}
\sup_{u\in O'}|u^\trp Yv| \le O\left(\|u^\trp Y\|_2\sqrt{\log(1/\delta')}\right) \le O\left(\|Y\|\sqrt{\log(1/\delta')}\right).
\end{equation}
By the definition of $O'$,
\begin{equation}
\label{eq:high-prob-noise-fixed-3}
\|Yv\|_2 \le O\left(\sup_{u\in O'}|u^\trp Yv|\right).
\end{equation}
The lemma is proved by combining \eqref{eq:high-prob-noise-fixed-1}, \eqref{eq:high-prob-noise-fixed-2} and \eqref{eq:high-prob-noise-fixed-3} and noting that $\|Y\| = \|X\|$ and $\log(1/\delta') = O(r + \log(1/\delta))$.
\end{proof}
\begin{lemma}
\label{lm:extended-matrix-bernstein}
Let $Z_1,\ldots,Z_n\in\bR^{d_1\times d_2}$ be mean-zero independent random matrices. Suppose for real numbers $R,p_1,\ldots,p_n\in\bR_{\ge 0}$, we have $\Pr[\|Z_i\| \ge R] \le p_i$. Moreover,
\[
\max\{\|\bE [Z_iZ_i^\trp]\|, \|\bE[Z_i^\trp Z_i]\|\} \le \sigma^2_i.
\]
Then with probability at least $1 - 2(d_1 + d_2)e^{-t} - \sum_{i=1}^n p_i$,
\[
\left\|\sum_{i=1}^n Z_i\right\|\le O\left(\sqrt{\sum_{i=1}^n\sigma_i^2t} + Rt\right).
\]
\end{lemma}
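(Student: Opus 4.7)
The plan is to reduce to the standard matrix Bernstein inequality for mean-zero bounded matrices via truncation and centering, absorbing the extra $\sum_i p_i$ into the failure probability in exchange for getting to work with almost-surely-bounded summands.

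First I would truncate by defining $\tilde Z_i := Z_i \mathbf{1}\{\|Z_i\| < R\}$ and introducing the good event $A := \bigcap_{i=1}^n\{\|Z_i\| < R\}$, so that $\Pr[A^c] \le \sum_i p_i$ by a union bound and $\sum_i Z_i = \sum_i \tilde Z_i$ on $A$. Then I would recentre, setting $W_i := \tilde Z_i - \bE[\tilde Z_i]$, which are mean zero, independent, deterministically satisfy $\|W_i\| \le 2R$, and have variance parameters bounded by $\sigma_i^2$: centering only decreases the second moment, and $\bE[\tilde Z_i \tilde Z_i^\trp] = \bE[Z_i Z_i^\trp \mathbf{1}\{\|Z_i\| < R\}] \preceq \bE[Z_i Z_i^\trp]$, giving $\|\bE[W_i W_i^\trp]\| \le \sigma_i^2$, and symmetrically $\|\bE[W_i^\trp W_i]\| \le \sigma_i^2$. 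Standard matrix Bernstein then yields $\|\sum_i W_i\| \le O(\sqrt{\sum_i \sigma_i^2 \, t} + Rt)$ with probability at least $1 - 2(d_1+d_2)e^{-t}$.

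The remaining step is to control the deterministic bias $B := \sum_i \bE[\tilde Z_i]$ appearing in $\sum_i Z_i = \sum_i W_i + B$ on $A$. Since $\bE[Z_i] = 0$, we have $\bE[\tilde Z_i] = -\bE[Z_i \mathbf{1}\{\|Z_i\| \ge R\}]$. For any unit vectors $u, v$, Cauchy--Schwarz gives $|u^\trp \bE[Z_i \mathbf{1}\{\|Z_i\| \ge R\}] v| \le \sqrt{\bE[(u^\trp Z_i v)^2]}\cdot\sqrt{p_i} \le \sigma_i \sqrt{p_i}$, using $\bE[(u^\trp Z_i v)^2] \le \|\bE[Z_i Z_i^\trp]\| \le \sigma_i^2$. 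A second application of Cauchy--Schwarz yields $\|B\| \le \sqrt{\sum_i \sigma_i^2}\sqrt{\sum_i p_i}$, which in the only meaningful regime $\sum_i p_i \le 1$ and $t \ge 1$ is absorbed into the $\sqrt{\sum_i \sigma_i^2 \, t}$ term. A triangle inequality on $A$ and a union bound over the two failure events then give the stated conclusion.

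The main obstacle is precisely this bias term: the hypothesis $\Pr[\|Z_i\| \ge R] \le p_i$ gives no tail information on $Z_i$ beyond the threshold $R$, so one cannot naively bound $\bE[\|Z_i\|\,\mathbf{1}\{\|Z_i\| \ge R\}]$ in terms of the data we have. The Cauchy--Schwarz step above is what bridges this gap, relating $\|\bE[\tilde Z_i]\|$ cleanly to only the variance parameter $\sigma_i$ and the probability $p_i$ that the hypothesis provides, at the mild cost of assuming $\sum_i p_i$ is small enough for the probability guarantee to be nonvacuous.
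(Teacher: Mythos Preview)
Your proposal is correct and follows essentially the same approach as the paper's proof: truncate, recentre, apply matrix Bernstein to the bounded summands, and control the bias $\sum_i \bE[\tilde Z_i]$ via Cauchy--Schwarz against $\sigma_i\sqrt{p_i}$, finally absorbing $\sqrt{\sum_i \sigma_i^2}\sqrt{\sum_i p_i}$ into the main term under the nonvacuous regime $\sum_i p_i \le 1$, $t \ge \Omega(1)$. The only cosmetic difference is that the paper bounds $\|\bE[Z_i\mathbf{1}\{\|Z_i\|\ge R\}]\|$ by estimating $\bE[\|Z_i u\|_2\,\mathbf{1}\{\cdot\}]$ and invoking Jensen, whereas you bound $|u^\trp \bE[\cdot] v|$ directly; both yield $\sigma_i\sqrt{p_i}$.
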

\begin{proof}
Since probabilities are always nonnegative, the lemma is trivial if $\sum_{i=1}^n p_i \ge 1$ or if $t \le c$ for a sufficiently small positive constant $c$. We thus assume $\sum_{i=1}^n p_i \le 1$ and $t \ge \Omega(1)$.

Define $Z_i' = Z_i\ind(\|Z_i\|\le R)$. For every unit vector $u\in\bR^d$, by Cauchy-Schwarz,
\[
\bE[\|Z_iu\|_2\ind(\|Z_i\|\ge R)]^2 \le \bE[\|Z_iu\|_2^2]\bE[\ind(\|Z_i\|\ge R)^2] \le \sigma_i^2 \Pr[\|Z_i\|\ge R] \le \sigma_i^2p_i.
\]
By Jenson's inequality,
\[
\bE[\|Z_iu\|_2\ind(\|Z_i\|\ge R)] \ge \|\bE[Z_iu\ind(\|Z_i\|\ge R)]\|_2 = \|\bE[Z_i\ind(\|Z_i\|\ge R)]u\|_2.
\]
Combining,
\[
\|\bE[Z_i\ind(\|Z_i\|\ge R)]u\|_2 \le \sigma_i\sqrt{p_i},\quad \text{for every unit vector}\ u\in\bR^{d_2},
\]
which implies that $\|\bE[Z_i']\| = \|\bE[Z_i] - \bE[Z_i']\| = \|\bE[Z_i\ind(\|Z_i\| > R)]\| \le \sigma_i \sqrt{p_i}$.

We apply the matrix Bernstein inequality \citep[see][Exercise 5.4.15]{vershynin2018high} to $Z_i'$. For every $i$, $\|Z_i' - \bE[Z_i']\| \le \|Z_i'\| + \|\bE[Z_i']\| \le \|Z_i'\| + \bE[\|Z_i'\|] \le 2R$, and
\[
\bE[(Z_i' - \bE[Z_i'])(Z_i' - \bE[Z_i'])^\trp] = \bE[Z_i'(Z_i')^\trp] - \bE[Z_i']\bE[Z_i']^\trp \preceq \bE[Z_i'(Z_i')^\trp] \preceq \sigma_i^2 I.
\]
The matrix Bernstein inequality implies that with probability at least $1 - 2(d_1 + d_2)e^{-t}$,
\[
\left\|\sum_{i=1}^n Z_i' - \sum_{i=1}^n\bE[Z_i']\right\| \le O\left(\sqrt{\sum_{i=1}^n \sigma_i^2 t} + Rt\right).
\]
By the union bound, with probability at least $1 - 2(d_1 + d_2)e^{-t} - \sum_{i=1}^n p_i$,
\[
\left\|\sum_{i=1}^n Z_i - \sum_{i=1}^n\bE[Z_i']\right\| \le O\left(\sqrt{\sum_{i=1}^n \sigma_i^2 t} + Rt\right),
\]
in which case,
\begin{align*}
\left\|\sum_{i=1}^n Z_i\right\| & \le \left\|\sum_{i=1}^n\bE[Z_i']\right\| + O\left(\sqrt{\sum_{i=1}^n \sigma_i^2 t} + Rt\right)\\
& \le \sum_{i=1}^n \sigma_i\sqrt{p_i} + O\left(\sqrt{\sum_{i=1}^n \sigma_i^2 t} + Rt\right)\\
& \le \sqrt{\left(\sum_{i=1}^n \sigma_i^2\right)\left(\sum_{i=1}^n p_i\right)} + O\left(\sqrt{\sum_{i=1}^n \sigma_i^2 t} + Rt\right).\tag{by Cauchy-Schwarz}
\end{align*}
The lemma is proved by our assumption that $\sum_{i=1}^n p_i \le 1$ and $t \ge \Omega(1)$.
\end{proof}
\section{Fano's Inequality and Local Packing Numbers}
\label{sec:fano}
For two probability distributions $p_1,p_2$ over the same set $\cX$ with probability densities $p_1(x)$ and $p_2(x)$, their KL divergence is defined to be
\[
\kl(p_1\|p_2) = \bE_{X\sim p_1}\left[\log\frac{p_1(X)}{p_2(X)}\right].
\]
For a joint distribution $p$ over random variables $X$ and $Y$, we define the mutual information between $X$ and $Y$ to be
\[
I(X;Y) = \kl(p\|p_X\times p_Y),
\]
where $p_X$ (resp.\ $p_Y$) is the marginal distribution of $X$ (resp.\ $Y$), and $p_X\times p_Y$ is the product distribution of $p_X$ and $p_Y$.
The following claim is standard:
\begin{claim}
For random variables $X,Y,Z$, if $Y$ and $Z$ are conditionally independent given $X$, then
\begin{align}
I(X;Y,Z) & \le I(X;Y) + I(X;Z),\label{eq:mutual-information-1}\\
I(Y;X,Z) & = I(Y;X).\label{eq:mutual-information-2}
\end{align}
\end{claim}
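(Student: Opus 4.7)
The plan is to prove both parts by applying the chain rule for mutual information and then invoking the conditional independence $Y \perp Z \mid X$ in two different ways.

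For the equality $I(Y;X,Z) = I(Y;X)$, I would expand by the chain rule as $I(Y;X,Z) = I(Y;X) + I(Y;Z\mid X)$. The conditional independence assumption is exactly the statement $I(Y;Z\mid X) = 0$, so this part is immediate.

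For the inequality $I(X;Y,Z)\le I(X;Y)+I(X;Z)$, I would expand via the chain rule in the other order: $I(X;Y,Z) = I(X;Y) + I(X;Z\mid Y)$. It therefore suffices to show $I(X;Z\mid Y)\le I(X;Z)$. Writing both sides in terms of entropies gives
\[
I(X;Z\mid Y) = H(Z\mid Y) - H(Z\mid X,Y), \qquad I(X;Z) = H(Z) - H(Z\mid X).
\]
By the conditional independence $Y\perp Z\mid X$, we have $H(Z\mid X,Y) = H(Z\mid X)$, so
\[
I(X;Z\mid Y) - I(X;Z) = H(Z\mid Y) - H(Z) \le 0,
\]
where the final inequality is the standard fact that conditioning cannot increase entropy.

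The whole argument is essentially a two-line manipulation, so there is no real obstacle; the only thing to be careful about is choosing the correct chain rule decomposition (conditioning on $Y$ rather than $Z$) so that the conditional independence $H(Z\mid X,Y)=H(Z\mid X)$ can be applied to reduce $I(X;Z\mid Y)$ to $I(X;Z)$ plus a non-positive entropy difference.
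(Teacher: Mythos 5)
Your proof is correct, and the paper itself offers no proof of this claim at all---it simply labels it standard---so there is nothing to compare against beyond noting that your argument is the textbook one. Both chain-rule decompositions are applied correctly: $I(Y;X,Z)=I(Y;X)+I(Y;Z\mid X)$ with $I(Y;Z\mid X)=0$ gives the equality, and $I(X;Y,Z)=I(X;Y)+I(X;Z\mid Y)$ together with $I(X;Z\mid Y)\le I(X;Z)$ gives the inequality. One point worth tightening: the paper applies this claim to continuous and even singular random objects (e.g.\ $V$ is Haar-distributed on a set of orthonormal-column matrices and has no density with respect to Lebesgue measure on $\bR^{d\times k}$), so the entropies $H(Z)$, $H(Z\mid Y)$, $H(Z\mid X,Y)$ you invoke need not be well defined in the setting where the claim is actually used. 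This is easily repaired without entropies: expand $I(X,Y;Z)$ in two ways, $I(X,Y;Z)=I(X;Z)+I(Y;Z\mid X)=I(Y;Z)+I(X;Z\mid Y)$, so conditional independence gives $I(X;Z\mid Y)=I(X;Z)-I(Y;Z)\le I(X;Z)$, and the rest of your argument goes through verbatim. With that substitution your proof is fully general; as written, it is fine for discrete variables or whenever the relevant (differential) entropies exist.
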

\begin{theorem}[Fano's Inequality]
\label{thm:fano}
Let $X, Y, \hat X$ be random objects. Assume their joint distribution forms a Markov chain $X \rightarrow Y \rightarrow \hat X$. Assume that the marginal distribution of $X$ is uniform over a finite set $\cX$. Then,
\[
\Pr[\hat X \ne X] \ge 1 - \frac{I(X;Y) + \log 2}{\log |\cX|}.
\]
\end{theorem}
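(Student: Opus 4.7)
The plan is to follow the standard information-theoretic proof of Fano's inequality, built around the error indicator $E = \ind[\hat X \ne X]$ and two applications of the chain rule of conditional entropy.

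First I would convert the mutual information hypothesis into a lower bound on $H(X \mid \hat X)$. Because the joint distribution forms a Markov chain $X \to Y \to \hat X$, the data processing inequality gives $I(X; \hat X) \le I(X; Y)$. Since $X$ is uniform on $\cX$ we have $H(X) = \log|\cX|$, so
\[
H(X \mid \hat X) \;=\; H(X) - I(X; \hat X) \;\ge\; \log|\cX| - I(X; Y).
\]
This step transports the problem from mutual information to conditional entropy.

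Next I would expand $H(E, X \mid \hat X)$ in two ways via the chain rule,
\[
H(X \mid \hat X) + H(E \mid X, \hat X) \;=\; H(E \mid \hat X) + H(X \mid E, \hat X).
\]
Because $E$ is a deterministic function of the pair $(X, \hat X)$, the term $H(E \mid X, \hat X)$ vanishes, so $H(X \mid \hat X) = H(E \mid \hat X) + H(X \mid E, \hat X)$. I would then upper-bound each piece: $H(E \mid \hat X) \le H(E) \le \log 2$ by binary entropy, and
\[
H(X \mid E, \hat X) \;=\; \Pr[E=0]\cdot 0 \;+\; \Pr[E=1]\, H(X \mid E=1, \hat X) \;\le\; \Pr[\hat X \ne X]\,\log|\cX|,
\]
using that $X = \hat X$ when $E = 0$ (so the first conditional entropy is zero) and that $X$ takes at most $|\cX|$ values when $E = 1$.

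Combining the two bounds yields $\log|\cX| - I(X;Y) \le \log 2 + \Pr[\hat X \ne X]\,\log|\cX|$, and rearranging gives the stated inequality. There is no real obstacle here; the proof is short and classical. The only thing to be careful about is invoking the data processing inequality in the correct direction, which is immediate from the Markov chain hypothesis, and noting that $E$ is a deterministic function of $(X, \hat X)$ so that the chain rule kills $H(E \mid X, \hat X)$ cleanly.
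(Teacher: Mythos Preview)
The paper states this theorem without proof (it is quoted as a classical result in the appendix on Fano's inequality and local packing numbers), so there is no ``paper's own proof'' to compare against. Your argument is the standard textbook derivation---data processing to pass from $I(X;\hat X)$ to $I(X;Y)$, then the chain-rule expansion of $H(E,X\mid \hat X)$ with $E=\ind[\hat X\ne X]$---and it is correct as written.
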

\begin{lemma}
\label{lm:mutual-information-kl}
In the setting of \Cref{thm:fano}, let $P_x$ denote the conditional distribution of $Y$ given $X = x$. Then,
\[
I(X;Y) \le \max_{x,x'\in \cX}\kl(P_{x'}\|P_x).
\]
\end{lemma}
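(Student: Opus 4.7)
The plan is to use the standard identity expressing mutual information as an averaged KL divergence from each conditional to the marginal, and then bound that marginal by a mixture of conditionals via convexity of KL in its second argument.

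First, I would verify the identity $I(X;Y) = \bE_{X}[\kl(P_X \,\|\, P_Y)]$, where $P_Y$ denotes the marginal distribution of $Y$. This follows directly from the definition of mutual information: writing $p(x,y) = p(x) p(y \mid x)$ and using $p(y\mid x) = P_x(y)$, one obtains
\[
I(X;Y) = \bE_{(X,Y)}\left[\log \frac{p(Y\mid X)}{p(Y)}\right] = \bE_{X}\!\left[\bE_{Y\sim P_X}\!\left[\log \frac{P_X(Y)}{P_Y(Y)}\right]\right] = \bE_{X}[\kl(P_X \,\|\, P_Y)].
\]

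Next, I would observe that the marginal $P_Y$ is itself a mixture of the conditionals: $P_Y(y) = \bE_{X'}[P_{X'}(y)]$, where $X'$ is an independent copy of $X$ (drawn from the uniform distribution on $\cX$ in the setting of Fano's inequality, though uniformity is not needed here). The key analytic step is the convexity of $\kl(P \,\|\, \cdot)$ in its second argument, which yields
\[
\kl(P_x \,\|\, P_Y) = \kl\!\left(P_x \,\Big\|\, \bE_{X'}[P_{X'}]\right) \le \bE_{X'}[\kl(P_x \,\|\, P_{X'})].
\]
Combining with the previous display gives $I(X;Y) \le \bE_{X, X'}[\kl(P_X \,\|\, P_{X'})]$, where $X$ and $X'$ are independent. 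Bounding this average by the worst pair yields
\[
I(X;Y) \le \max_{x,x' \in \cX} \kl(P_x \,\|\, P_{x'}),
\]
and since the maximum is over all ordered pairs, this equals $\max_{x,x'\in\cX}\kl(P_{x'}\,\|\,P_x)$, as claimed.

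There is no real obstacle here; the only subtlety is getting the direction of the KL convexity correct (convexity in the second argument, so the marginal ends up on the right-hand side of the KL, matching the order in the lemma statement up to relabeling of $x$ and $x'$ inside the max). The identity in the first step is a direct algebraic manipulation, and convexity of KL is a standard consequence of Jensen's inequality applied to $-\log$.
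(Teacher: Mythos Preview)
Your proof is correct and follows essentially the same route as the paper: both arguments express $I(X;Y)$ as $\bE_X[\kl(P_X\|P_Y)]$, replace the marginal $P_Y$ by the mixture $\bE_{X'}[P_{X'}]$, apply Jensen's inequality (equivalently, convexity of $\kl(P\|\cdot)$ in the second argument) to obtain $I(X;Y)\le \bE_{X,X'}[\kl(P_X\|P_{X'})]$, and then bound the average by the maximum. The only cosmetic difference is that the paper writes out the Jensen step directly on $-\log$ with the uniform weights $1/|\cX|$, whereas you invoke the convexity property abstractly and correctly note that uniformity of $X$ is not actually needed.
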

\begin{proof}
Let $P_x(y)$ denote $\Pr[Y = y|X = x]$.
Since $X$ distributes uniformly over $\cX$, the marginal distribution of $Y$ is given by $\Pr[Y = y] = \sum_{x\in \cX}P_x(y)/|\cX|$. By the definition of $I(X;Y)$,
\begin{align*}
I(X;Y) & = \bE_X\bE_{Y\sim P_X}\left[\log\frac{P_X(Y)/|\cX|}{\sum_{x\in \cX}P_{x}(Y)/|\cX|^2}\right]\\
& = \bE_X\bE_{Y\sim P_X}\left[\log\frac{P_X(Y)}{\sum_{x\in \cX}P_{x}(Y)/|\cX|}\right]\\
& \le \bE_X\bE_{Y\sim P_X}\left[\frac{1}{|\cX|}\sum_{x\in \cX}\log\frac{P_X(Y)}{P_x(Y)}\right]\tag{by Jensen's Inequality}\\
& = \frac 1{|\cX|} \sum_{x'\in \cX} \frac 1{|\cX|} \sum_{x\in\cX}\bE_{Y\sim P_{x'}}\left[\log\frac{P_{x'}(Y)}{P_{x}(Y)}\right]\\
& = \frac{1}{|\cX|^2}\sum_{x\in\cX}\sum_{x'\in\cX}\kl(P_{x'}\|P_x)\\
& \le \max_{x,x'\in\cX}\kl(P_{x'}\|P_x).
\qedhere
\end{align*}
\end{proof}
\begin{lemma}[{\citep[see][Lemma 1]{cai2013sparse}}]
\label{lm:packing}
There exists an absolute constant $C>0$ with the following property.
For any positive integers $k \le d$ and real number $t \in (0, 1/C]$, there exists a subset $O'\subseteq \Ob_{d,k}$ with size at least $10^{k(d-k)}$ such that
\begin{enumerate}
\item For distinct $U,V\in O'$, $\|UU^\trp - VV^\trp\|_F > t$.
\item For any $U,V\in O'$, $\|UU^\trp - VV^\trp\|_F \le Ct$.
\end{enumerate}
\end{lemma}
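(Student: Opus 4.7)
The plan is to construct a local packing near a fixed reference subspace in the Grassmannian and transfer it to the projector distance $\|UU^\trp - VV^\trp\|_F$. Since $\mathrm{Gr}(k,d)$ is a smooth manifold of dimension $k(d-k)$, which matches the exponent in the target lower bound $10^{k(d-k)}$, I expect a standard volume argument in $\bR^{k(d-k)}$ to do the job, provided we first set up a suitable bi-Lipschitz chart.

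First, I would fix the reference point $U_0 = \begin{bmatrix}I_k \\ 0\end{bmatrix}\in\Ob_{d,k}$ and parametrize a neighborhood of $\col(U_0)$ in the Grassmannian by matrices $A\in\bR^{(d-k)\times k}$ via
\[
U_A = \begin{bmatrix}I_k \\ A\end{bmatrix}(I_k + A^\trp A)^{-1/2} \in \Ob_{d,k}.
\]
The key technical step is to establish that there exist absolute constants $c_1, c_2 > 0$ such that whenever $\|A\|_F, \|B\|_F \le c_1$,
\[
c_2\|A-B\|_F \,\le\, \|U_A U_A^\trp - U_B U_B^\trp\|_F \,\le\, 2\|A-B\|_F.
\]
This follows from the expansion $(I + A^\trp A)^{-1/2} = I - \tfrac{1}{2}A^\trp A + O(\|A\|_F^4)$ and an inspection of the $2\times 2$ block structure of $U_A U_A^\trp$: its off-diagonal $(d-k)\times k$ block equals $A$ up to cubic corrections in $A$, which delivers the lower bound, while the upper bound follows from elementary operator-norm bookkeeping.

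With this chart in hand, I would apply a standard volume packing in $\bR^{(d-k)\times k}\cong\bR^{k(d-k)}$: the Frobenius ball of radius $R$ admits a pairwise $\epsilon$-separated subset of size at least $(R/\epsilon)^{k(d-k)}$. Setting $\epsilon = t/c_2$ and $R = Ct/4$, the images $\{U_{A_i}\}$ have all pairwise projector distances lying in the window $(t, Ct]$. Choosing $C$ a sufficiently large absolute constant makes $R/\epsilon \ge 10$, so that $N \ge 10^{k(d-k)}$, and further enlarging $C$ ensures $R \le c_1$ whenever $t \le 1/C$, so every $A_i$ stays inside the validity region of the chart. This produces the desired $O'$.

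The main obstacle is the bi-Lipschitz step. While the leading-order calculation is immediate, extracting constants $c_1, c_2$ that are independent of $k$ and $d$ requires careful uniform control of the remainder in the matrix expansion of $(I + A^\trp A)^{-1/2}$, most cleanly done by diagonalizing $A^\trp A$ and handling each eigenvalue separately. Once the bi-Lipschitz inequality is established, the Euclidean volume packing and the final tuning of $C$ are entirely routine.
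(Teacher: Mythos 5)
The paper does not actually prove this lemma: it is imported verbatim as Lemma 1 of \citet{cai2013sparse}, so there is no internal proof to compare against. Your proposal is a legitimate self-contained derivation, and it follows the same general philosophy as the standard proofs in the literature (a local chart of the Grassmannian around a reference subspace, a bi-Lipschitz comparison between the chart parameter and the projector Frobenius distance, and a volume-ratio packing bound in $\bR^{k(d-k)}$); the bi-Lipschitz step is where the real work sits, and your sketch is right that the off-diagonal block of $U_AU_A^\trp$ is $A(I_k+A^\trp A)^{-1}=A+\Phi(A)$ with $\Phi$ cubic at the origin --- just note that for the lower bound you need a \emph{Lipschitz} estimate $\|\Phi(A)-\Phi(B)\|_F\le O(c_1^2)\|A-B\|_F$ on the region $\|A\|_F,\|B\|_F\le c_1$, not merely that $\Phi$ is pointwise $O(c_1^3)$, since $\|A-B\|_F$ may be far smaller than $c_1^3$; this is routine but is the precise form of the ``uniform control of the remainder'' you flag. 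One concrete slip in the final tuning: with $R=Ct/4$ and $t\le 1/C$ you only get $R\le 1/4$, and enlarging $C$ does \emph{not} force $R\le c_1$, so the chart-validity condition can fail as written. The fix is trivial: take $R$ proportional to $t$ with a constant depending only on $c_2$ (e.g.\ $R=20t/c_2$, with $\epsilon=2t/c_2$ to get the strict separation $>t$), and then choose $C\ge\max\{80/c_2,\,20/(c_1c_2)\}$ so that both the diameter bound $\le Ct$ and $R\le c_1$ hold for all $t\le 1/C$. With that retuning, and the volume argument $N\ge (R/\epsilon)^{k(d-k)}\ge 10^{k(d-k)}$ (trivial when $k=d$), your construction indeed yields the stated packing.
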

\section{Proof of Theorem~\ref{thm:DK}}
\thmDK*
\label{sec:proof-DK}
\begin{proof}
We assume $\frac{2\|A - \hat A\|}{\lambda_k - \lambda_{k+1}} < 1$ because otherwise the theorem is trivial.

Let $\hat \lambda_i$ denote the $i$-th largest eigenvalue of $\hat A$. 
By Weyl's inequality, 
\[
\hat \lambda_{k+1}  - \lambda_{k+1} \le \|A - \hat A\| \le \frac{\lambda_k - \lambda_{k+1}}{2},
\]
where we used our assumption $\frac{2\|A - \hat A\|}{\lambda_k - \lambda_{k+1}} < 1$ in the last inequality. This implies
\begin{equation}
\label{eq:proof-DK-1}
\lambda_k - \hat \lambda_{k+1} \ge \frac{\lambda_k - \lambda_{k+1}}{2} > 0.
\end{equation}
By the Davis-Kahan $\sin\theta$ theorem \citep{davis1970rotation},
\begin{equation}
\label{eq:proof-DK-2}
\sin \theta \le \frac{\|A - \hat A\|}{\lambda_k - \hat \lambda_{k+1}}.
\end{equation}
Combining \eqref{eq:proof-DK-1} and \eqref{eq:proof-DK-2} completes the proof.
\end{proof}

\section{Proof of Theorem~\ref{thm:PCA-fixed}}
\label{sec:proof-PCA-fixed}
We recall:
\thmPCAfixed*
By \eqref{eq:PCA-DK}, it suffices to upper bound the spectral norm of $A - \sum_{i=1}^n w_i\mu_i\mu_i^\trp$. We achieve this goal by proving \Cref{lm:PCA-decomposition-1,lm:PCA-decomposition-2,lm:PCA-decomposition-3} in which we bound the spectral norm of each term on the right-hand-side of the following decomposition:
\begin{equation}
\label{eq:PCA-decomposition}
A - \sum_{i=1}^n w_i \mu_i\mu_i^\trp =  \sum_{i=1}^n w_i \mu_i\bar z_i^\trp + \sum_{i=1}^n w_i\bar z_i \mu_i^\trp + \sum_{i=1}^n \frac{w_i}{m_i(m_i - 1)}\sum_{j_1\ne j_2}z_{ij_1}z_{ij_2},
\end{equation}
where $\bar z_i = \frac 1{m_i}\sum_{j=1}^{m_i}z_{ij}$. 
\begin{lemma}
\label{lm:PCA-decomposition-1}
In the setting of \Cref{thm:PCA-fixed}, for any $\delta\in (0,1/2)$, with probability at least $1-\delta$,
\[
\|\sum_{i=1}^n w_i\mu_i\bar z_i^\trp\| \le O\left(\xi\sqrt{d + \log(1/\delta)}\right).
\]
\end{lemma}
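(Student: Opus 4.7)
The plan is to bound the spectral norm of $M := \sum_{i=1}^n w_i \mu_i \bar z_i^\trp$ via an $\varepsilon$-net discretization combined with a scalar sub-Gaussian tail bound. Since the $\bar z_i$'s are averages of independent mean-zero sub-Gaussian noise with constant $\eta_i$, Lemma~\ref{lm:sub-Gaussian-sum} gives that each $\bar z_i$ is mean-zero and sub-Gaussian with constant $O(\eta_i/\sqrt{m_i})$, and the $\bar z_i$'s are independent across $i$.

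First I would fix unit vectors $u, v \in \bR^d$ and analyze the scalar
\[
u^\trp M v \;=\; \sum_{i=1}^n w_i (u^\trp \mu_i)(\bar z_i^\trp v).
\]
For each $i$, the factor $\bar z_i^\trp v$ is sub-Gaussian with constant $O(\eta_i/\sqrt{m_i})$, so the $i$-th summand (with the deterministic multipliers $w_i$ and $u^\trp \mu_i$) is mean-zero sub-Gaussian with constant $O(w_i |u^\trp \mu_i|\eta_i/\sqrt{m_i})$. Independence across $i$ and Lemma~\ref{lm:sub-Gaussian-sum} then give that $u^\trp M v$ is sub-Gaussian with constant
\[
O\!\left(\sqrt{\sum_{i=1}^n w_i^2 (u^\trp \mu_i)^2 \eta_i^2/m_i}\right) \;=\; O\!\left(\sqrt{u^\trp \Big(\sum_{i=1}^n w_i^2 \mu_i \mu_i^\trp \eta_i^2/m_i\Big) u}\right) \;\le\; O(\xi),
\]
where the last inequality uses $\|u\|_2 = 1$ and the definition of $\xi$. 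Lemma~\ref{lm:sub-gaussian} then yields, for any $\delta' \in (0, 1/2)$,
\[
|u^\trp M v| \;\le\; O\!\left(\xi \sqrt{\log(1/\delta')}\right)
\]
with probability at least $1 - \delta'$.

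To lift this to a bound on $\|M\|$, I would invoke the third bullet of Lemma~\ref{lm:covering} applied with $d_1 = d_2 = d$: there exist nets $O'_d \subseteq \Ob_{d,1}$ of size at most $2^{Cd}$ such that $\|M\| \le C\sup_{u \in O'_d}\sup_{v \in O'_d} |u^\trp M v|$. Taking a union bound over the $|O'_d|^2 \le 2^{2Cd}$ pairs $(u,v)$ and choosing $\delta' = \delta/|O'_d|^2$, so that $\log(1/\delta') = O(d + \log(1/\delta))$, the per-pair tail bound gives
\[
\|M\| \;\le\; O\!\left(\xi\sqrt{d + \log(1/\delta)}\right)
\]
with probability at least $1 - \delta$, which is exactly the claim.

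The main obstacle is conceptual rather than computational: one has to recognize that the natural ``variance proxy'' after conditioning on $u$ is the quadratic form $u^\trp (\sum_i w_i^2 \mu_i \mu_i^\trp \eta_i^2/m_i) u$, whose worst case over unit $u$ is precisely the spectral norm $\xi^2$ that appears in the theorem statement. Once this identification is made, the remainder is a standard net-plus-Hoeffding argument, and no conditioning subtleties arise because the $\mu_i$ are deterministic here and the $\bar z_i$ are independent across $i$.
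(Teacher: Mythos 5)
Your proposal is correct and follows essentially the same route as the paper's proof: fix unit vectors from the covering set of Lemma~\ref{lm:covering}, note that $\bar z_i^\trp v$ is $O(\eta_i/\sqrt{m_i})$-sub-Gaussian by Lemma~\ref{lm:sub-Gaussian-sum}, bound the quadratic form $\sum_i w_i^2(\mu_i^\trp u)^2\eta_i^2/m_i$ by $\xi^2$, apply the scalar sub-Gaussian tail bound with $\delta' = \delta/|O'|^2$, and union bound over pairs in the net. The only cosmetic difference is that you merge the summands into one sub-Gaussian variable before applying the tail bound, whereas the paper applies Lemma~\ref{lm:sub-gaussian} directly to the independent sum; these are interchangeable.
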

\begin{proof}
Let $O'$ denote the set $O'_d\subseteq \Ob_{d,1}$ guaranteed by \Cref{lm:covering}. For any fixed $u,v\in O'$, by \Cref{lm:sub-Gaussian-sum}, $\bar z_i^\trp v$ is $\eta_i/\sqrt{m_i}$-sub-Gaussian, and thus
$w_i(\mu_i^\trp u)(\bar z_i^\trp v)$ is $w_i|\mu_i^\trp u|\eta_i/\sqrt{m_i}$-sub-Gaussian. 
By \Cref{lm:sub-gaussian}, for any $\delta'\in (0,1/2)$, with probability at least $1-\delta'$,
\begin{equation}
\label{eq:PCA-decomposition-1-1}
\left|\sum_{i=1}^n w_i(\mu_i^\trp u)(\bar z_i^\trp v)\right| \le O\left(\sqrt{\log(1/\delta')\sum_{i=1}^nw_i^2(\mu_i^\trp u)^2\eta_i^2/m_i}\right).
\end{equation}
By the definition of $\xi$,
\[
\sum_{i=1}^nw_i^2(\mu_i^\trp u)^2\eta_i^2/m_i = u^\trp \left(\sum_{i=1}^nw_i^2\mu_i\mu_i^\trp\eta_i^2/m_i\right)u 
\le \left\| \sum_{i=1}^nw_i^2\mu_i\mu_i^\trp\eta_i^2/m_i\right\| = \xi^2.
\]
Plugging this into \eqref{eq:PCA-decomposition-1-1},
and setting $\delta' = \delta/|O'|^2$, with probability at least $1 - \delta/|O'|^2$,
\[
\left|\sum_{i=1}^n w_i(\mu_i^\trp u)(\bar z_i^\trp v)\right| \le O\left(\xi \sqrt{\log(1/\delta')}\right) \le O\left(\xi\sqrt{d + \log(1/\delta)}\right),
\]
where we used the fact that $\log |O'| = O(d)$ to obtain the last inequality.
By a union bound over $u,v\in O'$, with probability at least $1-\delta$, $\sup_{u\in O'}\sup_{v\in O'}|u^\trp \left(\sum_{i=1}^n w_i \mu_i\bar z_i^\trp\right) v| \le O(\xi\sqrt{d + \log(1/\delta)}$. The lemma is proved by combining this with $\left\|\sum_{i=1}^n w_i \mu_i\bar z_i^\trp\right\| \le \sup_{u\in O'}\sup_{v\in O'}|u^\trp \left(\sum_{i=1}^n w_i \mu_i\bar z_i^\trp\right) v|$ by \Cref{lm:covering}.
\end{proof}
\begin{lemma}
\label{lm:PCA-decomposition-2}
In the setting of \Cref{thm:PCA-fixed}, 
define $Z = \sum_{i=1}^n \frac{w_i}{m_i(m_i - 1)}(\sum_{j=1}^{m_i} z_{ij})(\sum_{j=1}^{m_i} z_{ij})^\trp$.
For any $\delta\in (0,1/2)$,
with probability at least $1-\delta$,
\begin{align}
\left\|Z - \bE[Z]\right\|& \notag\\
\le {} & O\left(\sqrt{(d + \log(1/\delta))\sum_{i=1}^n w_i^2\eta_i^4/m_i^2}
+ (d + \log(1/\delta))\max_i w_i\eta_i^2/m_i
\right).\label{eq:PCA-decomposition-2}
\end{align}
\end{lemma}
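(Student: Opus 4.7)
The plan is to reduce this lemma to the matrix concentration bound of Lemma~\ref{lm:matrix-concentration} after a clean rescaling step. First, I would define $s_i := \sum_{j=1}^{m_i} z_{ij}$, so that $Z = \sum_{i=1}^n \frac{w_i}{m_i(m_i-1)} s_i s_i^\trp$. Since the noise vectors $(z_{ij})_j$ are independent and each has sub-Gaussian constant $\eta_i$, Lemma~\ref{lm:sub-Gaussian-sum} (applied coordinate-wise along an arbitrary unit direction) yields that $s_i$ is mean-zero with sub-Gaussian constant $O(\eta_i \sqrt{m_i})$; moreover the $s_i$'s are independent across $i$ because the underlying $z_{ij}$'s are independent.

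Next I would rescale: set $\tilde s_i := s_i / (C \eta_i \sqrt{m_i})$ for the appropriate absolute constant $C$ so that every $\tilde s_i$ is $1$-sub-Gaussian, and rewrite
\[
Z \;=\; \sum_{i=1}^n \tilde w_i \, \tilde s_i \tilde s_i^\trp, \qquad \tilde w_i \;=\; \frac{C^2\, w_i\, \eta_i^2\, m_i}{m_i(m_i-1)} \;=\; \frac{C^2 w_i \eta_i^2}{m_i - 1}.
\]
Now the assumptions of Lemma~\ref{lm:matrix-concentration} are satisfied for the independent $1$-sub-Gaussian vectors $\tilde s_1,\ldots,\tilde s_n$ with fixed weights $\tilde w_1,\ldots,\tilde w_n$. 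Applying that lemma with failure probability $\delta$ gives, with probability at least $1-\delta$,
\[
\|Z - \bE[Z]\| \;\le\; O\!\left( \sqrt{(d + \log(1/\delta)) \sum_{i=1}^n \tilde w_i^2} \;+\; (d + \log(1/\delta)) \max_i \tilde w_i \right).
\]

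Finally, I would simplify the $\tilde w_i$'s using the standing assumption $m_i \ge 2$ for every user $i$, which forces $m_i - 1 \ge m_i/2$ and hence $\tilde w_i = O(w_i \eta_i^2 / m_i)$. Substituting this bound into the display above immediately produces the claimed inequality~\eqref{eq:PCA-decomposition-2}. The argument contains no real obstacle; the only subtlety worth flagging is that the rescaling introduces a multiplicative absolute constant in the sub-Gaussian norm, which is absorbed into the big-$O$ constants of the final bound, and that the reduction $1/(m_i-1) \le 2/m_i$ relies crucially on the at-least-two-samples-per-user assumption established earlier in the section.
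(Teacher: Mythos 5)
Your proposal is correct and matches the paper's own proof essentially step for step: both rescale the summed noise vectors $\sum_j z_{ij}$ by their sub-Gaussian constant $O(\eta_i\sqrt{m_i})$ (via Lemma~\ref{lm:sub-Gaussian-sum}) and then invoke Lemma~\ref{lm:matrix-concentration} with the induced weights, absorbing $1/(m_i-1)\le 2/m_i$ into the constants. The only difference is that you spell out the final weight simplification explicitly, which the paper leaves implicit.
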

\begin{proof}
By \Cref{lm:sub-Gaussian-sum}, $\sum_{j=1}^{m_i}z_{ij}$ has sub-Gaussian constant $b_i$ for some $b_i\in\bR_{\ge 0}$ satisfying $b_i = O(\sqrt{m_i}\eta_i)$. Define $\hat z_i = \frac{1}{b_i}\sum_{j=1}^{m_i}z_{ij}$ and now $\hat z_i$ has sub-Gaussian constant $1$. We write $Z$ in terms of $\hat z_i$:
\[
Z = \sum_{i=1}^n \frac{w_ib_i^2}{m_i(m_i - 1)}\hat z_i\hat z_i^\trp.
\]
Applying \Cref{lm:matrix-concentration} completes the proof.
\end{proof}
\begin{lemma}
\label{lm:PCA-decomposition-3}
In the setting of \Cref{thm:PCA-fixed}, 
define $Z = \sum_{i=1}^n \frac{w_i}{m_i(m_i - 1)}\sum_{j=1}^{m_i}z_{ij}z_{ij}^\trp$.
For any $\delta\in(0,1/2)$,
with probability at least $1-\delta$,
\begin{align}
& \left\|Z - \bE[Z]\right\| \notag \\
\le {} & O\left(\sqrt{(d + \log(1/\delta))\sum_{i=1}^n w_i^2\eta_i^4/m_i^3}
+ (d + \log(1/\delta))\max_i w_i\eta_i^2/m_i^2
\right).\label{eq:PCA-decomposition-3}
\end{align}
\end{lemma}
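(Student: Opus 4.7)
The structure of $Z$ is even simpler than that of Lemma~\ref{lm:PCA-decomposition-2}: expanding the inner sum makes $Z$ a \emph{single} sum of independent rank-one matrices,
\[
Z = \sum_{i=1}^n\sum_{j=1}^{m_i} \frac{w_i}{m_i(m_i-1)}\, z_{ij}z_{ij}^\trp ,
\]
where the $z_{ij}$ are mutually independent (both across $i$ and across $j$), each mean-zero and $\eta_i$-sub-Gaussian. This puts us squarely in the hypothesis of Lemma~\ref{lm:matrix-concentration}, which is exactly the tool the paper uses one lemma earlier. So the plan is to reduce to that lemma by a trivial renormalization.

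Concretely, I would set $\tilde z_{ij} = z_{ij}/\eta_i$, so that the $\tilde z_{ij}$ are independent, mean-zero, and $1$-sub-Gaussian, and rewrite
\[
Z = \sum_{i=1}^n\sum_{j=1}^{m_i} \tilde w_{ij}\, \tilde z_{ij}\tilde z_{ij}^\trp , \qquad \tilde w_{ij} := \frac{w_i\,\eta_i^2}{m_i(m_i-1)}.
\]
Applying Lemma~\ref{lm:matrix-concentration} to this family (indexed by the pairs $(i,j)$, of total count $\sum_i m_i$) yields, with probability at least $1-\delta$,
\[
\|Z-\bE[Z]\|\ \le\ O\!\left(\sqrt{(d+\log(1/\delta))\sum_{i=1}^n\sum_{j=1}^{m_i}\tilde w_{ij}^{\,2}}\ +\ (d+\log(1/\delta))\max_{i,j}|\tilde w_{ij}|\right).
\]

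The only remaining step is to massage the $(m_i-1)$ factors into the $m_i$ factors that appear in the statement. Since the hypothesis of the theorem forces $m_i\ge 2$, we have $m_i-1\ge m_i/2$, so
\[
\sum_{i,j}\tilde w_{ij}^{\,2}\ =\ \sum_{i=1}^n\frac{w_i^2\eta_i^4}{m_i(m_i-1)^2}\ \le\ \frac{4\,w_i^2\eta_i^4}{m_i^3}\ \text{summed over }i, \qquad \max_{i,j}|\tilde w_{ij}|\ \le\ \max_i \frac{2\,w_i\eta_i^2}{m_i^2},
\]
which absorbs into the big-$O$ and delivers exactly \eqref{eq:PCA-decomposition-3}. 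There is no real obstacle here: the only thing to verify is that the independence assumed by Lemma~\ref{lm:matrix-concentration} holds over the full double-indexed family, which is immediate since the $z_{ij}$ are drawn independently across both $i$ and $j$ by assumption. In particular, unlike Lemma~\ref{lm:PCA-decomposition-2}, no preliminary step of ``summing noise and bounding its sub-Gaussian constant'' is needed, because we never form a sum of noise vectors before taking the outer product.
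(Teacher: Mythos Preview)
Your proposal is correct and follows essentially the same approach as the paper: normalize $z_{ij}$ by $\eta_i$ to get $1$-sub-Gaussian vectors, write $Z$ as a double-indexed weighted sum of rank-one terms with weights $w_i\eta_i^2/(m_i(m_i-1))$, and invoke Lemma~\ref{lm:matrix-concentration}. The paper's proof is even terser, leaving the $(m_i-1)\to m_i$ absorption (via $m_i\ge 2$) implicit in the big-$O$, but otherwise the arguments are identical.
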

\begin{proof}
Define $\hat z_{ij} = \frac{1}{\eta_i}z_{ij}$ and now $\hat z_{ij}$ has sub-Gaussian constant $1$. We write $Z$ in terms of $\hat z_{ij}$:
\[
Z = \sum_{i=1}^n \sum_{j=1}^{m_i}\frac{w_i\eta_i^2}{m_i(m_i - 1)}\hat z_{ij}\hat z_{ij}^\trp.
\]
Applying \Cref{lm:matrix-concentration} completes the proof.
\end{proof}
We are now ready to finish the proof of \Cref{thm:PCA-fixed}.
\begin{proof}[Proof of \Cref{thm:PCA-fixed}]
We start with the last term in \eqref{eq:PCA-decomposition}:
\begin{align*}
& \sum_{i=1}^n\frac{w_i}{m_i(m_i - 1)}\sum_{j_1\ne j_2}z_{ij_1}z_{ij_2}\\
= {} & 
\sum_{i=1}^n\frac{w_i}{m_i(m_i - 1)}\left(\sum_{j=1}^{m_i}z_{ij}\right)\left(\sum_{j=1}^{m_i}z_{ij}\right)^\trp - \sum_{i=1}^n \frac{w_i}{m_i(m_i - 1)}\sum_{j=1}^{m_i}z_{ij}z_{ij}^\trp.
\end{align*}
It is clear that $\bE\left[\sum_{i=1}^n\frac{w_i}{m_i(m_i - 1)}\sum_{j_1\ne j_2}z_{ij_1}z_{ij_2}\right] = 0$, so combining \Cref{lm:PCA-decomposition-2,lm:PCA-decomposition-3}, with probability at least $1-\delta$,
\begin{equation}
\label{eq:PCA-fixed-proof-1}
\left\|\sum_{i=1}^n\frac{w_i}{m_i(m_i - 1)}\sum_{j_1\ne j_2}z_{ij_1}z_{ij_2}\right\| \le O\left(\sqrt{(d + \log(1/\delta))\sum_{i=1}^n w_i^2\eta_i^4/m_i^2}
+ (d + \log(1/\delta))\max_i w_i\eta_i^2/m_i\right),
\end{equation}
where we used the fact that the right-hand-side of \eqref{eq:PCA-decomposition-2} is at most the right-hand-side of \eqref{eq:PCA-decomposition-3} (up to a constant factor).

By \Cref{lm:PCA-decomposition-1},
\begin{equation}
\label{eq:PCA-fixed-proof-2}
\left\|\sum_{i=1}^n w_i \mu_i\bar z_i^\trp + \sum_{i=1}^n w_i\bar z_i \mu_i^\trp\right\| \le 2\left\|\sum_{i=1}^n w_i \mu_i\bar z_i^\trp\right\|
\le 
O\left(\xi\sqrt{d + \log(1/\delta)}\right).
\end{equation}
Plugging \eqref{eq:PCA-fixed-proof-1} and \eqref{eq:PCA-fixed-proof-2} into \eqref{eq:PCA-decomposition} and then into \eqref{eq:PCA-DK} proves the theorem.
\end{proof}
\section{Proof of Corollary~\ref{cor:PCA-fixed}}
\label{sec:proof-cor-PCA-fixed}
\begin{proof}
By the definition of $\xi^2$ in \Cref{thm:PCA-fixed}, $\xi^2 = \|\sum_{i=1}^n \frac{\mu_i\mu_i^\trp t^2}{n^2}\|  \le \frac{t^2\sigma_1^2}{n}$.
Plugging this into \eqref{eq:PCA-fixed}, we know that the following inequality holds with probability at least $1-\delta$:
\begin{align}
\sin\theta & \le O\left(\sigma_k^{-2}\sqrt{(d + \log(1/\delta))\left(\frac{t^2\sigma_1^2}{n} + \frac{t^4}{n}\right)}
+
\sigma_k^{-2}(d + \log(1/\delta))\frac{t^2}{n}
\right)\notag \\
& = O\left(\frac{t\sigma_1 + t^2}{\sigma_k^2}\sqrt{\frac{d + \log(1/\delta)}{n}} +
\sigma_k^{-2}(d + \log(1/\delta))\frac{t^2}{n}
 \right).
\label{eq:PCA-fixed-2}
\end{align}
Since $\sin\theta \le 1$ always holds, inequality \eqref{eq:PCA-fixed-2} implies
\begin{equation}
\label{eq:PCA-fixed-3}
\sin\theta  \le O\left(\frac{t\sigma_1 + t^2}{\sigma_k^2}\sqrt{\frac{d + \log(1/\delta)}{n}} +
\min\left\{1,\sigma_k^{-2}(d + \log(1/\delta))\frac{t^2}{n}\right\}
 \right)
\end{equation}
Since $\min\{1,y\} \le \sqrt y$ for every $y\in\bR_{\ge 0}$, we have 
\begin{equation}
\label{eq:PCA-fixed-4}
\min\left\{1, \sigma_k^{-2}(d + \log(1/\delta))\frac{t^2}{n}\right\} \le \frac{t}{\sigma_k}\sqrt{\frac{(d + \log(1/\delta)}{n}} \le \frac{t\sigma_1}{\sigma_k^2}\sqrt{\frac{(d + \log(1/\delta)}{n}}.
\end{equation}
Plugging \eqref{eq:PCA-fixed-4} into \eqref{eq:PCA-fixed-3} proves the corollary.
\end{proof}
\section{Proof of Claim~\ref{claim:sigma-k}}
\label{sec:proof-sigma-k}
\begin{proof}
Without loss of generality, we can assume that the subspace $\Gamma$ in Assumption~\ref{assumption:mu} is the subspace containing all vectors with all but the first $k$ coordinates being zeros. We let $\hat \mu_i\in\bR^k$ denote the first $k$ coordinates of $\mu_i$. By \Cref{lm:matrix-concentration}, with probability at least $1-\delta/2$,
\begin{align}
& \left\|\sum_{i=1}^n w_i\hat\mu_i\hat\mu_i^\trp - \bE\left[\sum_{i=1}^n w_i\hat\mu_i\hat\mu_i^\trp\right]\right\|\notag \\
\le {} & O\left(\sigma^2\left(\sqrt{(k + \log(1/\delta))\sum_{i=1}^n w_i^2} + (k + \log(1/\delta))\max_i|w_i|\right)\right)\notag \\
\le {} & O\left(\sigma^2\left(\sqrt{\frac 1{C_*}} + \frac{1}{C_*}\right)\right),\label{eq:proof-sigma-k-1}
\end{align}
where we used the fact that $\sum_{i=1}^n w_i^2 \le \max_i w_i \le 1/C^*(k + \log(1/\delta))$.
Choosing $C_*$ large enough, \eqref{eq:proof-sigma-k-1} implies
\begin{equation}
\label{eq:proof-sigma-k-2}
\left\|\sum_{i=1}^n w_i\hat\mu_i\hat\mu_i^\trp - \bE\left[\sum_{i=1}^n w_i\hat\mu_i\hat\mu_i^\trp\right]\right\| \le \sigma^2/2.
\end{equation}
By Assumption~\ref{assumption:mu}, the $k$-th largest eigenvalue of $\bE[\mu_i\mu_i^\trp]$ is at least $\sigma^2$. This implies that the $k$-th largest eigenvalue of $\bE[\sum_{i=1}^n w_i\hat\mu_i\hat\mu_i^\trp]$ is at least $\sigma^2$. 
The claim is proved by
combining this with \eqref{eq:proof-sigma-k-2} and noting that $\sum_{i=1}^n w_i\hat \mu_i\hat \mu_i^\trp$ and $\sum_{i=1}^n w_i\mu_i\mu_i^\trp$ have the same $k$-th largest eigenvalue.
\end{proof}
\section{Proof of Theorem~\ref{thm:PCA-weights-chosen}}
\thmPCAweightschosen*
\label{sec:proof-PCA-weights-chosen}
\begin{proof}
By Assumption~\ref{assumption} and our choice of $w_i$, we have $w_i \le 1/C_*(k + \log(1/\delta))$. By the definition of $C_*$, with probability at least $1-\delta/2$, 
\begin{equation}
\label{eq:PCA-weights-chosen-proof-1}
\sigma_k^2 \ge \sigma^2/2. 
\end{equation}
By \Cref{lm:matrix-norm}, with probability at least $1-\delta/4$,
\begin{equation}
\label{eq:PCA-weights-chosen-proof-2}
\xi \le O\left(\sqrt{\sum_{i=1}^n \frac{w_i^2 \eta_i^2\sigma^2}{m_i} + (k + \log(1/\delta))\max_i\frac{w_i^2\eta_i^2\sigma^2}{m_i}}\right).
\end{equation}
Replacing $\delta$ in \Cref{thm:PCA-fixed} by $\delta/4$  and
plugging \eqref{eq:PCA-weights-chosen-proof-1} and \eqref{eq:PCA-weights-chosen-proof-2} into \eqref{eq:PCA-fixed}, by the union bound, with probability at least $1-\delta$,
\begin{align}
\sin\theta \le {} & O\Bigg(\sqrt{(d + \log(1/\delta))\sum_{i=1}^n \left(\frac{w_i^2\eta_i^2}{\sigma^2m_i} + \frac{w_i^2\eta_i^4}{\sigma^4m_i^2}\right)}\notag\\
& + \sqrt{(d + \log(1/\delta))(k + \log(1/\delta))\max_i\frac{w_i^2\eta_i^2}{\sigma^2m_i}}
 + (d + \log(1/\delta))\max_i \frac{w_i\eta_i^2}{\sigma^2m_i}. \Bigg)\label{eq:proof-PCA-weights-chosen-1}
\end{align}
By definition, $\frac{\eta_i^2}{\sigma^2m_i} + \frac{\eta_i^4}{\sigma^4m_i^2} = 1/\gamma_i \le 1/\gamma_i'$. Therefore, inequality \eqref{eq:proof-PCA-weights-chosen-1} implies
\begin{align*}
& \sin\theta\\
\le {} & O\left(\sqrt{(d + \log(1/\delta))\sum_{i=1}^n \frac{w_i^2}{\gamma_i'}}
 + \sqrt{(d + \log(1/\delta))(k + \log(1/\delta))\max_i\frac{w_i^2}{\gamma_i'}}
 + (d + \log(1/\delta))\max_i \frac{w_i}{\gamma_i'}\right).
\end{align*}
Plugging $w_i = \frac{\gamma_i'}{\sum_{\ell = 1}^n \gamma_\ell'}$ into the inequality above,
\begin{equation}
\label{eq:proof-PCA-weights-chosen-2}
\sin\theta
\le  O\left(\sqrt{\frac{d + \log(1/\delta)}{\sum_{i=1}^n \gamma_i'}} + \sqrt{\frac{(d + \log(1/\delta))(k + \log(1/\delta))\max_i \gamma_i'}{\left(\sum_{i=1}^n\gamma_i'\right)^2}} + \frac{d + \log(1/\delta)}{\sum_{i=1}^n\gamma_i'}\right).
\end{equation}
Since $\sin\theta \le 1$ always holds, inequality \eqref{eq:proof-PCA-weights-chosen-2} implies
\begin{align*}
\sin\theta
\le {} & O\left(\sqrt{\frac{d + \log(1/\delta)}{\sum_{i=1}^n \gamma_i'}} + \sqrt{\frac{(d + \log(1/\delta))(k + \log(1/\delta))\max_i \gamma_i'}{\left(\sum_{i=1}^n\gamma_i'\right)^2}} + \min\left\{1,\frac{d + \log(1/\delta)}{\sum_{i=1}^n\gamma_i'}\right\}\right).
\end{align*}
The theorem is proved by the following facts:
\begin{align*}
(k + \log(1/\delta))\max_i \gamma_i' & \le O \left(\sum_{i=1}^n \gamma_i'\right), \tag{by Assumption~\ref{assumption}}\\ \min\left\{1, \frac{d + \log(1/\delta)}{\sum_{i=1}^n \gamma_i'}\right\} & \le \sqrt{\frac{d + \log(1/\delta)}{\sum_{i=1}^n\gamma_i'}}.
\qedhere
\end{align*}
\end{proof}
\section{Proof of Lemma~\ref{lm:kl-angle}}
\label{sec:proof-kl-angle}
\begin{proof}
The KL divergence between mean-zero Gaussians can be computed by the following formula:
\begin{equation}
\label{eq:proof-kl-angle-0}
\kl(N(0, \hat\Sigma)\|N(0, \Sigma))  = \frac 12\left(\tr(\Sigma^{-1}\hat\Sigma) - d + \log\frac{\det\Sigma}{\det\hat\Sigma}\right).
\end{equation}
It is clear that 
\begin{equation}
\label{eq:proof-kl-angle-1}
\det\Sigma = \det\hat\Sigma = (\sigma^2  + \eta^2)^k (\eta^2)^{d-k}. 
\end{equation}
It remains to compute $\tr(\Sigma^{-1}\hat\Sigma) - d$.

Define $J = \begin{bmatrix}I_k & 0 \\ 0 & 0\end{bmatrix}\in\bR^{d\times d}$.
By the definition of principal angles in \Cref{sec:preliminaries}, there exists $P,\hat P\in \Ob_d$ such that
$UU^\trp = PJP^\trp, \hat U \hat U^\trp = \hat P J\hat P^\trp$ and 
\[
P^\trp \hat P = 
\begin{bmatrix}
I_{k - \ell} & 0 & 0 & 0 \\
0 & \cos\Theta & \sin\Theta & 0\\
0 & -\sin\Theta & \cos\Theta & 0\\
0 & 0 & 0 & I_{d - \hat k - \ell}
\end{bmatrix},
\]
where $\theta_1,\ldots,\theta_\ell$ are the principal angles between $\col(U)$ and $\col(\hat U)$, $\cos\Theta = \diag(\cos\theta_1,\ldots,\cos\theta_\ell)$, and $\sin\Theta = \diag(\sin\theta_1,\ldots,\sin\theta_\ell)$.
Now we have
$\hat U\hat U^\trp = \hat P J\hat P^\trp = P(P^\trp \hat P) J (P^\trp \hat P)^\trp P^\trp = P \hat J P^\trp$ where 
\[
\hat J = (P^\trp \hat P) J (P^\trp \hat P)^\trp = 
\begin{bmatrix}I_{k - \ell} & 0 & 0 & 0 \\ 0 & (\cos\Theta)^2 & -\cos\Theta\sin\Theta & 0 \\ 0& -\cos\Theta\sin\Theta & (\sin\Theta)^2 & 0\\ 0 & 0 & 0 & 0\end{bmatrix}.
\]
Therefore, $\Sigma = P(\sigma^2 J + \eta^2 I)P^\trp$ and $\hat\Sigma = P(\sigma^2\hat J + \eta^2 I)P^\trp$, which implies
\[
\Sigma^{-1}\hat \Sigma = P(\sigma^2 J + \eta^2 I)^{-1}(\sigma^2 \hat J + \eta^2 I)P^\trp.
\]
Since $\sigma^2J + \eta^2I = \diag(\underbrace{\sigma^2 + \eta^2,\ldots,\sigma^2 + \eta^2}_{k}, \underbrace{\eta^2,\ldots,\eta^2}_{d - k})$ and the diagonal entries of $\sigma^2\hat J + \eta^2 I$ are
\begin{align*}
\underbrace{\sigma^2 + \eta^2,\ldots,\sigma^2  + \eta^2}_{k - \ell}, \underbrace{\sigma^2\cos^2\theta_1 + \eta^2,\ldots, \sigma^2\cos^2\theta_\ell + \eta^2}_{\ell}, &\\
\underbrace{\sigma^2\sin^2\theta_1 + \eta^2,\ldots, \sigma^2\sin^2\theta_\ell + \eta^2}_{\ell}, &
\underbrace{\eta^2, \ldots,\eta^2}_{d - k - \ell},
\end{align*}
we have
\begin{align}
\tr(\Sigma^{-1}\hat \Sigma) - d & = \tr((\sigma^2 J + \eta^2 I)^{-1}(\sigma^2 \hat J + \eta^2 I)) - d\notag \\
& = \sum_{i=1}^\ell \left(\frac{\sigma^2\cos^2\theta_i + \eta^2}{\sigma^2 + \eta^2} - 1\right) + \sum_{i=1}^\ell\left(\frac{\sigma^2\sin^2\theta_i + \eta^2}{\eta^2} - 1\right)\notag \\
& = \sum_{i=1}^\ell \sin^2\theta_i \left(\frac{\sigma^2}{\eta^2} - \frac{\sigma^2}{\sigma^2 + \eta^2}\right)\notag \\
& = \frac{\sigma^4\|UU^\trp - \hat U \hat U^\trp\|_F^2}{2(\eta^2\sigma^2 + \eta^4)}.\label{eq:proof-kl-angle-2}
\end{align}
Plugging \eqref{eq:proof-kl-angle-1} and \eqref{eq:proof-kl-angle-2} into \eqref{eq:proof-kl-angle-0} completes the proof.
\end{proof}
\section{Proof of Theorem~\ref{thm:PCA-lower}}
\label{sec:proof-PCA-lower}
We recall Theorem~\ref{thm:PCA-lower}:

\pcalb*

\begin{proof}
The theorem is trivial if $t$ is lower bounded by a positive absolute constant, so
without loss of generality, we can assume that $t \le 1/(2\sqrt 2C)$ for the constant $C$ in \Cref{lm:packing}. 

Now we describe the distribution of $E$ on which we prove the error lower bound \eqref{eq:PCA-lower} that does not depend on $\gamma_1,\ldots,\gamma_{k-1}$. As we mentioned in \Cref{sec:PCA-lower}, the first $k-1$ columns of $E$ are fixed to be $\begin{bmatrix}I_{k-1} \\ 0\end{bmatrix}$, so we only need to describe the distribution of the $k$-th column of $E$. By \Cref{lm:packing}, there exists $O'\subseteq \Ob_{d-k+1,1}$ with size at least $10^{d - k}$ such that
\begin{enumerate}
\item for distinct $u,v\in O'$, $\|uu^\trp - vv^\trp\|_F > 2\sqrt2t$;
\item for any $u,v\in O'$, $\|uu^\trp - vv^\trp\|_F \le O(t)$.
\end{enumerate}
The $k$-th column of $E$ is chosen by first drawing a uniform random $u\in O'$ and then prepend zeros to it. That is 
\begin{equation}
\label{eq:hard-distribution}
E = \begin{bmatrix}I_{k-1} & 0 \\ 0 & u\end{bmatrix}. 
\end{equation}
We let $\cE$ denote the support of the $E$ so that $E$ is chosen uniformly at random from $\cE$.

By the law of total expectation, conditioned on $W,V_1,(\mu_1,\ldots,\mu_{k-1})$ and $(x_{ij})_{1\le i < k, 1 \le j \le m_i}$ being fixed to some specific value, we still have $\Pr[\sin\theta \ge t] \le \delta$, where now $\Pr[\cdot]$ represents the conditional probability. We will abuse notation and omit explicitly writing out the conditioning throughout the proof.

After the conditioning, the randomness in $V$ comes only from the randomness in $E\in \cE$, and the distribution of $V$ is the uniform distribution over the set $\cV := \{WE: E\in \cE\}$. 
For any two matrices $V',V''\in \cV$, there exists $E',E''\in \cE$ such that $V' = WE'$ and $V'' = WE''$. Moreover, there exists $u,v\in O'$ such that $E' = \begin{bmatrix}I_{k-1} & 0 \\ 0 & u\end{bmatrix}$ and $E'' = \begin{bmatrix}I_{k-1} & 0 \\ 0 & v\end{bmatrix}$. This implies that
\begin{equation}
\label{eq:proof-PCA-lower-0-1}
\|V'(V')^\trp - V''(V'')^\trp\|_F = \|E'(E')^\trp - E''(E'')^\trp\|_F = \|uu^\trp - vv^\trp\|_F \le O(t).
\end{equation}
Moreover, when $V'\ne V''$, we have $u\ne v$ and thus
\begin{equation}
\label{eq:proof-PCA-lower-0-2}
\|V'(V')^\trp - V''(V'')^\trp\| = \|E'(E')^\trp - E''(E'')^\trp\| = \|uu^\trp - vv^\trp\|  > 2t,
\end{equation}
where we used the fact that $\|uu^\trp - vv^\trp\| = \sin\angle(u,v)$ and $2\sqrt{2}t< \|uu^\trp - vv^\trp\|_F = \sqrt{2\sin^2\angle(u,v)}$.
We define $\hat V\in \cV$ so that the maximum principal angle between $\col(\hat V)$ and $\hat \Gamma$ is minimized. When $\sin\theta \le t$ (i.e., $\sin\angle(\col(V), \Gamma) \le t$), for any $V'\in \cV$ different from $V$,
\[
\sin\angle(\col(V'), \Gamma) \ge \sin\angle(V,V') - \sin\angle(\col(V), \Gamma) > 2t - t = t \ge \sin\angle(\col(V), \Gamma),
\]
where we used $\sin\angle(V,V') = \|VV^\trp - V'(V')^\trp\| > 2t$ by \eqref{eq:proof-PCA-lower-0-2}.
Therefore, $\sin\theta \le t$ implies $\hat V = V$ and thus $\Pr[\sin\theta > t] \ge \Pr[\hat V \ne V]$.

Now we apply \Cref{thm:fano} to the following Markov chain
\[
V \rightarrow (x_{ij})_{k \le i \le n, 1\le j \le m_i} \rightarrow \hat V
\]
to get
\begin{equation}
\label{eq:PCA-lower-proof-1}
\delta \ge \Pr[\sin\theta > t] \ge \Pr[\hat V \ne V] \ge 1 - \frac{I(V;(x_{ij})_{k\le i\le n, 1\le j \le m_i}) +\log 2}{\log|\cV|}.
\end{equation}
Now we bound $I(V;(x_{ij})_{k \le i \le n, 1 \le j \le m_i})$. 
According to the graphical model in \Cref{fig:3}, $\mu_{k},\ldots,\mu_n$ are conditionally independent given $V$, and thus
$(x_{ij})_{1 \le j \le m_i}$ are conditionally independent for $ i = k,\ldots,n$ given $V$. By \eqref{eq:mutual-information-1},
\begin{equation}
\label{eq:PCA-lower-proof-1-0-1}
I(V;(x_{ij})_{k \le i \le n, 1 \le j \le m_i}) \le \sum_{i=k}^n I(V;(x_{ij})_{1 \le j \le m_i}).
\end{equation}

Let $\bar x_i$ denote $\frac 1{m_i}\sum_{j=1}^{m_i}x_{ij}$. Since $x_{ij}$ for $j = 1,\ldots,m_i$ are drawn iid from $N(\mu_i, \eta_i^2 I)$, it is a standard fact that $\bar x_i$ is a sufficient statistic for $\mu_i$, i.e., $(x_{ij})_{1\le j \le m_i}$ and $\mu_i$ are conditionally independent given $\bar x_i$. Therefore, for any measurable set $S\in(\bR^d)^{m_i}$,
\begin{equation}
\label{eq:proof-PCA-lower-1-1}
\Pr[(x_{ij})_{1\le j \le m_i}\in S|\mu_i,\bar x_i] = \Pr[(x_{ij})_{1\le j \le m_i}\in S|\bar x_i].
\end{equation}
Since $(x_{ij})_{j=1,\ldots,m_i}$ and $V$ are conditionally independent given $\mu_i$,
\begin{equation}
\label{eq:proof-PCA-lower-1-2}
\Pr[(x_{ij})_{1\le j \le m_i}\in S|V, \mu_i,\bar x_i] = \Pr[(x_{ij})_{1\le j \le m_i}\in S|\mu_i,\bar x_i].
\end{equation}
Combining \eqref{eq:proof-PCA-lower-1-1} and \eqref{eq:proof-PCA-lower-1-2}, $(x_{ij})_{1\le j\le m_i}$ and $V$ are conditionally independent given $\bar x_i$. By \eqref{eq:mutual-information-2},
\begin{equation}
\label{eq:lower-2}
I(V;(x_{ij})_{1\le j\le m_i}) = I(V;(x_{ij})_{1\le j\le m_i}, \bar x_i) = I(V;\bar x_i).
\end{equation}
Since $\bar x_i = \mu_i + \frac 1{m_i}\sum_{j=1}^{m_i}z_j$, the conditional distribution of $\bar x_i$ given $V$ is $N(0, \sigma^2 VV^\trp + \frac{\eta_i^2}{m_i}I)$.
By \Cref{lm:kl-angle} and inequality \eqref{eq:proof-PCA-lower-0-1},
\begin{equation}
\label{eq:lower-4}
I(V;\bar x_i) \le \sup_{V',V''\in \cV} \kl\left(N\left(0, \sigma^2 V'(V')^\trp + \frac{\eta^2_i}{m_i}I\right) \Big\| N\left(0,\sigma^2 V''(V'')^\trp + \frac{\eta^2_i}{m_i}I\right)\right) \le O(t^2 \gamma_i).
\end{equation}
Combining \eqref{eq:PCA-lower-proof-1-0-1}, \eqref{eq:lower-2}, and \eqref{eq:lower-4},
\[
I(V;(x_{ij})_{k \le i \le n, 1 \le j \le m_i}) \le O\left(t^2\sum_{i=k}^n\gamma_i\right).
\]
The theorem is proved by
plugging this into \eqref{eq:PCA-lower-proof-1} and noting that $\log |\cV| \ge 2(d - k)$.
\end{proof}

\section{Our Results in the Linear Models Setting}
\label{sec:linear-models-details}

In the linear models setting (see \Cref{sec:linear-model}),
our estimator is the subspace $\hat \Gamma$ spanned by the top-$k$ eigenvectors of $A$ defined in \eqref{eq:estimator-linear}. We use $\theta$ to denote the maximum principal angle between our estimator $\hat \Gamma$ and the true subspace $\Gamma$. 
To describe our guarantee on $\theta$, it is convenient to first define the following quantities for every $i = 1,\ldots,n$:
\[
p_i = \frac{(d + \log(n/\delta))(m_i + \log(n/\delta))}{m_i^2}, \quad
q_i = \sqrt{\frac{d + \log(n/\delta)}{m_i}} + p_i.
\]

We prove the following guarantees for our estimator in \Cref{sec:proof-linear-model,sec:proof-cor-linear-model}:
\begin{theorem}
\label{thm:linear-model}
There exists an absolute constant $C> 0$ such that for any $\delta\in (0,1/2)$,
with probability at least $1-\delta$,
\begin{align*}
\sin\theta \le {} & C\sigma_k^{-2}\sqrt{\sum_{i=1}^nw_i^2d\log(d/\delta)\left(\frac{\|\beta_i\|_2^4 + \eta_i^2\|\beta_i\|_2^2}{m_i} + \frac{\eta_i^4}{m_i^2}\right)}\\
& + C\sigma_k^{-2}\max_i w_i\log(d/\delta)(\|\beta_i\|_2^2q_i + \eta_i\|\beta_i\|_2 q_i + \eta_i^2p_i).
\end{align*}
\end{theorem}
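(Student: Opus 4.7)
The plan is to apply the Davis--Kahan variant (Theorem~\ref{thm:DK}) after showing $\bE[A] = \sum_{i=1}^n w_i\beta_i\beta_i^\trp$. Since $\bE[z_{ij}\mid x_{i1},\ldots,x_{im_i}] = 0$ and $\bE[x_{ij}x_{ij}^\trp]=I$, one has $\bE[x_{ij}y_{ij}] = \beta_i$; independence of $x_{ij_1}$ and $x_{ij_2}$ across $j_1\ne j_2$ (together with conditional independence of $z_{ij_1},z_{ij_2}$ given the $x$'s) upgrades this to $\bE[(x_{ij_1}y_{ij_1})(x_{ij_2}y_{ij_2})^\trp] = \beta_i\beta_i^\trp$, so $\bE[A] = \sum_i w_i\beta_i\beta_i^\trp$ has $k$-th eigenvalue $\sigma_k^2$ and $(k{+}1)$-th eigenvalue $0$. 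Theorem~\ref{thm:DK} then reduces everything to an upper bound on $\|A - \bE[A]\|$.

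The second step is to set $w_{ij} := x_{ij}y_{ij} - \beta_i = (x_{ij}x_{ij}^\trp - I)\beta_i + x_{ij}z_{ij}$ (mean zero) and $S_i := \sum_{j=1}^{m_i} w_{ij}$, expand $(\beta_i + w_{ij_1})(\beta_i + w_{ij_2})^\trp$, and use the identity $\sum_{j_1\ne j_2}w_{ij_1}w_{ij_2}^\trp = S_iS_i^\trp - \sum_j w_{ij}w_{ij}^\trp$ to obtain the decomposition
\[
A - \bE[A] = \sum_{i=1}^n \frac{w_i}{m_i}\bigl(\beta_i S_i^\trp + S_i \beta_i^\trp\bigr) + \sum_{i=1}^n \frac{w_i}{m_i(m_i-1)}\Bigl(S_iS_i^\trp - \sum_{j=1}^{m_i} w_{ij}w_{ij}^\trp\Bigr),
\]
mirroring the PCA decomposition in~\eqref{eq:PCA-decomposition} but with $w_{ij}$ in the role of the noise. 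The strategy is to bound each piece separately via a matrix concentration argument.

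To bound the spectral norm of each piece, further split $w_{ij} = a_{ij} + b_{ij}$ with $a_{ij} := (x_{ij}x_{ij}^\trp - I)\beta_i$ and $b_{ij} := x_{ij}z_{ij}$, then invoke Lemma~\ref{lm:extended-matrix-bernstein}. Neither $a_{ij}$ nor $b_{ij}$ is sub-Gaussian --- they are quadratic in sub-Gaussian variables and thus only sub-exponential --- so I would truncate on the high-probability events $\|x_{ij}\|_2 \lesssim \sqrt{d+\log(n/\delta)}$ and $|z_{ij}|\lesssim \eta_i\sqrt{\log(n/\delta)}$, which give the pointwise bounds $\|a_{ij}\|_2 \lesssim \|\beta_i\|_2(d+\log(n/\delta))$ and $\|b_{ij}\|_2 \lesssim \eta_i\sqrt{(d+\log(n/\delta))\log(n/\delta)}$ needed as the $R$-parameter. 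The required variance proxies $\max\{\|\bE[Z_iZ_i^\trp]\|,\|\bE[Z_i^\trp Z_i]\|\}$ come from standard second-moment computations: $\bE[\|a_{ij}\|_2^2 a_{ij}a_{ij}^\trp]$ involves fourth-order moments of $x_{ij}$ that are controlled under sub-Gaussianity, and $\bE[\|b_{ij}\|_2^2 b_{ij}b_{ij}^\trp] = \bE[\|x_{ij}\|_2^2 z_{ij}^2 \,x_{ij}x_{ij}^\trp z_{ij}^2]$ is controlled by first conditioning on the $x_{ij}$ (using sub-Gaussianity of $z_{ij}$ to get $\bE[z_{ij}^4 \mid x] \lesssim \eta_i^4$) and then taking expectation over $x_{ij}$. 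Substituting the truncation thresholds into the matrix Bernstein output produces exactly the $\sqrt{\sum_i w_i^2(\cdot)/m_i}$ variance term and the $\max_i w_i(\cdot)$ deterministic term appearing in the theorem, with the $p_i$ and $q_i$ factors tracking the two different truncation magnitudes.

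The main obstacle is the term $b_{ij} = x_{ij}z_{ij}$: because $z_{ij}$ is allowed to depend on $x_{ij}$, the clean independence between noise and measurement used in prior work is unavailable, and one cannot simply factor moments as $\bE[x_{ij}x_{ij}^\trp]\bE[z_{ij}^2]$. The workaround is to do all the noise analysis conditionally on the entire collection $(x_{ij})_{1\le i\le n,\, 1\le j\le m_i}$: given these, every $z_{ij}$ is an independent mean-zero $\eta_i$-sub-Gaussian scalar, so $b_{ij_1}$ and $b_{ij_2}$ are conditionally independent with conditional sub-Gaussian constants $\eta_i\|x_{ij_1}\|_2$ and $\eta_i\|x_{ij_2}\|_2$, and in particular $\bE[b_{ij_1}b_{ij_2}^\trp\mid x] = 0$ for $j_1\ne j_2$. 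A high-probability bound on $\max_{i,j}\|x_{ij}\|_2$ then furnishes a deterministic envelope, after which matrix Bernstein applies. The accumulated logarithmic factors (which will give the $\log^3(nd/\delta)$ in Corollary~\ref{cor:linear-model}) come from layering: a union bound over the $n$ users, the truncation events, and the covering-net step implicit in Lemma~\ref{lm:extended-matrix-bernstein}.
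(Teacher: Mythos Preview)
Your overall architecture is right and matches the paper: verify $\bE[A]=\sum_i w_i\beta_i\beta_i^\trp$, apply Theorem~\ref{thm:DK}, decompose $A-\bE[A]$, then feed the pieces into Lemma~\ref{lm:extended-matrix-bernstein}. Your decomposition (center $x_{ij}y_{ij}$ around $\beta_i$, giving a cross term $\beta_iS_i^\trp$ plus a quadratic $S_iS_i^\trp-\sum_jw_{ij}w_{ij}^\trp$) is only cosmetically different from the paper's, which writes $x_{ij}y_{ij}=h_{ij}+b_{ij}$ with $h_{ij}=x_{ij}x_{ij}^\trp\beta_i$ kept \emph{uncentered} and splits $A=E+F+F^\trp+G$ by the number of $b$-factors, then further writes each of $E,F,G$ as an outer-product-of-sums piece minus a diagonal piece. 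After expansion the two organizations coincide. Your handling of the $z$-on-$x$ dependence by conditioning is also what the paper does.

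The gap is in your $R$-parameter step. You propose to truncate \emph{per sample}, using $\|a_{ij}\|_2\lesssim\|\beta_i\|_2(d+\log(n/\delta))$ and $\|b_{ij}\|_2\lesssim\eta_i\sqrt{(d+\log(n/\delta))\log(n/\delta)}$ as the envelope. This is adequate for the diagonal pieces $\sum_{i,j}\frac{w_i}{m_i(m_i-1)}w_{ij}w_{ij}^\trp$ (indeed it reproduces the paper's $r_i,s_i$ rates for $E_2,F_2,G_2$), but it is too coarse for the outer-product-of-sums pieces, which must be handled at the user level. Take the pure-$b$ part: with $B_i=\sum_j b_{ij}$ and $Z_i=\frac{w_i}{m_i(m_i-1)}(B_iB_i^\trp-\bE[B_iB_i^\trp])$, the triangle inequality on your per-sample bound gives only $\|B_i\|_2\lesssim m_i\eta_i\sqrt{(d+\log)\log}$, hence $\|Z_i\|\lesssim w_i\eta_i^2(d+\log)\log$. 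The theorem requires $\|Z_i\|\lesssim w_i\eta_i^2 p_i=w_i\eta_i^2(d+\log)(m_i+\log)/m_i^2$, so you are off by a factor $\approx m_i$. The same loss hits the $A_iA_i^\trp$ and $\beta_iS_i^\trp$ pieces, because summing per-sample bounds on $a_{ij}$ discards the cancellation in $\sum_j a_{ij}=(\sum_j x_{ij}x_{ij}^\trp-m_iI)\beta_i$, and your claim that the truncation thresholds ``produce exactly'' the $q_i$ factor does not hold.

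What the paper does instead---and what you are missing---is to bound the \emph{sums} $h=\sum_j h_{ij}$ and $b=\sum_j b_{ij}$ directly before invoking matrix Bernstein. Conditioning on the $x$'s, $b=Xz$ with $\|X\|\lesssim\sqrt{d+m_i+\log}$ and $z$ an $\eta_i$-sub-Gaussian vector projected onto $\mathrm{rank}(X)\le\min(d,m_i)$ directions, which yields $\|b\|_2\lesssim\eta_i\sqrt{(d+\log)(m_i+\log)}$ and hence $\frac{w_i}{m_i^2}\|b\|_2^2\lesssim w_i\eta_i^2 p_i$. Similarly, concentration of the sample covariance gives $\|hh^\trp-\bE[hh^\trp]\|\lesssim\|\beta_i\|_2^2\bigl(m_i\sqrt{m_i(d+\log)}+(d+\log)(m_i+\log)\bigr)$, and after scaling this is exactly $w_i\|\beta_i\|_2^2q_i$. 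These sum-level envelopes are the missing ingredient; per-sample truncation cannot see the $1/m_i$-type savings that define $p_i$ and $q_i$.
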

\begin{corollary}
\label{cor:linear-model}
Suppose $m\in\bZ_{>0}$ and $\eta\in \bR_{>0}$ satisfy $2 \le m \le \min_{1\le i \le n}m_i$ and $\max_{1\le i\le n} \eta_i \le \eta$. Choosing $w_1 = \cdots = w_n = 1/n$, for any $\delta\in (0,1/2)$, with probability at least $1-\delta$,
\begin{equation}
\label{eq:cor-linear-model-1}
\sin \theta \le O\left(\log^3(nd/\delta)\left(\sqrt{\frac{d(\frac 1n\sum_{i=1}^n\|\beta_i\|_2^4 + \frac 1n\sum_{i=1}^n\eta^2\|\beta_i\|_2^2 + \eta^4/m)}{mn\sigma_k^4}} + \frac{d\max_i\|\beta_i\|_2^2}{mn\sigma_k^2}\right)\right).
\end{equation}
If we further assume that $\max_{1\le i \le n}\|\beta_i\|_2 \le r$, then \eqref{eq:cor-linear-model-1} implies
\begin{equation}
\label{eq:cor-linear-model-2}
\sin \theta \le O\left(\log^3(nd/\delta)\left(\sqrt{\frac{d(r^4 + r^2\eta^2 + \eta^4/m)}{mn\sigma_k^4}} \right)\right).
\end{equation}
\end{corollary}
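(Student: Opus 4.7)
The plan is to deduce Corollary~\ref{cor:linear-model} from Theorem~\ref{thm:linear-model}, so let me first sketch how I would prove Theorem~\ref{thm:linear-model}. Setting $\tilde x_{ij} = x_{ij} y_{ij}$ and expanding $y_{ij} = x_{ij}^\trp \beta_i + z_{ij}$, I would use (i) $\bE[x_{ij}x_{ij}^\trp] = I$, (ii) $\bE[z_{ij} \mid x_{i1},\ldots,x_{im_i}] = 0$, which kills every cross term carrying a single $z_{ij}$, and (iii) conditional independence of $z_{ij_1}, z_{ij_2}$ given the measurements, which kills the noise-noise term for $j_1 \ne j_2$. This gives $\bE[\tilde x_{ij_1}\tilde x_{ij_2}^\trp] = \beta_i\beta_i^\trp$ for $j_1 \ne j_2$, hence $\bE[A] = \sum_i w_i \beta_i\beta_i^\trp$. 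Since this matrix has $k$-th eigenvalue $\sigma_k^2$ and $(k+1)$-th eigenvalue $0$, \Cref{thm:DK} reduces the task to a spectral-norm concentration bound on $A - \bE[A]$.

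To bound $\|A - \bE[A]\|$, write $\tilde x_{ij} = \beta_i + \tilde z_{ij}$ with $\tilde z_{ij} = (x_{ij}x_{ij}^\trp - I)\beta_i + x_{ij} z_{ij}$ and repeat the decomposition in \eqref{eq:PCA-decomposition}:
\[
A - \bE[A] = \sum_i w_i\bigl(\beta_i \bar{\tilde z}_i^\trp + \bar{\tilde z}_i \beta_i^\trp\bigr) + \sum_i \frac{w_i}{m_i(m_i-1)}\sum_{j_1\ne j_2} \tilde z_{ij_1}\tilde z_{ij_2}^\trp,
\]
where $\bar{\tilde z}_i = m_i^{-1}\sum_j \tilde z_{ij}$, and the inner double sum equals $m_i^2 \bar{\tilde z}_i \bar{\tilde z}_i^\trp - \sum_j \tilde z_{ij}\tilde z_{ij}^\trp$ (mean zero by the paragraph above). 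Each resulting matrix sum is then bounded in spectral norm separately. Crucially, unlike the PCA analysis, $\tilde z_{ij}$ is only sub-exponential because it contains the products $x_{ij}x_{ij}^\trp\beta_i$ and $x_{ij} z_{ij}$. I would therefore use \Cref{lm:extended-matrix-bernstein}: condition on the $x_{ij}$'s, control $\|x_{ij}\|_2$, $\|\sum_j x_{ij}\|_2$, and $\sum_j x_{ij} z_{ij}$ via \Cref{lm:high-prob-noise-fixed} and \Cref{lm:sub-gaussian}, and then apply matrix Bernstein with the truncation parameter $R$ and conditional-second-moment matrices whose spectral norms are of order $\|\beta_i\|_2^4/m_i$, $\eta_i^2\|\beta_i\|_2^2/m_i$, and $\eta_i^4/m_i^2$. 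Combining these with the Davis-Kahan reduction produces the bound in Theorem~\ref{thm:linear-model}.

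The main obstacle will be tracking the dependence between $z_{ij}$ and the $x_{ij}$'s while keeping the concentration estimates tight. Because $z_{ij}$ may depend on all of $x_{i1},\ldots,x_{im_i}$, I cannot treat the $\tilde z_{ij}$'s as (conditionally) independent unless I fix the entire row of $x$'s, and even then the $x$-dependent prefactors (such as $\|x_{ij}\|_2^2$) need separate high-probability control. Carrying out this two-layer argument is exactly what gives rise to the quantities $p_i$ and $q_i$ in the statement: they arise as high-probability upper bounds on $\|x_{ij}\|_2^2/m_i$-type quantities, and each truncation step contributes one $\log(nd/\delta)$ factor, which is the source of the $\log^3(nd/\delta)$ factor in the corollary.

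For Corollary~\ref{cor:linear-model}, I would substitute $w_i = 1/n$, $\eta_i \le \eta$, and $m_i \ge m$ into Theorem~\ref{thm:linear-model}. The first (square-root) term directly becomes the first summand in \eqref{eq:cor-linear-model-1}, since $\frac{1}{n^2}\sum_i$ gives $\frac{1}{n}$ times an average. In the second (max) term, the $\|\beta_i\|_2^2 q_i$ contribution yields the $d\max_i\|\beta_i\|_2^2/(mn\sigma_k^2)$ piece via $q_i \lesssim \sqrt{d\log(n/\delta)/m} + d\log(n/\delta)/m$, while the remaining contributions $\eta\|\beta_i\|_2 q_i$ and $\eta^2 p_i$ can be folded into the square-root term using $\sin\theta \le 1$ together with $\min\{1,y\} \le \sqrt{y}$, exactly as in the proof of \Cref{cor:PCA-fixed}. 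Finally, \eqref{eq:cor-linear-model-2} follows from \eqref{eq:cor-linear-model-1} under $\|\beta_i\|_2 \le r$ by retaining only the dominant term inside the square root.
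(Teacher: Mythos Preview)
Your overall strategy---substitute $w_i=1/n$, $m_i\ge m$, $\eta_i\le\eta$ into Theorem~\ref{thm:linear-model} and then use $\sin\theta\le 1$ together with $\min\{1,y\}\le\sqrt{y}$ to absorb the lower-order max terms---is the same route the paper takes. However, one concrete ingredient is missing, and without it two of your steps do not close.

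The missing ingredient is the trace inequality
\[
\frac{1}{n}\sum_{i=1}^n\|\beta_i\|_2^2 \;=\; \tr\Bigl(\tfrac{1}{n}\sum_{i=1}^n\beta_i\beta_i^\trp\Bigr)\;\ge\; k\sigma_k^2\;\ge\;\sigma_k^2,
\]
which in particular gives $r^2\ge\sigma_k^2$ once $\max_i\|\beta_i\|_2\le r$. After applying $\min\{1,y\}\le\sqrt{y}$ to the $\eta^2 p_i$ contribution you obtain, up to logarithms, a term of size $\sqrt{d\eta^2/(mn\sigma_k^2)}$; this has $\sigma_k^2$ rather than $\sigma_k^4$ in the denominator, so it does \emph{not} match the square-root term in \eqref{eq:cor-linear-model-1}. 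The analogy with \Cref{cor:PCA-fixed} is misleading here: there the comparison used was $\sigma_k\le\sigma_1$, which is not the inequality you need. It is precisely the trace bound above that converts $1/\sigma_k^2$ into $\bigl(\tfrac{1}{n}\sum_i\|\beta_i\|_2^2\bigr)/\sigma_k^4$ and places the term under the first square root.

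The same issue recurs in your last step. The summand $d\max_i\|\beta_i\|_2^2/(mn\sigma_k^2)$ in \eqref{eq:cor-linear-model-1} sits \emph{outside} the square root, so ``retaining only the dominant term inside the square root'' cannot remove it. The paper instead applies $\sin\theta\le 1$, then $\min\{1,y\}\le\sqrt{y}$, and finally $r^2\ge\sigma_k^2$ to get $\sqrt{dr^2/(mn\sigma_k^2)}\le\sqrt{dr^4/(mn\sigma_k^4)}$, which is what produces \eqref{eq:cor-linear-model-2}.
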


In \citep{tripuraneni2021provable}, it is further assumed that $\eta = \Theta(1), r = \Theta(1),\|\beta_i\|_2 = \Theta(1),\delta = (mn)^{-100}$. Defining $\kappa = \frac 1n\sum_{i=1}^n \|\beta_i\|_2^2/k\sigma_k^2 = \Theta(1/k\sigma_k^2)$ as in \citep{tripuraneni2021provable} and using the fact $\sum_{i=1}^n \|\beta_i\|_2^4 \le \left(\max_i\|\beta_i\|_2^2\right) \sum_{i=1}^n \|\beta_i\|_2^2$, our bound becomes
\[
\sin \theta \le O\left(\sqrt{\frac{d}{mn\sigma_k^4}} \log^3(mn)\right)
= O\left(\sqrt{\frac{\kappa}{\sigma_k^2}\frac{dk}{mn}} \log^3(mn)\right),
\]
matching the bound in \citep[Theorem 3]{tripuraneni2021provable}.

As in the PCA setting, our proof of \Cref{thm:linear-model} is based on the Davis-Kahan $\sin\theta$ theorem (\Cref{thm:DK}) and a bound on the spectral norm of the difference between the matrix $A$ and its expectation. We prove and make crucial use of a generalization of the matrix Bernstein inequality (\Cref{lm:extended-matrix-bernstein}) which turns out to be slightly more convenient than a similar inequality used in \citep{tripuraneni2021provable}.
\subsection{Proof of Theorem~\ref{thm:linear-model}}
\label{sec:proof-linear-model}
Plugging $y_{ij} = x_{ij}^\trp \beta_i + z_{ij}$ into the definition of $A$,
\begin{align*}
A = \sum_{i=1}^n \frac{w_i}{m_i(m_i-1)}\sum_{j_1\ne j_2}x_{ij_1}(x_{ij_1}^\trp \beta_i + z_{ij_1})(\beta_i^\trp x_{ij_2} + z_{ij_2})x_{ij_2}^\trp.
\end{align*}
When $j_1\ne j_2$, conditioned on $x_{ij_1}$ and $x_{ij_2}$, the noise terms $z_{ij_1}$ and $z_{ij_2}$ are independent and have zero mean. Therefore,
\[
\bE[x_{ij_1}(x_{ij_1}^\trp \beta_i + z_{ij_1})(\beta_i^\trp x_{ij_2} + z_{ij_2})x_{ij_2}^\trp] = \bE[x_{ij_1}x_{ij_1}^\trp \beta_i\beta_i^\trp x_{ij_2}x_{ij_2}^\trp]  = \bE[\beta_i\beta_i^\trp],
\]
where we used the fact that $x_{ij_1}$ and $x_{ij_2}$ are independent and have zero mean and identity covariance matrix. Thus,
the expectation of $A$ is $\bar A = \sum_{i=1}^n w_i \beta_i\beta_i^\trp$, and as in the PCA setting, our goal is to bound the spectral norm of the difference $A - \bar A$. We decompose $A$ as
\begin{align}
A & = E + F + F^\trp + G,\mbox{ where}\label{eq:linear-model-decomposition}\\
E & = \sum_{i=1}^n \frac{w_i}{m_i(m_i-1)}\sum_{j_1\ne j_2}x_{ij_1}x_{ij_1}^\trp \beta_i\beta_i^\trp x_{ij_2}x_{ij_2}^\trp,\notag \\
F & = \sum_{i=1}^n \frac{w_i}{m_i(m_i-1)}\sum_{j_1\ne j_2}x_{ij_1}x_{ij_1}^\trp \beta_i z_{ij_2}x_{ij_2}^\trp,\notag \\
G & = \sum_{i=1}^n \frac{w_i}{m_i(m_i-1)}\sum_{j_1\ne j_2}x_{ij_1}z_{ij_1}z_{ij_2}x_{ij_2}^\trp.\notag
\end{align}
For every $i = 1,\ldots,n$ and $j = 1,\ldots,m_i$, define $b_{ij} = z_{ij}x_{ij}$ and $h_{ij} = x_{ij}x_{ij}^\trp \beta_i$.
Now we can rewrite $E, F, G$ as $E = E_1 - E_2, F = F_1 - F_2, G = G_1 - G_2$, where
\begin{align*}
E_1 & = \sum_{i=1}^n \frac{w_i}{m_i(m_i-1)}\left(\sum_{j=1}^{m_i}h_{ij}\right)\left(\sum_{j=1}^{m_i}h_{ij}\right)^\trp, &
E_2 & = \sum_{i=1}^n \frac{w_i}{m_i(m_i-1)}\sum_{j=1}^{m_i}h_{ij}h_{ij}^\trp,\\
F_1 & = \sum_{i=1}^n \frac{w_i}{m_i(m_i-1)} \left(\sum_{j=1}^{m_i} h_{ij}\right)\left(\sum_{j=1}^{m_i}b_{ij}\right)^\trp, &
F_2 & = \sum_{i=1}^n \frac{w_i}{m_i(m_i-1)}\sum_{j=1}^{m_i}h_{ij}b_{ij}^\trp,\\
G_1 & = \sum_{i=1}^n \frac{w_i}{m_i(m_i-1)} \left(\sum_{j=1}^{m_i} b_{ij}\right)\left(\sum_{j=1}^{m_i}b_{ij}
\right)^\trp, &
G_2 & = \sum_{i=1}^n \frac{w_i}{m_i(m_i-1)}\sum_{j=1}^{m_i}b_{ij}b_{ij}^\trp.
\end{align*}
Define
\[
r_i = \frac{(d + \log(m_in/\delta))\log(m_in/\delta)}{m_i^2}, \quad
s_i = \sqrt{\frac{d + \log(m_in/\delta)}{m_i^4}} + r_i.
\]
The following lemma controls the deviation of $E_1,E_2,F_1,F_2,G_1,G_2$ from their expectations $\bar E_1,\allowbreak\bar E_2,\bar F_1,\bar F_2,\bar G_1,\bar G_2$ in spectral norm:
\begin{lemma}
\label{lm:linear-model-spectral-terms}
There exists an absolute constant $C > 0$ such that for any $\delta\in(0,1/2)$,
each of the following inequality holds with probability at least $1-\delta$:
\begin{align}
\|E_1 - \bar E_1\| & \le C\sqrt{\sum_{i=1}^n \frac{w_i^2\|\beta_i\|_2^4d\log(d/\delta)}{m_i}} + C\max_iw_i\|\beta_i\|_2^2q_i\log(d/\delta),\label{eq:E1}\\
\|E_2 - \bar E_2\| & \le C\sqrt{\sum_{i=1}^n \frac{w_i^2\|\beta_i\|_2^4d\log(d/\delta)}{m_i^3}} + C\max_i w_i\|\beta_i\|_2^2s_i\log(d/\delta),\label{eq:E2}\\
\|F_1 - \bar F_1\| & \le C\sqrt{\sum_{i=1}^n \frac{w_i^2\eta_i^2\|\beta_i\|_2^2d\log(d/\delta)}{m_i}} + C\max_i w_i\eta_i\|\beta_i\|_2q_i\log(d/\delta),\label{eq:F1}\\
\|F_2 - \bar F_2\| & \le C\sqrt{\sum_{i=1}^n \frac{w_i^2\eta_i^2\|\beta_i\|_2^2d\log(d/\delta)}{m_i^3}} + C\max_i w_i\eta_i\|\beta_i\|_2s_i\log(d/\delta),\label{eq:F2}\\
\|G_1 - \bar G_1\| & \le C\sqrt{\sum_{i=1}^n \frac{w_i^2\eta_i^4d\log(d/\delta)}{m_i^2}} + C\max_i w_i\eta_i^2p_i\log(d/\delta),\label{eq:G1}\\
\|G_2 - \bar G_2\| & \le C\sqrt{\sum_{i=1}^n \frac{w_i^2\eta_i^4d\log(d/\delta)}{m_i^3}} + C\max_i w_i\eta_i^2r_i\log(d/\delta).\label{eq:G2}
\end{align}
\end{lemma}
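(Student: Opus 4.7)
The plan is to apply the extended matrix Bernstein inequality (Lemma \ref{lm:extended-matrix-bernstein}) separately to each of the six quantities, writing each $M \in \{E_1, E_2, F_1, F_2, G_1, G_2\}$ as $M = \sum_{i=1}^n M_i$ with $M_i$ the contribution of user $i$, and setting $Z_i = M_i - \bE M_i$. For each decomposition I need three ingredients: a truncation radius $R$ together with failure probabilities $p_i$ satisfying $\Pr[\|Z_i\| \ge R] \le p_i$, per-summand variances $\sigma_i^2 \ge \max\{\|\bE Z_iZ_i^\trp\|,\|\bE Z_i^\trp Z_i\|\}$, and a choice $t = \Theta(\log(d/\delta))$ large enough that $2(d_1+d_2)e^{-t}$ and $\sum_i p_i$ are both $O(\delta)$.

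First I would record the building-block estimates. Writing $S_i = \sum_{j=1}^{m_i} x_{ij}x_{ij}^\trp$, Lemma \ref{lm:matrix-concentration} conditioned on nothing gives $\|S_i - m_i I\| = O(\sqrt{m_i(d+\log(nm_i/\delta))} + (d+\log(nm_i/\delta)))$ on a $(1-\delta/(\text{poly}))$-probability event. Since $h_{ij} = (x_{ij}^\trp \beta_i)x_{ij}$ and $b_{ij} = z_{ij}x_{ij}$, on the same event I can bound $\|\sum_j h_{ij}\|_2 \le \|S_i\|\cdot\|\beta_i\|_2$, and I can bound $\|\sum_j b_{ij}\|_2$ by first conditioning on $(x_{ij})_j$ and applying Lemma \ref{lm:high-prob-noise-fixed} with rank $\le d$, giving $\|\sum_j b_{ij}\|_2 = O(\eta_i\sqrt{\|S_i\|(d+\log(n/\delta))})$. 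Individual norms $\|h_{ij}\|_2 \le \|x_{ij}\|_2 |x_{ij}^\trp\beta_i|$ and $\|b_{ij}\|_2 = |z_{ij}|\|x_{ij}\|_2$ are controlled by Lemma \ref{lm:sub-exponential} and the sub-Gaussian tails. These estimates assemble into the advertised radii $R = O(w_i\|\beta_i\|_2^2 q_i \log(d/\delta))$, $O(w_i\|\beta_i\|_2^2 s_i \log(d/\delta))$, $O(w_i\eta_i\|\beta_i\|_2 q_i \log(d/\delta))$, $O(w_i\eta_i\|\beta_i\|_2 s_i \log(d/\delta))$, $O(w_i\eta_i^2 p_i \log(d/\delta))$, $O(w_i\eta_i^2 r_i \log(d/\delta))$, where the ``$q_i$/$p_i$'' rates arise from cross-sum quantities ($E_1,F_1,G_1$, involving $m_i^2$ terms with prefactor $w_i/(m_i(m_i-1))$) and the ``$s_i$/$r_i$'' rates from diagonal quantities ($E_2,F_2,G_2$, involving only $m_i$ terms).

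Next I would compute the variances term by term using independence of the $x_{ij}$ across $j$ and the $O(1)$-sub-Gaussianity with identity covariance. For $E_2$, a direct computation gives $\|\bE[h_{ij}h_{ij}^\trp]\| = \|\bE[x_{ij}x_{ij}^\trp \beta_i\beta_i^\trp x_{ij}x_{ij}^\trp]\| = O(\|\beta_i\|_2^2)$, whence $\|\bE M_i M_i^\trp\| = O(w_i^2\|\beta_i\|_2^4/m_i^3)$, matching the target. For $E_1$, I would expand $(\sum_j h_{ij})(\sum_j h_{ij})^\trp (\sum_j h_{ij})(\sum_j h_{ij})^\trp$ into products indexed by $(j_1,j_2,j_3,j_4)$, use the vanishing of $\bE[h_{ij}]$ only when the indices are not properly paired (so only $O(m_i^2)$ surviving terms contribute), and obtain $\|\bE M_iM_i^\trp\| = O(w_i^2\|\beta_i\|_2^4 d/m_i)$, where the factor $d$ comes from $\bE \|x_{ij}\|_2^2 = d$. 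For $F_1,F_2$ the same computation with one $h_{ij}$ factor replaced by $b_{ij}$ gives $\|\beta_i\|_2^2 \mapsto \eta_i\|\beta_i\|_2$ with the identical $m_i$-dependence, and for $G_1,G_2$ both factors become $b_{ij}$, giving $\|\beta_i\|_2^2 \mapsto \eta_i^2$. Substituting into Lemma \ref{lm:extended-matrix-bernstein} with $t = \Theta(\log(d/\delta))$ produces the six stated bounds.

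The main obstacle will be matching the parameters $q_i,s_i,p_i,r_i$ exactly: the high-probability norm bound on $Z_i$ requires simultaneously controlling $\|S_i\|$, $\|S_i\beta_i\|_2$, $\|\sum_j z_{ij}x_{ij}\|_2$, and the worst-case $\|z_{ij}x_{ij}\|_2$ to the correct order in $m_i$, $d$, and $\log(1/\delta)$, and the choice of $R$ must be small enough that $R\cdot t$ matches the advertised second additive term while $p_i$ remains summable. A union bound over $i$ (costing an extra $\log n$ absorbed into $\log(d/\delta)$) and a careful accounting that the ``cross'' terms have one fewer power of $m_i$ in the variance than the ``diagonal'' terms (because $|\{(j_1,j_2):j_1\ne j_2\}| = m_i(m_i-1)$ versus $m_i$ diagonal terms) is the bookkeeping-heavy step that distinguishes the six cases.
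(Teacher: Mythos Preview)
Your overall plan coincides with the paper's: each of the six deviations is controlled by the extended matrix Bernstein inequality (Lemma~\ref{lm:extended-matrix-bernstein}), with auxiliary norm and variance estimates on $h=\sum_j h_{ij}$ and $b=\sum_j b_{ij}$ supplying the truncation radii and the $\sigma_i^2$. One minor structural difference: for $E_2,F_2,G_2$ the paper applies Bernstein to the doubly indexed sum $\sum_i\sum_j Z_{ij}$ rather than grouping by user, which is why the tail radii there pick up the extra $\log m_i$ hidden in $s_i,r_i$.

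There is, however, a genuine error in your variance computation for $E_1$. You write ``use the vanishing of $\bE[h_{ij}]$ only when the indices are not properly paired (so only $O(m_i^2)$ surviving terms contribute)''. But $\bE[h_{ij}]=\bE[x_{ij}x_{ij}^\trp]\beta_i=\beta_i$, which is not zero, so mean-zeroness of $h_{ij}$ cannot be invoked at all. What actually kills the fully distinct quadruples is the \emph{centering}: one must compute $\bE[(M_i-\bE M_i)^2]=\bE[M_i^2]-(\bE M_i)^2$, and in this difference the terms with $j_1,j_2,j_3,j_4$ all distinct cancel because independence factorizes both pieces identically (this is exactly the computation in \eqref{eq:hh-variance}). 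The surviving non-distinct quadruples number $O(m_i^3)$, not $O(m_i^2)$; each contributes $O(\|\beta_i\|_2^4 d)$, and after the prefactor $w_i^2/(m_i(m_i-1))^2$ this gives $\sigma_i^2=O(w_i^2\|\beta_i\|_2^4 d/m_i)$ as required. Your $E_2$ sketch has a related slip: $\|\bE[h_{ij}h_{ij}^\trp]\|=O(\|\beta_i\|_2^2)$ is the norm of $\bE[Z_{ij}']$, not the variance input to Bernstein; what is needed is $\|\bE[(h_{ij}h_{ij}^\trp)^2]\|=O(\|\beta_i\|_2^4 d)$, and the factor $d$ you omit is precisely the one appearing under the square root in \eqref{eq:E2}.
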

Before we prove \Cref{lm:linear-model-spectral-terms}, we first use it to prove \Cref{thm:linear-model}:
\begin{proof}[Proof of \Cref{thm:linear-model}]
By \eqref{eq:linear-model-decomposition},
\[
\|A - \bar A\| \le \|E_1 - \bar E_1\| + \|E_2 - \bar E_2\| + \|F_1 - \bar F_1\| + \|F_2 - \bar F_2\| + \|G_1 - \bar G_1\| + \|G_2 - \bar G_2\|.
\]
Setting $\delta$ in \Cref{lm:linear-model-spectral-terms} to be $\delta/6$ and using the union bound, for an absolute constant $C > 0$, with probability at least $1-\delta$,
\begin{align*}
\|A - \bar A\| \le {} & C\sqrt{\sum_{i=1}^nw_i^2d\log(d/\delta)\left(\frac{\|\beta_i\|_2^4 + \eta_i^2\|\beta_i\|_2^2}{m_i} + \frac{\eta_i^4}{m_i^2}\right)}\\
& + C\max_i w_i\log(d/\delta)(\|\beta_i\|_2^2q_i + \eta_i\|\beta_i\|_2 q_i + \eta_i^2p_i),
\end{align*}
where we used the fact that the right-hand-sides of \eqref{eq:E2}, \eqref{eq:F2} and \eqref{eq:G2} are upper bounded by a constant times the right-hand-sides of \eqref{eq:E1}, \eqref{eq:F1}, \eqref{eq:G1}, respectively. Since $\bar A = \sum_{i=1}^n w_i\beta_i\beta_i^\trp$, the theorem is proved by 
\[
\sin\theta \le \frac{2\|A - \bar A\|}{\sigma_k^2}
\]
due to \Cref{thm:DK}.
\end{proof}

In the lemmas below, we prove helper inequalities that we use to prove \Cref{lm:linear-model-spectral-terms}. In these lemmas, we focus on a single user $i$ and thus omit the subscript $i$.
\begin{lemma}
Let $x_1,\ldots,x_m\in\bR^d$ are independent random vectors. Assume every $x_i$ has zero mean and is $O(1)$-sub-Gaussian. Conditioned on $x_1,\ldots,x_m$, let $z_1,\ldots,z_m\in\bR$ be $\eta$-sub-Gaussian independent random variables with zero mean. Define $b = \sum_{j=1}^m z_jx_j$. For any $\delta\in (0,1/2)$, with probability at least $1-\delta$,
\begin{equation}
\label{eq:b-norm}
\|b\|_2 \le \eta\sqrt{(d + \log(1/\delta))(m + \log(1/\delta))}.
\end{equation}
Moreover,
\begin{align}
\|\bE[bb^\trp]\| & \le O(\eta^2m), \label{eq:b-expectation}\\
\|\bE[(bb^\trp)^2]\| & \le O(\eta^4m^2d). \label{eq:b-variance}
\end{align}
\end{lemma}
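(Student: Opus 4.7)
The plan is to handle the three bounds in turn; \eqref{eq:b-norm} and \eqref{eq:b-expectation} are fairly direct, while \eqref{eq:b-variance} requires a careful pairing argument.

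For the tail bound \eqref{eq:b-norm}, I would condition on $x_1,\ldots,x_m$, write $X = [x_1\ \cdots\ x_m]\in\bR^{d\times m}$ and $z = (z_1,\ldots,z_m)^\trp$, so that $b = Xz$. Conditional on the $x_j$'s, $z/\eta$ has sub-Gaussian constant $1$, so \Cref{lm:high-prob-noise-fixed} applied to $X(z/\eta)$ (using $\mathrm{rank}(X)\le\min(m,d)$) gives $\|b\|_2\le O(\eta\|X\|\sqrt{\min(m,d)+\log(1/\delta)})$ with conditional probability $1-\delta/2$. Unconditionally, \Cref{lm:matrix-norm} applied with unit weights gives $\|X\|\le O(\sqrt{m+d+\log(1/\delta)})$ with probability $1-\delta/2$. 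A union bound, together with the elementary inequality $(\min(m,d)+\log(1/\delta))(m+d+\log(1/\delta))\le O((d+\log(1/\delta))(m+\log(1/\delta)))$ (easily verified by case-analyzing $m\le d$ vs.\ $m\ge d$), yields \eqref{eq:b-norm}.

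For \eqref{eq:b-expectation}, I would expand $\bE[bb^\trp]=\sum_{j,k}\bE[z_jz_k\,x_jx_k^\trp]$. Conditioning on the $x_j$'s and using that the $z_j$'s are conditionally independent and mean zero kills every off-diagonal term, leaving $\bE[bb^\trp]=\sum_j\bE[z_j^2\,x_jx_j^\trp]\preceq O(\eta^2)\sum_j\bE[x_jx_j^\trp]$, whose spectral norm is $O(\eta^2 m)$ since each $x_j$ is $O(1)$-sub-Gaussian and hence $\|\bE[x_jx_j^\trp]\|=O(1)$.

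For \eqref{eq:b-variance}, the key identity is $(bb^\trp)^2=\|b\|_2^2\,bb^\trp$, which yields
\[
\bE[(bb^\trp)^2] = \sum_{a,c,e,f=1}^{m}\bE\!\left[z_az_cz_ez_f\,(x_a^\trp x_c)\,x_e x_f^\trp\right].
\]
Conditioning on the $x$'s, $\bE[z_az_cz_ez_f\mid x]$ vanishes unless $\{a,c,e,f\}$ pairs up, leaving four patterns: (i) $a=c=e=f$; (ii) $a=c\ne e=f$; (iii) $a=e\ne c=f$; (iv) $a=f\ne c=e$, each with $|\bE[z_az_cz_ez_f\mid x]|\le O(\eta^4)$. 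Using independence of the $x_a$'s at distinct indices together with $\bE[x_ax_a^\trp]=I$, $\bE[\|x_a\|_2^2]=d$, and the Cauchy--Schwarz bound $\|\bE[\|x_a\|_2^2\,x_ax_a^\trp]\|\le O(d)$ (from $u^\trp\bE[\|x_a\|_2^2\,x_ax_a^\trp]u=\bE[\|x_a\|_2^2(u^\trp x_a)^2]\le\sqrt{\bE[\|x_a\|_2^4]\bE[(u^\trp x_a)^4]}$ with $\bE[\|x_a\|_2^4]=O(d^2)$ and $\bE[(u^\trp x_a)^4]=O(1)$), I would show: pattern (i) contributes $O(m\eta^4 d)$; pattern (ii) contributes $O(m^2\eta^4 d)$; and patterns (iii), (iv) each contribute only $O(m^2\eta^4)$ because $\bE[(x_a^\trp x_c)\,x_ax_c^\trp]=I$ for independent identity-covariance $x_a,x_c$. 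Summing gives \eqref{eq:b-variance}.

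The main obstacle is \eqref{eq:b-variance}: the naive estimate $\|\bE[(bb^\trp)^2]\|\le\bE[\|b\|_2^4]$ is off by a factor of $d$ since $\bE[\|b\|_2^4]=\Theta(\eta^4 m^2 d^2)$. The improvement hinges on recognizing that the cross patterns (iii) and (iv) involve $\bE[(x_a^\trp x_c)\,x_ax_c^\trp]=I$, whose spectral norm is $O(1)$ rather than $O(d)$; this is what shaves the extra factor of $d$ and delivers the claimed $O(\eta^4 m^2 d)$ bound.
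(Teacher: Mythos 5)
Your arguments for \eqref{eq:b-norm} and \eqref{eq:b-expectation} are correct and essentially the same as the paper's: write $b=Xz$, combine \Cref{lm:high-prob-noise-fixed} (conditionally on the $x_j$'s, with rank at most $\min\{m,d\}$) with \Cref{lm:matrix-norm} for $\|X\|$ and a union bound; and for $\bE[bb^\trp]$ keep only the diagonal terms and use $\|\bE[x_jx_j^\trp]\|=O(1)$.

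The gap is in your treatment of \eqref{eq:b-variance}, specifically patterns (iii) and (iv). The justification ``$\bE[(x_a^\trp x_c)x_ax_c^\trp]=I$, hence these patterns contribute $O(m^2\eta^4)$'' silently discards the weights $z_a^2z_c^2$ inside the expectation, i.e.\ it assumes the noise moments factor out of the average over the $x$'s. In this lemma the conditional law of $z_j$ given $x_1,\ldots,x_m$ may depend on the realized measurements --- this is exactly the relaxation the paper is designed to handle --- so $\bE\bigl[z_a^2z_c^2\,(x_a^\trp x_c)(u^\trp x_a)(x_c^\trp u)\bigr]$ need not reduce to $O(\eta^4)\,u^\trp\bE[(x_a^\trp x_c)x_ax_c^\trp]u$; choosing $\bE[z_a^2\mid x]$ to correlate with the sign of $(x_a^\trp x_c)(u^\trp x_a)(x_c^\trp u)$ makes this single term as large as $\Theta(\eta^4\sqrt d)$, so the claimed $O(\eta^4)$ per term is false in general, not merely unproved. (Patterns (i) and (ii) are fine as you argue, because after fixing $u$ the integrand is nonnegative, so bounding the conditional $z$-moments by $O(\eta^4)$ and then using independence of the $x$'s is legitimate.) The lemma itself is unharmed: a crude bound of $O(\eta^4 d)$ per paired index pattern suffices, since there are only $O(m^2)$ such patterns; you can get it by taking absolute values and Cauchy--Schwarz, $\bE\bigl[|x_a^\trp x_c|\,|u^\trp x_a|\,|x_c^\trp u|\bigr]\le O(\sqrt d)$, or, as the paper does, by expanding $x_{j_2}^\trp x_{j_3}=\sum_{\ell}(x_{j_2}^\trp e_\ell)(e_\ell^\trp x_{j_3})$ and bounding each summand by $O(\eta^4)$ via AM--GM on eighth powers. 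Relatedly, your closing diagnosis is off: no cancellation identity is needed to beat the naive $\bE\|b\|_2^4=\Theta(\eta^4m^2d^2)$; the factor-$d$ saving comes from bounding the quadratic form $u^\trp\bE[(bb^\trp)^2]u$ at a fixed unit vector $u$, so two of the four $x$-factors are contracted against $u$ and contribute $O(1)$ each, leaving only one factor of $d$ from the single inner product $x_{j_2}^\trp x_{j_3}$.
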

\begin{proof}
Define $X = [x_1\ \cdots\ x_m] \in\bR^{d\times m}$ and $z = [z_1\ \cdots\ z_m]^\trp\in\bR^m$. Now $b = Xz$. By \Cref{lm:matrix-norm}, with probability at least $1-\delta/2$,
\begin{equation}
\label{eq:b-norm-1}
\|X\| \le O\left(\sqrt{d + m + \log(1/\delta)}\right).
\end{equation}
Let $r$ denote the rank of $X$. 
By \Cref{lm:high-prob-noise-fixed},
with probability at least $1-\delta/2$,

\begin{equation}
\label{eq:b-norm-2}
\|b\|_2 = \|Xz\|_2 \le O\left(\eta\|X\|\sqrt{r + \log(1/\delta)}\right) \le O\left(\eta\|X\|\sqrt{\min\{d,m\} + \log(1/\delta)}\right).
\end{equation}
Combining \eqref{eq:b-norm-1} and \eqref{eq:b-norm-2} using the union bound, with probability at least $1-\delta$,
\[
\|b\|_2 \le O\left(\eta\sqrt{(d + m + \log(1/\delta))(\min\{d,m\} + \log(1/\delta))}\right).
\]
This proves \eqref{eq:b-norm}.

For every fixed unit vector $u\in\bR^d$,
\[
u^\trp \bE[bb^\trp]u = \sum_{j=1}^m u^\trp x_jz_j^2x_j^\trp u = \sum_{j=1}^m z_j^2(x_j^\trp u)^2 \le \sum_{j=1}^m O(\eta^2) = O(\eta^2m).
\]
This proves \eqref{eq:b-expectation}.
\begin{align*}
u^\trp \bE[(bb^\trp)^2]u = \sum_{j_1,j_2,j_3,j_4}\bE[u^\trp z_{j_1}x_{j_1}x_{j_2}^\trp z_{j_2}z_{j_3}x_{j_3}x_{j_4}^\trp z_{j_4}u]
\end{align*}
Since $z_1,\ldots,z_m$ are independent and have zero mean when conditioned on $x_1,\ldots,x_m$, the term $\bE[u^\trp z_{j_1}x_{j_1}x_{j_2}^\trp z_{j_2}z_{j_3}x_{j_3}x_{j_4}^\trp z_{j_4}u]$ is zero unless $(j_1,j_2,j_3,j_4)$ belongs to
\[
S := \{(j_1,j_2,j_3,j_4)\in \{1,\ldots,m\}^4: (j_1 = j_2 \wedge j_3 = j_4) \vee (j_1 = j_3 \wedge j_2 = j_4)  \vee (j_1 = j_4 \wedge j_2 = j_3) \}.
\]
Letting $e_1,\ldots,e_d$ be an orthonormal bases of $\bR^d$, we have
\begin{align*}
u^\trp \bE[(bb^\trp)^2]u & = \sum_{(j_1,j_2,j_3,j_4)\in S}\sum_{\ell = 1}^d\bE[z_{j_1}z_{j_2}z_{j_3}z_{j_4}(u^\trp x_{j_1})(x_{j_2}^\trp e_\ell)(e_\ell^\trp x_{j_3})(x_{j_4}^\trp u)]\\
& \le \sum_{(j_1,j_2,j_3,j_4)\in S}\sum_{\ell = 1}^d\frac {\eta^4}8\bE[(z_{j_1}/\eta)^8 + (z_{j_2}/\eta)^8 + (z_{j_3}/\eta)^8 + (z_{j_4}/\eta)^8\\
& \quad + (u^\trp x_{j_1})^8 + (x_{j_2}^\trp e_\ell)^8 + (e_\ell^\trp x_{j_3})^8 + (x_{j_4}^\trp u)^8]\\
& \le \sum_{(j_1,j_2,j_3,j_4)\in S}\sum_{\ell = 1}^d O(\eta^4)\\
& \le O(\eta^4m^2d).
\end{align*}
This proves \eqref{eq:b-variance}.
\end{proof}
\begin{lemma}
In the same setting as the lemma above, define $h = \sum_{j=1}^m x_jx_j^\trp \beta$ for a vector $\beta\in\bR^d$. For any $\delta\in (0,1/2)$, with probability at least $1-\delta$,
\begin{equation}
\label{eq:h-norm}
\|h\|_2 \le O\left(\|\beta\|_2\sqrt{(d + m + \log(1/\delta))(m + \log(1/\delta)}\right).
\end{equation}
Also, with probability at least $1-\delta$,
\begin{equation}
\label{eq:hh-norm}
\|hh^\trp - \bE[hh^\trp]\|\le O\left(\|\beta\|_2^2\left(
m\sqrt{m(d + \log(1/\delta)} + (d + \log(1/\delta))(m + \log(1/\delta))
\right)\right)
\end{equation}
Moreover, 
\begin{equation}
\label{eq:hh-variance}
\|\bE[(hh^\trp - \bE[hh^\trp])^2]\| \le O(\|\beta\|_2^4m^3d).
\end{equation}
\end{lemma}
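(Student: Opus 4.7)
All three bounds hinge on the factorization $h = XX^\trp\beta$, with $X = [x_1\ \cdots\ x_m]\in\bR^{d\times m}$, together with the mean-zero decomposition $W := XX^\trp = mI + S$, where $S = \sum_{j=1}^m (x_jx_j^\trp - I)$. For \eqref{eq:h-norm}, I would write $h = X(X^\trp\beta)$ and use $\|h\|_2 \le \|X\|\cdot\|X^\trp\beta\|_2$: \Cref{lm:matrix-norm} with $b_j=1$ gives $\|X\|\le O(\sqrt{d+m+\log(1/\delta)})$, while the entries of $X^\trp\beta$ are independent $O(\|\beta\|_2)$-sub-Gaussian scalars, so $\|X^\trp\beta\|_2^2 \le O(\|\beta\|_2^2(m+\log(1/\delta)))$ follows from \Cref{lm:sub-exponential} applied to the sub-exponential squared entries. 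A union bound then closes \eqref{eq:h-norm}.

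For \eqref{eq:hh-norm}, I would use $h = m\beta + S\beta$ to obtain
\[
hh^\trp - \bE[hh^\trp] = m\bigl(S\beta\cdot\beta^\trp + \beta\cdot(S\beta)^\trp\bigr) + \bigl((S\beta)(S\beta)^\trp - \bE[(S\beta)(S\beta)^\trp]\bigr).
\]
The linear piece has rank at most $2$, hence operator norm at most $2m\|\beta\|_2\|S\beta\|_2$; I would bound $\|S\beta\|_2$ by applying \Cref{lm:extended-matrix-bernstein} to the iid mean-zero summands $\xi_j := (x_jx_j^\trp - I)\beta$, using the directional variance $\|\bE[\xi_j\xi_j^\trp]\| = O(\|\beta\|_2^2)$, the trace bound $\bE[\|\xi_j\|_2^2] = O(d\|\beta\|_2^2)$, and the high-probability radius $\|\xi_j\|_2^2 \le O(\|\beta\|_2^2(d+\log(m/\delta))\log(m/\delta))$ obtained from sub-Gaussian concentration of $\|x_j\|_2$ and $x_j^\trp\beta$. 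For the quadratic piece, I would split into the diagonal sum $\sum_j(\xi_j\xi_j^\trp - \bE[\xi_j\xi_j^\trp])$ of iid mean-zero matrices (matrix Bernstein with the same moment and radius inputs) and the off-diagonal sum $\sum_{j_1\ne j_2}\xi_{j_1}\xi_{j_2}^\trp$, the latter handled by a standard decoupling reduction to $(S\beta)(S'\beta)^\trp$ for an independent copy $S'$ and then the same control on $\|S\beta\|_2$.

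For \eqref{eq:hh-variance}, I would use the identity $\bE[H^2] = \bE[(hh^\trp)^2] - M^2$ with $M := \bE[hh^\trp]$, which gives $u^\trp\bE[H^2]u = \bE[(u^\trp h)^2\|h\|_2^2] - \|Mu\|_2^2$ for every unit $u$. Substituting $h = m\beta + S\beta$ and expanding $(u^\trp h)^2$ and $\|h\|_2^2$ as polynomials in $u^\trp\beta$, $u^\trp S\beta$, $\beta^\trp S\beta$, and $\|S\beta\|_2^2$, the terms linear in $S$ vanish under expectation and the leading $m^4(u^\trp\beta)^2\|\beta\|_2^2$ contribution cancels exactly against the matching term of $\|Mu\|_2^2$. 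Each remaining term (quadratic, cubic, or quartic in $S$) is then controlled by moment estimates such as $\bE[(u^\trp S\beta)^2] = O(m\|\beta\|_2^2)$, $\bE[\|S\beta\|_2^2] = O(md\|\beta\|_2^2)$, $\bE[(\beta^\trp S\beta)^2] = O(m\|\beta\|_2^4)$, $\bE[(u^\trp S\beta)^4] = O(m^2\|\beta\|_2^4)$, and $\bE[\|S\beta\|_2^4] = O(m^2d^2\|\beta\|_2^4)$, each obtained by direct expansion and Cauchy--Schwarz on the fourth moments of $x_j$; the dominant surviving term is $m^2(u^\trp\beta)^2\bE[\|S\beta\|_2^2] \le O(m^3 d\|\beta\|_2^4)$.

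I expect \eqref{eq:hh-norm} to be the main obstacle: matching the tight product form $(d+\log(1/\delta))(m+\log(1/\delta))$ rules out the naive bound $\|S\beta\|_2\le\|S\|\cdot\|\beta\|_2$, which would contribute an unwanted $(d+\log(1/\delta))^2$ term, so the argument must instead combine the directional variance $\|\bE[(S\beta)(S\beta)^\trp]\| = O(m\|\beta\|_2^2)$ with a matrix-Bernstein tail scaled at $t = O(\log(d/\delta))$, ensuring that the Bernstein radius contributes $(d+\log)\log$ rather than $(d+\log)^2$.
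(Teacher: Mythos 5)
Your bounds \eqref{eq:h-norm} and \eqref{eq:hh-variance} are fine: the first is exactly the paper's argument ($h=X(X^\trp\beta)$, \Cref{lm:matrix-norm} for $\|X\|$ and for $\|X^\trp\beta\|_2$), and the third is a correct reorganization of the paper's computation (the paper expands $\bE[(hh^\trp)^2]-\bE[hh^\trp]^2$ over 4-tuples of indices, notes the distinct-index terms vanish, and bounds each of the $O(m^3)$ remaining terms by $O(\|\beta\|_2^4d)$; your centered expansion in $S=\sum_j(x_jx_j^\trp-I)$ with the cancellation of the $m^4$ term and Cauchy--Schwarz on the listed moments reaches the same $O(m^3d\|\beta\|_2^4)$).

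The gap is in \eqref{eq:hh-norm}, precisely at the step you flagged. Bounding $\|S\beta\|_2$ by \Cref{lm:extended-matrix-bernstein} applied to the vectors $\xi_j=(x_jx_j^\trp-I)\beta$ cannot give the stated bound: for a $d\times 1$ summand the lemma's variance proxy is $\max\{\|\bE[\xi_j\xi_j^\trp]\|,\bE[\xi_j^\trp\xi_j]\}=\Theta(d\|\beta\|_2^2)$, not the directional variance $O(\|\beta\|_2^2)$, and the dimensional prefactor forces $t=\Theta(\log(d/\delta))$, so you get $\|S\beta\|_2\lesssim\|\beta\|_2\sqrt{md\log(d/\delta)}+\cdots$. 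Your closing suggestion to run the Bernstein tail with only the directional variance $m\|\beta\|_2^2$ is not available (and cannot be, since $\bE\|S\beta\|_2^2=\Theta(md\|\beta\|_2^2)$). Consequently your linear piece is $m\|\beta\|_2^2\sqrt{md\log(d/\delta)}$ and your quadratic piece contributes $md\log(d/\delta)\|\beta\|_2^2$, each exceeding the allowed $m\sqrt{m(d+\log(1/\delta))}$ and $(d+\log(1/\delta))(m+\log(1/\delta))$ by logarithmic factors, so \eqref{eq:hh-norm} as stated is not recovered. The missing idea is a simple case split, which is what the paper does to prove $\|h-\bE[h]\|_2=\|S\beta\|_2\le O(\|\beta\|_2\sqrt{(d+\log(1/\delta))(m+\log(1/\delta))})$: when $d\le m$ the ``naive'' bound $\|S\beta\|_2\le\|S\|\,\|\beta\|_2$ with \Cref{lm:matrix-concentration} is already within budget (the $(d+\log(1/\delta))^2$ term you worried about is dominated by $(d+\log(1/\delta))(m+\log(1/\delta))$ in this regime), while when $d\ge m$ one instead uses $\|S\beta\|_2\le\|h\|_2+\|\bE[h]\|_2$ with \eqref{eq:h-norm} and $\|\bE[h]\|_2=m\|\beta\|_2$, since then $(d+m+\log(1/\delta))(m+\log(1/\delta))=O((d+\log(1/\delta))(m+\log(1/\delta)))$. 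With that bound in hand, no decoupling or matrix Bernstein is needed for the quadratic piece: the paper simply writes $\|hh^\trp-\bE[hh^\trp]\|\le\|\bE[h]\bE[h]^\trp-\bE[hh^\trp]\|+\|h-\bE[h]\|_2(\|h\|_2+\|\bE[h]\|_2)$ and checks deterministically that $\|\bE[h]\bE[h]^\trp-\bE[hh^\trp]\|=O(m\|\beta\|_2^2)$ (equivalently, $\|\bE[(S\beta)(S\beta)^\trp]\|=O(m\|\beta\|_2^2)$ in your notation).
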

\begin{proof}
Define $X = [x_1\ \cdots\ x_m] = \bR^{d\times m}$. We have $h = XX^\trp \beta$. 
Since each of the $m$ entries in $X^\trp \beta$ has sub-Gaussian constant $O(\|\beta\|_2)$,
by \Cref{lm:matrix-norm}, with probability at least $1-\delta/2$, $\|X^\trp \beta\|_2 \le O(\|\beta\|_2\sqrt{m + \log(1/\delta)})$. By \Cref{lm:matrix-norm}, with probability at least $1-\delta/2$, $\|X\| \le O(\sqrt{d + m + \log(1/\delta)})$. This proves \eqref{eq:h-norm} by a union bound.

We show that with probability at least $1-\delta$,
\begin{equation}
\label{eq:h-norm-2}
\|h - \bE[h]\|_2 \le O\left(\|\beta\|_2\sqrt{(d + \log(1/\delta))(m + \log(1/\delta))}\right).
\end{equation}
If $d\ge m$, this follows from \eqref{eq:h-norm} and 
\begin{equation}
\label{eq:h-expectation}
\|\bE[h]\|_2 = O(m\|\beta\|_2).
\end{equation}
If $d \le m$, with probability at least $1-\delta$,
\begin{align*}
\|h - \bE[h]\|_2 & = \|(XX^\trp - \bE[XX^\trp])\beta\|_2\\
& \le \|XX^\trp - \bE[XX^\trp]\|\cdot \|\beta\|_2\\
& \le O\left(\|\beta\|_2 \sqrt{(d + \log(1/\delta))(m + d + \log(1/\delta))}\right)\tag{by \Cref{lm:matrix-concentration}}\\
& \le O\left(\|\beta\|_2 \sqrt{(d + \log(1/\delta))(m + \log(1/\delta))}\right).
\end{align*}
Therefore, inequality \eqref{eq:h-norm-2} holds with probability at least $1-\delta$. Now
we show that $\|\bE[h]\bE[h]^\trp - \bE[hh^\trp]\| \le O(m\|\beta\|_2^2)$. Indeed,
\[
\bE[h]\bE[h]^\trp - \bE[hh^\trp] = \sum_{j_1,j_2}\left(\bE[x_{j_1}x_{j_1}^\trp \beta]\bE[x_{j_2}x_{j_2}^\trp \beta]^\trp - \bE[(x_{j_1}x_{j_1}^\trp \beta)(x_{j_2}x_{j_2}^\trp \beta)^\trp]\right).
\]
Since $x_{j_1}$ and $x_{j_2}$ are independent when $j_1\ne j_2$,
\begin{equation}
\label{eq:h-norm-3}
\bE[h]\bE[h]^\trp - \bE[hh^\trp] = \sum_{j=1}^m\left(\bE[x_{j}x_{j}^\trp \beta]\bE[x_{j}x_{j}^\trp \beta]^\trp - \bE[(x_{j}x_{j}^\trp \beta)(x_{j}x_{j}^\trp \beta)^\trp]\right).
\end{equation}
It is clear that $\bE[x_jx_j^\trp\beta] = \beta$, and for every unit vector $u\in\bR^d$,
\begin{align*}
& u^\trp \bE[(x_{j}x_{j}^\trp \beta)(x_{j}x_{j}^\trp \beta)^\trp] u\\
= {} & \bE[(u^\trp x_j)(x_j^\trp \beta)(\beta^\trp x_j)(x_j^\trp u)]\\
\le {} & \frac{\|\beta\|_2^2}{4}\bE[(u^\trp x_j)^4 + (x_j^\trp (\beta/\|\beta\|_2))^4 + ((\beta/\|\beta\|_2)^\trp x_j)^4 + (x_j^\trp u)^4]\\
\le {} & O(\|\beta\|_2^2).
\end{align*}
This implies that $\|\bE[(x_{j}x_{j}^\trp \beta)(x_{j}x_{j}^\trp \beta)^\trp]\| \le O(\|\beta\|_2^2)$.
Now it is clear from \eqref{eq:h-norm-3} that $\|\bE[h]\bE[h]^\trp - \bE[hh^\trp]\| \le O(m\|\beta\|_2^2)$, which implies
\begin{align*}
\|hh^\trp - \bE[hh^\trp]\| & \le O(m\|\beta\|_2^2) + \|hh^\trp - \bE[h]\bE[h]^\trp\|\\
& \le O(m\|\beta\|_2^2) + \|(h - \bE[h])h^\trp\| + \|\bE[h](h - \bE[h])^\trp\|\\
& \le O(m\|\beta\|_2^2) + \|h - \bE[h]\|_2 (\|h\|_2 + \|\bE[h]\|_2).
\end{align*}
Plugging \eqref{eq:h-norm-2}, \eqref{eq:h-norm}, and \eqref{eq:h-expectation} into the inequality above, we get \eqref{eq:hh-norm}.

Finally, 
\begin{equation}
\label{eq:hh-variance-1}
\bE[(hh^\trp - \bE[hh^\trp])^2] = 
\bE[(hh^\trp)^2] - \bE[hh^\trp]^2 = \sum_{j_1,j_2,j_3,j_4}H_{j_1j_2j_3j_4},
\end{equation}
where
\[
H_{j_1j_2j_3j_4}
= \bE[x_{j_1}x_{j_1}^\trp \beta\beta^\trp x_{j_2}x_{j_2}^\trp x_{j_3}x_{j_3}^\trp \beta\beta^\trp x_{j_4}x_{j_4}^\trp] -\bE[x_{j_1}x_{j_1}^\trp \beta\beta^\trp x_{j_2}x_{j_2}^\trp]\bE[x_{j_3}x_{j_3}^\trp \beta\beta^\trp x_{j_4}x_{j_4}^\trp].
\]
When $j_1,j_2,j_3,j_4$ are distinct, $H_{j_1j_2j_3j_4}$ is zero. When $j_1,j_2,j_3,j_4$ are not distinct, letting $e_1,\ldots,e_d$ be an orthonormal basis for $\bR^d$, for every fixed unit vector $u\in\bR^d$, we have 
\begin{align*}
& u^\trp \bE[x_{j_1}x_{j_1}^\trp \beta\beta^\trp x_{j_2}x_{j_2}^\trp x_{j_3}x_{j_3}^\trp \beta\beta^\trp x_{j_4}x_{j_4}^\trp]u\\
= {} & \sum_{\ell = 1}^d\bE[(u^\trp x_{j_1})(x_{j_1}^\trp\beta)(\beta^\trp x_{j_2})(x_{j_2}^\trp e_\ell)(e_\ell^\trp x_{j_3})(x_{j_3}^\trp \beta)(\beta^\trp x_{j_4})(x_{j_4}^\trp u)]\\
\le {} & \sum_{\ell = 1}^d \frac{\|\beta\|_2^4}{8}\bE[(u^\trp x_{j_1})^8 + (x_{j_1}^\trp(\beta/\|\beta\|_2))^8 + ((\beta/\|\beta\|_2)^\trp x_{j_2})^8 + (x_{j_2}^\trp e_\ell)^8 \\
& + (e_\ell^\trp x_{j_3})^8 + (x_{j_3}^\trp (\beta/\|\beta\|_2))^8 + ((\beta/\|\beta\|_2)^\trp x_{j_4})^8 + (x_{j_4}^\trp u)^8]\\
= {} & O(\|\beta\|_2^4d).
\end{align*}
This implies $\|\bE[x_{j_1}x_{j_1}^\trp \beta\beta^\trp x_{j_2}x_{j_2}^\trp x_{j_3}x_{j_3}^\trp \beta\beta^\trp x_{j_4}x_{j_4}^\trp]\| = O(\|\beta\|_2^4d)$.
Similarly,
$\|\bE[x_{j_1}x_{j_1}^\trp\beta\beta^\trp x_{j_2}x_{j_2}^\trp]\| = O(\|\beta\|_2^2)$ because
\begin{align*}
u^\trp \bE[x_{j_1}x_{j_1}^\trp\beta\beta^\trp x_{j_2}x_{j_2}^\trp]u & = \bE[(u^\trp x_{j_1})(x_{j_1}^\trp \beta)(\beta^\trp x_{j_2})(x_{j_2}^\trp u)]\\
& \le \frac{\|\beta\|_2^2}{4}\bE[(u^\trp x_{j_1})^4 + (x_{j_1}^\trp (\beta/\|\beta\|_2))^4 + ((\beta/\|\beta\|_2)^\trp x_{j_2})^4 + (x_{j_2}^\trp u)^4]\\
& \le O(\|\beta\|_2^2).
\end{align*}
Therefore, $\|H_{j_1j_2j_3j_4}\|\le O(\|\beta\|_2^4 d)$ when $j_1,j_2,j_3,j_4$ are not distinct. Now \eqref{eq:hh-variance} follows from \eqref{eq:hh-variance-1}.
\end{proof}
\begin{lemma}
In the lemma above, 
\begin{equation}
\label{eq:expectation-cross}
\|\bE[hb^\trp b h^\trp]\|\le O(\eta^2\|\beta\|_2^2m^3d), \textnormal{ and }\|\bE[bh^\trp hb^\trp]\| \le O(\eta^2\|\beta\|_2^2m^3d).
\end{equation}
\end{lemma}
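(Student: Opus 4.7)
My plan is to first observe the scalar-matrix identities $hb^\trp b h^\trp = \|b\|_2^2 \, hh^\trp$ and $bh^\trp h b^\trp = \|h\|_2^2 \, bb^\trp$, which hold because $b^\trp b$ and $h^\trp h$ are scalars. This reduces the task to proving that the spectral norms of $\bE[\|b\|_2^2 hh^\trp]$ and $\bE[\|h\|_2^2 bb^\trp]$ are both $O(\eta^2\|\beta\|_2^2 m^3 d)$.

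For the first quantity, I would fix a unit vector $u \in \bR^d$ and study $u^\trp \bE[\|b\|_2^2 hh^\trp] u = \bE[\|b\|_2^2 (u^\trp h)^2]$ by conditioning on $x_1,\ldots,x_m$. Since the $z_j$'s are conditionally independent, mean-zero and $\eta$-sub-Gaussian, $\bE[\|b\|_2^2 \mid x]=\sum_j \bE[z_j^2\mid x]\|x_j\|_2^2 \le O(\eta^2)\sum_j\|x_j\|_2^2$. Cauchy--Schwarz then gives
\[
\bE[\|b\|_2^2 (u^\trp h)^2]\le O(\eta^2)\sqrt{\bE\bigl[\bigl(\textstyle\sum_j\|x_j\|_2^2\bigr)^2\bigr]\cdot \bE[(u^\trp h)^4]}.
\]
The first factor is $O(m^2d^2)$ from $\bE[\|x_j\|_2^4]=O(d^2)$. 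For the second, I would write $u^\trp h=\sum_j Y_j$ with $Y_j=(u^\trp x_j)(x_j^\trp \beta)$ independent, $\bE[Y_j]=u^\trp \beta$, and $\bE[(Y_j-\bE[Y_j])^k]=O(\|\beta\|_2^k)$ for $k\in\{2,4\}$ (by Cauchy--Schwarz on sub-Gaussian moments). Expanding $\bE[(\sum_j Y_j)^4]$ and using independence, only the $O(m)$ all-equal and $O(m^2)$ two-pair index tuples survive, giving $\bE[(u^\trp h)^4]=O(m^4\|\beta\|_2^4)$. Multiplying the two factors yields the required $O(\eta^2\|\beta\|_2^2 m^3 d)$.

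For the second quantity, I would pick a unit $u$ and study $\bE[\|h\|_2^2 (u^\trp b)^2]$. Conditioning on $x$, $\bE[(u^\trp b)^2\mid x]\le O(\eta^2)\sum_j(u^\trp x_j)^2$, so Cauchy--Schwarz gives
\[
\bE[\|h\|_2^2 (u^\trp b)^2] \le O(\eta^2)\sqrt{\bE[\|h\|_2^4]\cdot \bE\bigl[\bigl(\textstyle\sum_j(u^\trp x_j)^2\bigr)^2\bigr]},
\]
with the second factor $O(m^2)$. For $\bE[\|h\|_2^4]$, I would decompose $h-\bE[h]=\sum_j v_j$ with $v_j=x_j x_j^\trp\beta-\beta$ independent mean-zero and expand $\bE[\|\sum_j v_j\|_2^4]$ over four-index tuples; independence rules out any tuple with a singleton index, leaving the $O(m)$ all-equal and $O(m^2)$ two-pair cases, each contributing at most $O(d^2\|\beta\|_2^4)$ after moment bounds analogous to those in the proof of \eqref{eq:hh-variance}. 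This yields $\bE[\|h\|_2^4]\le O((m^4+m^2 d^2)\|\beta\|_2^4)$, and the overall bound becomes $O(\eta^2\|\beta\|_2^2(m^3+m^2d))\le O(\eta^2\|\beta\|_2^2 m^3 d)$.

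The main obstacle will be the tight fourth-moment bound on $\|h-\bE[h]\|_2^2$: the surviving two-pair cases require a careful moment computation because products like $\bE[(v_{j_1}^\trp v_{j_2})^2]$ mix spectral- and Frobenius-type norms of $\bE[v_jv_j^\trp]$. The bookkeeping is the same template used earlier in proving \eqref{eq:hh-variance}, so I expect it to go through cleanly if tediously.
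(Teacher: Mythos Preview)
Your approach is correct and yields the stated bound, but it differs from the paper's argument. The paper expands $u^\trp\bE[hb^\trp bh^\trp]u$ directly as a four-index sum $\sum_{j_1,j_2,j_3,j_4}\bE[\cdots]$ and observes that the conditional independence and mean-zero property of the $z_j$'s kill every term with $j_2\ne j_3$ (respectively $j_1\ne j_4$ for the second inequality). This leaves $O(m^3)$ terms, each of which is bounded by $O(\eta^2\|\beta\|_2^2 d)$ via the AM--GM trick $a_1\cdots a_6\le\frac16(a_1^6+\cdots+a_6^6)$ applied to the factors appearing after inserting an orthonormal basis. The $m^3d$ factor thus emerges in one clean step.

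Your route---rewriting $hb^\trp bh^\trp=\|b\|_2^2hh^\trp$ and $bh^\trp hb^\trp=\|h\|_2^2bb^\trp$, conditioning out the $z$'s, and then decoupling via Cauchy--Schwarz---is more modular but longer: it requires separate fourth-moment control on $\sum_j\|x_j\|_2^2$, $u^\trp h$, $\|h\|_2^2$, and $\sum_j(u^\trp x_j)^2$. The Cauchy--Schwarz step is lossy in principle, but here the pieces combine to recover the same $O(\eta^2\|\beta\|_2^2 m^3 d)$ bound. One small exposition slip: for $\bE[(\sum_j Y_j)^4]$ you say ``only the all-equal and two-pair tuples survive,'' which is only true after centering the $Y_j$'s; the dominant $O(m^4\|\beta\|_2^4)$ actually comes from the mean part $(m\,u^\trp\beta)^4$. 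Since you do list $\bE[(Y_j-\bE[Y_j])^k]$ and state the correct final order $m^4$, this is a presentation issue rather than a gap.
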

\begin{proof}
For every unit vector $u$,
\begin{equation}
\label{eq:proof-expectation-cross-1}
u^\trp \bE[hb^\trp bh^\trp]u = \sum_{j_1,j_2,j_3,j_4}\bE[u^\trp x_{j_1}x_{j_1}^\trp\beta z_{j_2}x_{j_2}^\trp x_{j_3}z_{j_3}\beta x_{j_4}x_{j_4}^\trp u]
\end{equation}
If $j_2\ne j_3$, we have $\bE[z_{j_2}z_{j_3}|x_{j_1},x_{j_2},x_{j_3},x_{j_4}] = 0$ and thus
$\bE[u^\trp x_{j_1}x_{j_1}^\trp\beta z_{j_2}x_{j_2}^\trp x_{j_3}z_{j_3}\beta x_{j_4}x_{j_4}^\trp u] = 0$.
If $j_2 = j_3$, letting $e_1,\ldots,e_d$ be an orthonormal basis for $\bR^d$,
\begin{align*}
& \bE[u^\trp x_{j_1}x_{j_1}^\trp\beta z_{j_2}x_{j_2}^\trp x_{j_3}z_{j_3}\beta x_{j_4}x_{j_4}^\trp u]\\
= {} & \sum_{\ell = 1}^d \bE[z_{j_2}^2 (u^\trp x_{j_1})(x_{j_1}^\trp\beta) (x_{j_2}^\trp e_{\ell})(e_{\ell}^\trp x_{j_3})(\beta^\trp x_{j_4})(x_{j_4}^\trp u)]\\
\le {} & \frac{\|\beta\|_2^2}{6}\sum_{\ell = 1}^d\bE[z_{j_2}^2((u^\trp x_{j_1})^6 + (x_{j_1}^\trp(\beta/\|\beta\|_2))^6 + (x_{j_2}^\trp e_{\ell})^6 \\
& + (e_{\ell}^\trp x_{j_3})^6 + ((\beta/\|\beta\|_2)^\trp x_{j_4})^6 + (x_{j_4}^\trp u)^6)]\\
\le {} & O(d\eta^2\|\beta\|_2^2).
\end{align*}
Plugging this into \eqref{eq:proof-expectation-cross-1} and noting that the size of $\{(j_1,j_2,j_3,j_4)\in \{1,\ldots,m\}^4: j_2 = j_3\}$ is $O(m^3)$, we get $\|\bE[hb^\trp b h^\trp]\|\le O(\eta^2\|\beta\|_2^2m^3d)$.

Similarly,
\begin{equation}
\label{eq:proof-expectation-cross-2}
u^\trp \bE[bh^\trp hb^\trp]u = \sum_{j_1,j_2,j_3,j_4}\bE[u^\trp z_{j_1}x_{j_1}\beta^\trp x_{j_2}x_{j_2}^\trp x_{j_3}x_{j_3}^\trp \beta z_{j_4}x_{j_4}^\trp u].
\end{equation}
If $j_1 \ne j_4$, then $\bE[u^\trp z_{j_1}x_{j_1}\beta^\trp x_{j_2}x_{j_2}^\trp x_{j_3}x_{j_3}^\trp \beta z_{j_4}x_{j_4}^\trp u] = 0$. When $j_1 = j_4$,
\begin{align*}
& \bE[u^\trp z_{j_1}x_{j_1}\beta^\trp x_{j_2}x_{j_2}^\trp x_{j_3}x_{j_3}^\trp \beta z_{j_4}x_{j_4}^\trp u]\\
= {} &  \sum_{\ell = 1}^d\bE[z_{j_1}^2(u^\trp x_{j_1})(\beta^\trp x_{j_2})(x_{j_2}^\trp e_{\ell})(e_{\ell}^\trp x_{j_3})(x_{j_3}^\trp \beta) (x_{j_4}^\trp u)]\\
\le {} & \frac{\|\beta\|_2^2}6 \sum_{\ell = 1}^d\bE[z_{j_1}^2((u^\trp x_{j_1})^6 + ((\beta/\|\beta\|_2)^\trp x_{j_2})^6 + (x_{j_2}^\trp e_{\ell})^6\\
& + (e_{\ell}^\trp x_{j_3})^6 + (x_{j_3}^\trp (\beta/\|\beta\|_2))^6 + (x_{j_4}^\trp u)^6)]\\
= {} & O(d\eta^2\|\beta\|_2^2).
\end{align*}
Plugging this into \eqref{eq:proof-expectation-cross-2} proves $\|\bE[bh^\trp hb^\trp]\| \le O(\eta^2\|\beta\|_2^2m^3d)$.
\end{proof}
We can now finish the proof of \Cref{lm:linear-model-spectral-terms}.
\begin{proof}[Proof of \Cref{lm:linear-model-spectral-terms}]
To bound $\|E_1 - \bar E_1\|$, we set $Z_i' = \frac{w_i}{m_i(m_i-1)}\left(\sum_{j=1}^{m_i}h_{ij}\right)\left(\sum_{j=1}^{m_i}h_{ij}\right)^\trp$ and $Z_i = Z_i' - \bE[Z_i']$. By \eqref{eq:hh-norm}, with probability at least $1 - \delta/(2n)$,
\[
\|Z_i\|\le O\left(w_i\|\beta_i\|_2^2q_i\right).
\]
Also, by \eqref{eq:hh-variance},
\[
\|\bE[Z_iZ_i^\trp]\| = \|\bE[Z_i^\trp Z_i]\| = O\left(\frac{w_i^2\|\beta_i\|_2^4d}{m_i}\right).
\]
\Cref{lm:extended-matrix-bernstein} proves that \eqref{eq:E1} holds with probability at least $1-\delta$.

To bound $\|E_2 - \bar E_2\|$, we set $Z_{ij}' = \frac{w_i}{m_i(m_i - 1)}h_{ij}h_{ij}^\trp$ and $Z_{ij} = Z_{ij}' - \bE[Z_{ij}']$. By \eqref{eq:hh-norm}, with probability at least $1 - \delta/(2nm_i)$,
\[
\|Z_{ij}\| \le O(w_i\|\beta_i\|_2^2 s_i).
\]
Also, by \eqref{eq:hh-variance},
\[
\|\bE[Z_{ij}Z_{ij}^\trp]\| = \|\bE[Z_{ij}^\trp Z_{ij}]\| = O\left(\frac{w_i^2\|\beta_i\|_2^4 d}{m_i^4}\right).
\]
\Cref{lm:extended-matrix-bernstein} proves that \eqref{eq:E2} holds with probability at least $1-\delta$.

To bound $\|F_1 - \bar F_1\|$, we set $Z_i = \frac{w_i}{m_i(m_i-1)}\left(\sum_{j=1}^{m_i}h_{ij}\right)\left(\sum_{j=1}^{m_i}b_{ij}\right)^\trp$. Note that $\bE[Z_i] = 0$. By \eqref{eq:b-norm} and \eqref{eq:h-norm}, with probability at least $1 - \delta/(2n)$,
\[
\|Z_i\|\le O\left(w_i\eta_i\|\beta_i\|_2q_i\right).
\]
Also, by \eqref{eq:expectation-cross},
\[
\max\left\{\|\bE[Z_iZ_i^\trp]\|, \|\bE[Z_i^\trp Z_i]\|\right\} = O\left(\frac{w_i^2\eta_i^2\|\beta_i\|_2^2d}{m_i}\right).
\]
\Cref{lm:extended-matrix-bernstein} proves that \eqref{eq:F1} holds with probability at least $1-\delta$.

To bound $\|F_2 - \bar F_2\|$, we set $Z_{ij} = \frac{w_i}{m_i(m_i - 1)}h_{ij}b_{ij}^\trp$. Note that $\bE[Z_{ij}] = 0$. By \eqref{eq:b-norm} and \eqref{eq:h-norm}, with probability at least $1 - \delta/(2nm_i)$,
\[
\|Z_{ij}\| \le O(w_i\eta_i\|\beta_i\|_2 s_i).
\]
Also, by \eqref{eq:expectation-cross},
\[
\max\left\{\|\bE[Z_{ij}Z_{ij}^\trp]\|, \|\bE[Z_{ij}^\trp Z_{ij}]\|\right\} = O\left(\frac{w_i^2\eta_i^2\|\beta_i\|_2^2 d}{m_i^4}\right).
\]
\Cref{lm:extended-matrix-bernstein} proves that \eqref{eq:F2} holds with probability at least $1-\delta$.

To bound $\|G_1 - \bar G_1\|$, we set $Z_i' = \frac{w_i}{m_i(m_i-1)}\left(\sum_{j=1}^{m_i}b_{ij}\right)\left(\sum_{j=1}^{m_i}b_{ij}\right)^\trp$ and $Z_i = Z_i' - \bE[Z_i']$. By \eqref{eq:b-norm} and \eqref{eq:b-expectation}, with probability at least $1 - \delta/(2n)$,
\[
\|Z_i\|\le O\left(w_i\eta_i^2p_i\right).
\]
Also, by \eqref{eq:b-variance},
\[
\|\bE[Z_iZ_i^\trp]\| = \|\bE[Z_i^\trp Z_i]\| = \|\bE[Z_i^2]\| \le \|\bE[(Z_i')^2]\| = O\left(\frac{w_i^2\eta^4d}{m_i^2}\right).
\]
\Cref{lm:extended-matrix-bernstein} proves that \eqref{eq:G1} holds with probability at least $1-\delta$.

To bound $\|G_2 - \bar G_2\|$, we set $Z_{ij}' = \frac{w_i}{m_i(m_i - 1)}b_{ij}b_{ij}^\trp$ and $Z_{ij} = Z_{ij}' - \bE[Z_{ij}']$. By \eqref{eq:b-norm} and \eqref{eq:b-expectation}, with probability at least $1 - \delta/(2nm_i)$,
\[
\|Z_{ij}\| \le O(w_i\eta_i^2 r_i).
\]
Also, by \eqref{eq:b-variance},
\[
\|\bE[Z_{ij}Z_{ij}^\trp]\| = \|\bE[Z_{ij}^\trp Z_{ij}]\| = \|\bE[Z_{ij}^2]\| \le \|\bE[(Z_{ij}')^2]\| = O\left(\frac{w_i^2\eta_i^4 d}{m_i^4}\right).
\]
\Cref{lm:extended-matrix-bernstein} proves that \eqref{eq:G2} holds with probability at least $1-\delta$.
\end{proof}
\subsection{Proof of Corollary~\ref{cor:linear-model}}
\label{sec:proof-cor-linear-model}
\begin{proof}
We note that
\[
p_i \le O\left(\frac{d\log^2(n/\delta)}{m}\right), \quad q_i \le p_i + O\left(\sqrt{\frac{d\log(n/\delta)}{m}}\right).
\]
Therefore, by \Cref{thm:linear-model},
with probability at least $1-\delta$,
\begin{align*}
\sin\theta \le {} & O\left(\log(d/\delta)\sqrt{\log(n/\delta)}\sqrt{\frac{d(\frac 1n\sum_{i=1}^n \|\beta_i\|_2^4 + \frac 1n\sum_{i=1}^n\eta^2\|\beta_i\|_2^2 + \eta^4/m)}{mn\sigma_k^2}}\right)\\
& + O\left(\log(d/\delta)\log^2(n/\delta)\frac{d(\eta^2 + \max_i \|\beta_i\|_2^2)}{mn\sigma_k^2}\right).
\end{align*}
Since $\sin\theta\le 1$ always holds, the inequality above implies
\begin{align}
\sin\theta
\le {} & O\left(\log^3(nd/\delta)\left(\sqrt{\frac{d(\frac 1n\sum_{i=1}^n\|\beta_i\|_2^4 + \frac 1n\sum_{i=1}^n\eta^2\|\beta_i\|_2^2 + \eta^4/m)}{mn\sigma_k^4}} + \frac{d\max_i\|\beta_i\|_2^2}{mn\sigma_k^2}\right)\right)\notag \\
& + O\left(\min\left\{1,\frac{d\eta^2\log(d/\delta)\log^2(n/\delta)}{mn\sigma_k^2}\right\}\right).\label{eq:cor-linear-model-3}
\end{align}
Using $\frac 1n\sum_{i=1}^n \|\beta_i\|_2^2 = \tr(\frac 1n\sum_{i=1}^n \beta_i\beta_i^\trp) = \tr(\sum_{i=1}^n w_i\beta_i\beta_i^\trp) \ge \sum_{\ell = 1}^k \sigma_\ell^2 \ge k\sigma_k^2$,
we have
\begin{align*}
\min\left\{1,\frac{d\eta^2\log(d/\delta)\log^2(n/\delta)}{mn\sigma_k^2}\right\} & \le \sqrt{\frac{d\eta^2\log(d/\delta)\log^2(n/\delta)}{mn\sigma_k^2}}\\
& \le \sqrt{\frac{d(\frac 1n\sum_{i=1}^n\eta^2\|\beta_i\|_2^2)\log(d/\delta)\log^2(n/\delta)}{mn\sigma_k^4}}.
\end{align*}
Plugging this into \eqref{eq:cor-linear-model-3} proves \eqref{eq:cor-linear-model-1}. When $\|\beta_i\|_2 \le r$ for every $i = 1,\ldots,n$, \eqref{eq:cor-linear-model-1} implies
\[
\sin\theta \le O\left(\log^3(nd/\delta)\left(\sqrt{\frac{d(r^4 + r^2\eta^2 + \eta^4/m)}{mn\sigma_k^4}} + \frac{dr^2}{mn\sigma_k^2}\right)\right).
\]
Since $\sin\theta \le 1$, the inequality above implies
\begin{equation}
\label{eq:cor-linear-model-4}
\sin\theta \le O\left(\log^3(nd/\delta)\left(\sqrt{\frac{d(r^4 + r^2\eta^2 + \eta^4/m)}{mn\sigma_k^4}} + \min\left\{1,\frac{dr^2}{mn\sigma_k^2}\right\}\right)\right).
\end{equation}
Using $r^2 \ge \frac 1n \sum_{i=1}^n \|\beta_i\|_2^2 \ge k\sigma_k^2$,
\[
\min\left\{1,\frac{dr^2}{mn\sigma_k^2}\right\} \le \sqrt{\frac{dr^2}{mn\sigma_k^2}} \le \sqrt{\frac{dr^4}{mn\sigma_k^4}}.
\]
Plugging this into \eqref{eq:cor-linear-model-4} proves \eqref{eq:cor-linear-model-2}.
\end{proof}
\bibliography{ref}
\end{document}